\documentclass{article}
\usepackage{natbib}
\usepackage{arxiv,times}
\usepackage{enumitem}
\usepackage{subcaption}
\usepackage{microtype}
\usepackage{graphicx}
\usepackage{booktabs} 
\usepackage{colortbl}
\usepackage{pifont}
%

\usepackage{hyperref}
\usepackage{algorithm}
\usepackage{algorithmic}
\usepackage{setspace}

\usepackage{multirow}


\usepackage{amsmath,mathrsfs,mathtools,amsfonts,amssymb,amsthm}
\usepackage[capitalize,noabbrev]{cleveref}

\allowdisplaybreaks
\theoremstyle{plain}
\newtheorem{theorem}{Theorem}[section]
\newtheorem{proposition}[theorem]{Proposition}
\newtheorem{lemma}[theorem]{Lemma}

\theoremstyle{definition}
\newtheorem{definition}[theorem]{Definition}

\theoremstyle{remark}
\newtheorem{remark}{Remark}

\usepackage[textsize=tiny]{todonotes}
\usepackage{comment}
\mathchardef\mhyphen="2D
\DeclareMathOperator*{\argmax}{argmax}

\newcommand\given[1][]{\:#1\vert\:}






\usepackage[utf8]{inputenc} 
\usepackage[T1]{fontenc}    
\usepackage{hyperref}       
\usepackage{url}            
\usepackage{booktabs}       
\usepackage{amsfonts}       
\usepackage{nicefrac}       
\usepackage{microtype}      
\usepackage{xcolor}         
\usepackage{enumitem}
\usepackage{lscape}
\usepackage{tabularx}
\newcommand*{\email}[1]{\href{mailto:#1}{#1}}

\newtheorem*{rep@theorem}{\rep@title}
\newcommand{\newreptheorem}[2]{%
\newenvironment{rep#1}[1]{%
 \def\rep@title{\textbf{#2 \ref{##1}}}%
 \begin{rep@theorem}}%
 {\end{rep@theorem}}}
\makeatother

\newreptheorem{theorem}{Theorem}
\newreptheorem{lemma}{Lemma}

\usepackage{authblk}
\allowdisplaybreaks

\title{Optimal Multi-Objective Best Arm Identification with Fixed Confidence}

\author[1]{Zhirui Chen}
\author[2]{P. N. Karthik}
\author[1]{Yeow Meng Chee}
\author[1]{Vincent Y. F. Tan}
\affil[1]{National University of Singapore
\thanks{Zhirui Chen is with the Department of Electrical and Computer Engineering and the Department of Industrial Systems Engineering and Management, National University of Singapore. Email: \email{zhiruichen@u.nus.edu}.

Yeow Meng Chee is with the Department of Industrial Systems Engineering and Management, National University of Singapore. 
Email: \email{ymchee@nus.edu.sg}.

Vincent~Y.~F.~Tan is with the Department of Mathematics and the Department of Electrical and Computer Engineering, National University of Singapore. 
Email: \email{vtan@nus.edu.sg}. 
}}

\affil[2]{Indian Institute of Technology, Hyderabad 
\thanks{P.~N.~Karthik is with the Department of Artificial Intelligence at the Indian Institute of Technology, Hyderabad.

Email: \email{pnkarthik@ai.iith.ac.in}.
}}

\begin{document}

\maketitle

\begin{abstract}
    We consider a multi-armed bandit setting with finitely many arms, in which each arm yields an $M$-dimensional vector reward upon selection. We assume that the reward of each dimension (a.k.a. {\em objective}) is generated independently of the others. The best arm of any given objective is the arm with the largest component of mean corresponding to the objective. The end goal is to identify the best arm of {\em every} objective in the shortest (expected) time subject to an upper bound on the probability of error (i.e., fixed-confidence regime). We establish a problem-dependent lower bound on the limiting growth rate of the expected stopping time, in the limit of vanishing error probabilities. This lower bound, we show, is characterised by a max-min optimisation problem that is computationally expensive to solve at each time step. 
    We propose an algorithm that uses the novel idea of {\em surrogate proportions} to sample the arms at each time step, eliminating the need to solve the max-min optimisation problem at each step. We demonstrate theoretically that our algorithm is asymptotically optimal. In addition, we provide extensive empirical studies to substantiate the efficiency of our algorithm.  While existing works on pure exploration with multi-objective multi-armed bandits predominantly focus on {\em Pareto frontier identification}, our work fills the gap in the literature by conducting a formal investigation of the multi-objective best arm identification problem.
\end{abstract}

\section{Introduction}
\label{sec:intro}

Multi-armed bandit (MAB)~\citep{thompson1933likelihood} is a sequential decision-making paradigm where an agent sequentially pulls one out of $K$ finitely many arms and receives a corresponding reward at each time step, with widespread applications in clinical trials, internet advertising, and recommender systems \citep{lattimore_szepesvari_2020}. In the classical MAB setup, the rewards from the arms are independent and identically distributed (i.i.d.), and real-valued (one-dimensional). In contrast, the {\em multi-objective multi-armed bandit} (MO-MAB) setup proposed by~\citet{drugan2013designing} allows for i.i.d.\ multi-dimensional (vector) rewards from the arms, with the reward of any given dimension (a.k.a the {\em objective}) being independent of the others or, more generally, a function of the rewards of the others. Defining the best arm of an objective as the arm with the largest mean reward corresponding to the objective, it is evident that in the MO-MAB setup, distinct objectives may possess distinct best arms, leading to the possibility of an arm being optimal for one objective and sub-optimal for another, thereby amplifying the complexity of identifying one or more best arms. In this paper, we study the problem of recovering the best arm of {\em every} objective in the shortest (expected) time, while ensuring that the probability of error is within a prescribed threshold ({\em fixed-confidence} regime).

\textbf{Motivation \ } \ 
{\color{black} Consider the task of deploying advertisements from a candidate set of advertisements, on platforms such as YouTube and Twitch. Here, selecting an advertisement to launch on any given day is analogous to pulling an arm. The feedback obtained from deploying a specific advertisement is inherently multi-dimensional, comprising various video-specific metrics such as user engagement and view rates, as well as demography-specific metrics such as viewer age and gender. The task of identifying the optimal advertisement for different demographic segments, which is crucial for maximizing revenue, translates to finding the best arm for each objective (e.g., each age group).}

As such, the problem of best arm identification (BAI) poses a non-trivial challenge, primarily owing to the inherent uncertainty associated with the true reward distribution of each arm. This challenge is further exacerbated when rewards are multi-dimensional (as in the MO-MAB setting). As delineated in prior works, numerous practical applications exhibit rewards that are multi-dimensional in nature, as opposed to being solely scalar, such as hardware design~\citep{zuluaga2016}, drug development and dose identification~\citep{lizotte2016multi} in clinical trials, and electric battery control~\citep{busa2017multi}. However, the existing works on pure exploration in MO-MAB settings are mainly focused on Pareto frontier identification. The identification of the best arm for each objective, an inherent task in MO-MAB scenarios, has not received comprehensive scholarly attention. This study seeks to fill the research gap in this domain.

\subsection{Overview of Existing Works}

Multi-objective bandits and fixed-confidence BAI have both been extensively investigated in the literature. 
This section highlights a collection of recent studies addressing these topics. For the problem of BAI in the fixed-confidence regime, \citet{garivier2016optimal} first proposed the well-known {\sc Track-and-Stop (TaS)} algorithm with two variants (C-Tracking and D-tracking), and demonstrated the optimality of these variants in the asymptotic limit of vanishing error probabilities. The basic premise for achieving asymptotic optimality, they showed, is to pull arms according to an {\em oracle weight} that is derived from the problem instance-dependent lower bound. Later, \citet{degenne2020gamification} and \citet{jedra2020optimal} specialised the {\sc TaS} algorithm to the linear bandit setting, while still maintaining asymptotic optimality. For more general structured bandits with non-linear structural dependence between the mean rewards of the arms, \citet{wang2021fast} proposed an efficient {\sc TaS}-type algorithm and a novel lower bound for the structured MAB setting, and further established the asymptotic optimality of their algorithm. \citet{mukherjee2023best} also proposed an efficient scheme for achieving asymptotic optimality without solving for the oracle weight at each time step.
It is noteworthy that the aforementioned studies deal with a {\em single objective}, whereas our research deals more generally with {\em multiple objectives}.

\citet{degenne2019pure} explored the problem of fixed-confidence BAI with {\em multiple correct answers} (i.e., multiple best arms), with the objective of identifying any one of the correct answers. They proposed the {\sc Sticky Track-and-Stop} algorithm along the lines of C-Tracking and demonstrated its asymptotic optimality. While their setup appears to bear similarities with ours, it is worth noting that their work aims to identify {\em one} among several correct answers, whereas our study focuses on uncovering {\em all} correct answers (i.e., the best arm of {\em every} objective).

\citet{drugan2013designing} introduced the MO-MAB setting as well as two associated metrics---the {\em Pareto regret} and the {\em scalarized regret}. The authors  proposed two UCB-like algorithms to optimize these two metrics. Subsequently, several factions of researchers have predominantly concentrated on the study of Pareto regret.~\citet{turgay2018multi} tackled the problem of Pareto regret minimization by introducing a similarity assumption regarding the means of arms, and designed an algorithm that achieves a regret upper bound of the order $\tilde{O}\left(T^{\left(1+d_p\right) /\left(2+d_p\right)}\right)$, where $d_p$ is the number of dimensions that is a function of the arm vectors and the environmental context.~\citet{xu2023pareto} defined the notion of Pareto regret in the context of adversarial bandits~\citep[Chapter 11]{lattimore_szepesvari_2020}, and designed an algorithm achieving near-optimality up to a factor of $\log T$ in both adversarial and stochastic bandit environments.

On the topic of pure exploration in the MO-MAB setting, a popular line of work is {\em Pareto optimal arm identification} or {\em Pareto frontier identification}. Considering specifically the fixed-confidence regime, \citet{auer2016pareto} proposed a successive elimination (SE)-type algorithm to identify all the Pareto optimal arms. For any given confidence level, the upper bound on the sample complexity of their algorithm matches their lower bound up to a logarithmic factor that is a function of 
the sub-optimality gaps of the arms. Along similar lines, \citet{ararat2023vector} present another SE-type algorithm to identify all  Pareto $(\epsilon,\delta)$-PAC arms, a generalization of Pareto optimal arms. More recently, \citet{kim2023pareto} developed a framework for analysing the problem of Pareto frontier identification in linear bandits. Their proposed algorithm is nearly optimal up to a logarithmic factor involving the minimum of 
the arm sub-optimality gaps and the algorithm's accuracy parameter. {\color{black} While the best arm of each objective is notably also Pareto optimal, our work goes beyond merely identifying a subset of Pareto optimal arms, hence significantly advancing the state-of-the-art in multi-objective bandits. Existing works on Pareto optimal arm identification in the MO-MAB setting lack the capability to identify the best arm of each objective, a task that our work accomplishes}. We discuss in Appendix~\ref{sec:differences_with_pfi} further details of the significant differences between our setting and Pareto optimal arm identification.

In the realm of pure exploration with vector-valued payoffs, the work of \citet{prabhu2022sequential} studies the problem of multi-hypothesis testing with vector arm rewards under the fixed-confidence regime. The authors provide insightful contributions by establishing both lower and upper bounds on the expected stopping time. Notably, their generic problem formulation encapsulates both single-objective best arm identification (BAI) and its multi-objective counterpart.
However, when specialized to multi-objective BAI, a computational bottleneck arises in their proposed algorithm due to the emergence of a multi-dimensional optimization routine akin to the core optimization procedure in {\sc TaS}. This introduces computational inefficiencies, warranting computationally more efficient solutions for high-dimensional problems.
In a parallel vein, the work of \citet{shang2020stochastic} introduces the concept of the ``relative vector loss,'' tied to the sup-norm of vector rewards. Their primary objective lies in identifying the best arm (defined in terms of vector losses and hence vastly different from ours) within the fixed-confidence regime. While their focus is on identifying the overall best arm, our framework necessitates identifying the best arm of each objective.

The MO-MAB setup (with $M$ independent objectives) appears to bear connections to the federated multi-armed bandit setting with a single server and $M$ independent clients, with each client having access to all arms or a subset thereof and striving to determine its best arm, as studied for instance in the recent works of \citet{kota2023almost,shi2021federated,chen2022federated}. Despite the apparent high-level similarities between the two settings, there exists one crucial distinction. In the federated setting, each client may choose an arm of its choice independently of the other clients, thereby leading to the possibility of accruing $M$ rewards from $M$ distinct arms at any given time instant. However, in the MO-MAB setup, an $M$-dimensional reward is generated from a single arm at each time instant.  In essence, the MO-MAB setup is equivalent to a federated learning setup in which every client has access to all the arms, and all clients are compelled to pull the same arm at each time instant. 

\subsection{Our Contributions}

While existing works on pure exploration in multi-objective bandits primarily focus on Pareto frontier identification, we bridge the gap by investigating the problem of identifying the best arm of each objective under the fixed-confidence regime. 

We provide an asymptotic lower bound for the problem and complement it with an algorithm that achieves the lower bound up to a multiplicative constant of $1+\eta$, where $\eta>0$ is a tuneable parameter that can be chosen arbitrarily close to $0$. We show that the lower bound is characterised by the solution to a max-min optimisation problem that is reminiscent of fixed-confidence BAI problems. The basic premise upon which asymptotically optimal algorithms of the prior works such as D-Tracking \citep{garivier2016optimal} and Sticky {\sc TaS} \citep{degenne2019pure} operate is to compute the max-min optimisation at each time step to evaluate the oracle weight at each time instant, and to pull arms according to the (empirical) oracle weight in order to guarantee asymptotic optimality. {\color{black} However, these approaches may be inefficient, as the oracle weight, to the best of our knowledge, does not have a closed-form solution in multi-objective cases. }
Instead of following the (empirical) oracle weight, we propose a novel technique to sample the arms at each step, based on the idea of {\em surrogate proportions}. These surrogate proportions serve as proxy for the oracle weight and can be computed efficiently.
\section{Preliminaries and Problem Setup}
\label{sec:problem-setup}
\vspace{-.1in}
Let $\mathbb{N}$ denote the set of positive integers. For $n \in \mathbb{N}$, let $[n] \coloneqq \{1, \ldots, n\}$. We consider a multi-armed bandit with $K$ arms in which each arm is associated with $M$ independent {\em objectives}. Pulling arm $A_t \in [K]$ at time step $t$ yields an $M$-dimensional reward $\mathbf{r}_t=[r_{t,m}: m \in [M]]^\top \in \mathbb{R}^M$, where $r_{t,m}=\mu_{A_t,m}+ \eta_{t,m}$; here, $\mu_{A_t,m}\in \mathbb{R}$ is the unknown mean corresponding to objective $m$ of arm $A_t$, and $\eta_{t,m}$ is an independent standard normal random variable. Let $v=[\mu_{i,m}: (i, m) \in [K] \times [M]]^\top$ denote a {\em problem instance} in which $\mu_{i,m}$ is the mean corresponding to objective $m$ of arm $i$. Arm $i$ is said to be the best arm of objective $m$ if it has the highest mean in dimension $m$ across all arms, i.e., $\mu_{i,m} > \mu_{j,m}$ for all $j \neq i$.
Without loss of generality, we assume that each objective has a unique best arm, and we write $\mathcal{P}$ to denote the set of all problem instances with a unique best arm for each objective. We write $I^*(v) = (i^*_1(v), i^*_2(v), \dots, i^*_M(v))$ to denote the collection of best arms under instance $v$; here, $i^*_m(v)$ is the best arm of objective $m$.

Given an error probability threshold $\delta \in (0,1)$, the goal is to identify the set of best arms $I^*(v)$ in the shortest time, while ensuring that the error probability is within $\delta$. Formally, an {\em algorithm} (or {\em policy}) for identifying the best arms is a tuple $\pi =(A, \tau, \widehat{I})$ consisting of the following components.
\begin{itemize}
    \item An {\em arms selection rule} $A=\{A_t\}_{t=1}^{\infty}$ for pulling the arms at each time instant. Here, $A_t=A_t(A_{1:t-1}, \mathbf{r}_{1:t-1},\delta)$ is a (random) function that takes input as the history of   all the arms pulled and rewards obtained up to time step $t-1$ as well as the confidence $\delta$, and outputs the arm to be pulled at time step $t$.

    \item A {\em stopping rule} that dictates the stopping time $\tau$ at which to stop further selection of arms.

    \item A {\em recommendation} $\widehat{I} \in [K]^M$ of best arm estimates at the stopping time $\tau$.
\end{itemize}
For simplicity, let $\tau_\delta$ and $\widehat{I}_\delta$ denote the stopping time and recommendation under confidence $\delta$, respectively.
Our interest is in the class of all {\em $\delta$-PAC policies}, defined as 
\begin{align}
    &\Pi(\delta) \coloneqq \{\pi: \ \mathbb{P}_v^\pi(\tau_\delta < +\infty)=1, \nonumber\\
    &\hspace{2.5cm} \mathbb{P}_v^\pi(\widehat{I}_\delta \neq I^\star(v)) \leq \delta \quad \forall v \in \mathcal{P}\}.
    \label{eq:delta-PAC-policies}
\end{align}
Here, and throughout the paper, we write $\mathbb{P}_v^\pi$ and $\mathbb{E}_v^{\pi}$ to denote probabilities and expectations under the instance $v$ and under the policy $\pi$. Prior works on fixed-confidence BAI show that 
$
    \inf_{\pi \in \Pi(\delta) } \mathbb{E}_v^\pi[\tau_\delta] \approx \Theta\left(\log \frac{1}{\delta}\right).
$ 
We anticipate that a similar growth rate for the expected stopping time holds in the context of our work. Our interest is to precisely characterise the asymptotic rate
\begin{equation}
    \liminf_{\delta \downarrow 0} \ \inf_{\pi \in \Pi(\delta) } \ \frac{\mathbb{E}_v^\pi[\tau_\delta]}{\log(1/\delta)},
    \label{eq:objective}
\end{equation}
where the asymptotics is as the error probability $\delta \downarrow 0$. 

\section{Lower bound}
\label{sec:lower-bound}
\vspace{-.1in}
In this section, we present an lower bound on \eqref{eq:objective} for any instance $v\in \mathcal{P}$. We first introduce the notion of sub-optimality gaps under this instance.
For any arm $i \in [K]$ and objective $m \in [M]$, we define the sub-optimality gap of the tuple $(i,m)$ under the instance $v$ as
\begin{align}
    \Delta_{i,m}(v)  \coloneqq \mu_{i_m^*(v),m} (v) - \mu_{i,m}(v),
    \label{eq:suboptimality-gap}
\end{align}
where, to recall, $i_m^*(v)$ is the best arm of objective $m$ under instance $v$. 

\begin{proposition}
\label{prop:lower_bound}
 Fix $\delta \in (0,1)$. For any $\delta$-PAC policy $\pi$,
\begin{equation}
    \mathbb{E}_{v}^{\pi}[\tau_\delta] \ge c^*(v) \, \log \left(\frac{1}{4\delta}\right), \quad{\forall v\in \mathcal{P}}
    \label{eq:lower-bound}
\end{equation}
where the constant $c^*(v)$ is given by
\vspace{-.1in}
\begin{align}
    c^*(v)^{-1} &\coloneqq \sup_{\omega \in \Gamma} \ \min_{m\in [M]} \  \min_{i\in[K] \setminus i_m^*(v)} \  \frac{\omega_i \, \omega_{i_m^*(v)} \, \Delta^2_{i,m}(v)}{2(\omega_i+\omega_{i_m^*(v)})}.
    \label{eq:cstar}
\end{align}
In \eqref{eq:cstar}, $\Gamma$ denotes the set of probability distributions on $[K]$, and we use the convention $\frac{\omega_i \, \omega_{i_m^*(v)}}{\omega_i+\omega_{i_m^*(v)}} = 0$ if $\omega_i = \omega_{i_m^*(v)} = 0$. Consequently, taking limits as $\delta \downarrow 0$ in \eqref{eq:lower-bound}, we get
\begin{equation}
    \liminf_{\delta \downarrow 0} \ \inf_{\pi \in \Pi(\delta)} \ \frac{\mathbb{E}_v^\pi[\tau_\delta]}{\log(1/\delta)} \geq c^*(v).
    \label{eq:lower-bound-asymptotic}
\end{equation}
\end{proposition}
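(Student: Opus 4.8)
The plan is to establish \eqref{eq:lower-bound} through the standard change-of-measure argument for fixed-confidence pure exploration, specialised to the multi-objective structure, and then to obtain \eqref{eq:lower-bound-asymptotic} by dividing by $\log(1/\delta)$ and letting $\delta \downarrow 0$. Let $N_i(t)$ denote the number of pulls of arm $i$ up to time $t$, so that $\tau_\delta = \sum_{i \in [K]} N_i(\tau_\delta)$. The starting point is the transportation inequality of \citet{garivier2016optimal}: for any $\delta$-PAC policy $\pi$, any alternative instance $v'$, and any event $\mathcal{E}$ measurable at the stopping time,
\[
\sum_{i\in[K]} \mathbb{E}_v^\pi[N_i(\tau_\delta)]\,\mathrm{KL}\!\left(P_i^v \,\|\, P_i^{v'}\right) \;\geq\; \mathrm{kl}\!\left(\mathbb{P}_v^\pi(\mathcal{E}),\,\mathbb{P}_{v'}^\pi(\mathcal{E})\right),
\]
where $P_i^v$ is the reward law of arm $i$ under $v$ and $\mathrm{kl}$ denotes the binary relative entropy. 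I would take $\mathcal{E} = \{\widehat{I}_\delta = I^*(v)\}$; whenever $v'$ induces a \emph{different} collection of best arms, the $\delta$-PAC property yields $\mathbb{P}_v^\pi(\mathcal{E}) \geq 1-\delta$ and $\mathbb{P}_{v'}^\pi(\mathcal{E}) \leq \delta$, so that the right-hand side is at least $\mathrm{kl}(1-\delta,\delta) \geq \log\frac{1}{4\delta}$ by a standard bound on the binary relative entropy.

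The next step is to pick the right family of confusing instances. Because the objectives are independent unit-variance Gaussians, the per-arm divergence factorises as $\mathrm{KL}(P_i^v\|P_i^{v'}) = \tfrac12\sum_{m\in[M]}(\mu_{i,m} - \mu_{i,m}^{v'})^2$. Since an error occurs as soon as the best arm of \emph{any single} objective is misidentified, for each objective $m$ and each suboptimal arm $i \neq i_m^*(v)$ it suffices to consider the cheap alternative $v'$ that agrees with $v$ everywhere except that the objective-$m$ means of arm $i$ and of $a \coloneqq i_m^*(v)$ are both moved to a common value $x$, so that $i$ becomes (weakly) the best arm of objective $m$. Only two divergence terms survive, and the transportation inequality becomes
\[
\mathbb{E}_v^\pi[N_a(\tau_\delta)]\,\frac{(\mu_{a,m}-x)^2}{2} + \mathbb{E}_v^\pi[N_i(\tau_\delta)]\,\frac{(\mu_{i,m}-x)^2}{2} \;\geq\; \log\frac{1}{4\delta}.
\]
This holds for every admissible $x$, hence also for the minimiser over $x \in \mathbb{R}$; the minimiser is the pull-count-weighted average of $\mu_{a,m}$ and $\mu_{i,m}$, and substituting it back gives exactly the harmonic-mean-type quantity
\[
\frac{\mathbb{E}_v^\pi[N_a(\tau_\delta)]\,\mathbb{E}_v^\pi[N_i(\tau_\delta)]}{\mathbb{E}_v^\pi[N_a(\tau_\delta)] + \mathbb{E}_v^\pi[N_i(\tau_\delta)]}\cdot\frac{\Delta_{i,m}^2(v)}{2} \;\geq\; \log\frac{1}{4\delta}.
\]
The tie case $\mu_{i,m}^{v'} = \mu_{a,m}^{v'} = x$ lies on the boundary of the confusing region and is handled by a limiting argument (perturb $i$ to be strictly best and let the perturbation vanish, using continuity of the cost in $v'$), while the degenerate case $\mathbb{E}_v^\pi[N_a(\tau_\delta)] = \mathbb{E}_v^\pi[N_i(\tau_\delta)] = 0$ is covered by the convention stated after \eqref{eq:cstar}.

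To conclude, set $\omega_i \coloneqq \mathbb{E}_v^\pi[N_i(\tau_\delta)]/\mathbb{E}_v^\pi[\tau_\delta]$, so that $\omega \in \Gamma$ and $\frac{\mathbb{E}_v^\pi[N_a]\,\mathbb{E}_v^\pi[N_i]}{\mathbb{E}_v^\pi[N_a] + \mathbb{E}_v^\pi[N_i]} = \mathbb{E}_v^\pi[\tau_\delta]\cdot\frac{\omega_a\omega_i}{\omega_a+\omega_i}$. Since the last displayed inequality holds for every pair $(m,i)$ with $i \neq i_m^*(v)$, taking the minimum over all such pairs and then bounding this particular $\omega$ below by the supremum over $\Gamma$ yields $\mathbb{E}_v^\pi[\tau_\delta]\,c^*(v)^{-1} \geq \log\frac{1}{4\delta}$, which is precisely \eqref{eq:lower-bound}; \eqref{eq:lower-bound-asymptotic} follows immediately. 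I expect the second step to be the main obstacle: one must verify that restricting to single-objective, two-arm perturbations genuinely captures the cheapest confusing instance. This is exactly where the independence of the objectives (additivity of the KL across objectives) and the \emph{every-objective} nature of the task are essential, since they let the global minimum-cost confusion decouple into the per-$(m,i)$ subproblems that produce the nested minima in \eqref{eq:cstar}; the transportation lemma and the scalar minimisation over $x$ are routine by comparison.
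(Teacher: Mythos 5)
Your proof is correct and follows essentially the same route as the paper's: the transportation inequality (the paper's Lemma~\ref{lemma:useful-lemma}) applied to the event $\{\widehat{I}_\delta = I^*(v)\}$ with the bound $\mathrm{kl}(1-\delta,\delta)\ge\log\frac{1}{4\delta}$, confusing instances that perturb only the objective-$m$ means of arm $i$ and arm $i_m^*(v)$ to a common value, the identical weighted-average optimiser (the paper's Lemma~\ref{lemma:c_star_simplicatoin} computes it via Lagrange multipliers, and your scalar minimisation over $x$ is the same calculation), and normalisation of expected pull counts followed by comparison with $\sup_{\omega\in\Gamma}g_v(\omega)$. One correction to your closing worry: since $c^*(v)$ is \emph{defined} by \eqref{eq:cstar}, you never need to verify that the two-arm, single-objective perturbations are the cheapest elements of ${\rm Alt}(v)$ --- exhibiting this subfamily already yields $\mathbb{E}_v^\pi[\tau_\delta]\, g_v(\bar{\omega}) \ge \log\frac{1}{4\delta}$ for the normalised pull-count vector $\bar{\omega}$, and $g_v(\bar{\omega}) \le c^*(v)^{-1}$ finishes the proof; the exact identification of the cheapest alternative (the content of Lemma~\ref{lemma:c_star_simplicatoin}) is only needed to show that \eqref{eq:cstar} coincides with the generic $\sup_{\omega}\inf_{v'\in{\rm Alt}(v)}$ bound of Lemma~\ref{lemma:lower_bound_0}, not for the inequality \eqref{eq:lower-bound} itself.
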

\vspace{-.1in}
Proposition~\ref{prop:lower_bound} shows that the expected stopping time of any $\delta$-PAC policy $\pi$ grows as $\Omega(\log(1/\delta))$, as is the case in the prior works, and that the smallest (best) constant multiplying $\log(1/\delta)$ is $c^*(v)$ under the instance $v$. The constant $c^*(v)$ quantifies the complexity of identifying the best arms; notice that the smaller the sub-optimality gaps of the arms, the larger the value of $c^*(v)$, and therefore the larger the time required to find the best arms under any $\delta$-PAC policy.  When $M=1$, the expression in \eqref{eq:cstar} specialises to that of the problem complexity of BAI for Gaussian arms with unit variance; see, for instance, \citet{garivier2016optimal}. The proof of Proposition~\ref{prop:lower_bound} uses change-of-measure techniques introduced by \citet{garivier2016optimal} and is presented in Appendix~\ref{appndx:proof-of-lower-bound}.

Notice that $c^*(v)^{-1}$ is the value of a sup-min optimisation problem that is typically reminiscent of fixed-confidence BAI problems, as evident from the lower bounds in the prior works. 
Defining
\begin{equation}
    g_v(\omega) \coloneqq \min_{m\in [M]} \ \min_{i\in[K] \setminus i_m^*(v) } \ \frac{\Delta_{i,m}^2(v)}{2} \cdot \frac{\omega_i \, \omega_{i_m^*(v)}}{\omega_i+\omega_{i_m^*(v)}},
    \label{eq:g-v-omega}
\end{equation}
let $\omega^*(v) \in \arg\max_{\omega \in \Gamma} g_v(\omega) $ denote the optimal solution to \eqref{eq:cstar}. For the case $M=1$, the optimal solution $\omega^*(v)$ is referred to as the {\em oracle weight} of arm pulls (cf.~\citet{garivier2016optimal}), and represents the optimal proportion of times each arm must be pulled in the long run to achieve the lower bound in \eqref{eq:lower-bound}. Building on this insight, the well-known {\sc Track-and-Stop} ({\sc TaS}) algorithm of \citet{garivier2016optimal} and its variants such as Lazy {\sc TaS} \citep{jedra2020optimal} and Sticky {\sc TaS} \citep{degenne2019pure}, attempt to solve the sup-min optimisation of the lower bound therein to obtain the oracle weight for the estimated problem instance (in place of the unknown instance $v$) at each time step, a task that is computationally demanding. The computational burden of solving the sup-min optimisation at each time step is further escalated in the multi-objective setting of our work (i.e., when $M>1$). 

In this paper, we refrain from explicitly solving the sup-min optimisation for estimating the oracle weight at each time step. Instead, we construct a {\em surrogate proportion} as a proxy for the oracle weight, and sample arms according to the surrogate proportion at each time step. We show that the surrogate proportion may be computed easily by solving a linear program (using a standard technique such as the simplex method).

\begin{remark}
    In a recent publication, \citet{mukherjee2023best} introduced the concept of a {\em look-ahead distribution} as an approximation for the oracle weight. They devised an arm sampling strategy named Transport Cost Based Arms Elimination (or TCB in short), which samples arms at each time step according to the look-ahead distribution. While their algorithm demonstrates promising computational efficacy in the context of one-dimensional arm rewards ($M=1$), the applicability of their approach to scenarios with $M>1$ remains uncertain. In hindsight, we note that our scheme of sampling from surrogate proportions, although designed primarily for settings where $M>1$, may be easily specialised to settings where $M=1$. 
\end{remark}

{\color{black}
\begin{remark}
    Consider a simple MO-MAB setup with $M=3$, $K=2$, $\mu_{1,\cdot}=[100\ 0\ 50]^\top$, and $\mu_{2,\cdot}=[0\ 100\ 50+\varepsilon]^\top$, where $\varepsilon>0$ is small. For the above instance $v$, we have $I^*(v)=(1, 2, 2)$. Furthermore, the set of Pareto optimal arms is $\{1,2\}$; see Appendix~\ref{sec:differences_with_pfi} for a formula to compute the Pareto optimal arms. While the best arm of each objective is also Pareto optimal for the preceding instance, existing algorithms for Pareto optimal arm identification fail to assert that arm $2$ is the best arm for objective $3$, a task that is significantly complex courtesy of the small gap $\Delta_{2,3}=\varepsilon$. In contrast, Proposition~\ref{prop:lower_bound} demonstrates that the complexity of identifying arm $2$ as the best arm for objective $3$ scales as $\Omega(1/\varepsilon^2)$.
\end{remark}
}
\section{Achievability: Proposed Method}
\label{sec:achievability}
\vspace{-.1in}
In this section, we describe our computationally efficient algorithm named {\sc MO-BAI} based on the idea of {\em surrogate proportion} for pulling arms at each time step, and we also provide the pseudocode in Algorithm~\ref{alg:mo-bai} in the Appendix~\ref{appndx:Baseline}. Before we present the algorithm formally, we introduce some notations. Let $\widehat{\mu}_{i,m}(t)$ denote the empirical mean of rewards obtained from objective $m$ of arm $i$ up to time $t$, i.e., $
    \widehat{\mu}_{i,m}(t) \coloneqq \frac{1}{N_{i,t}} \sum_{s=1}^{t} \mathbf{1}_{\{A_s=i\}} \, r_{s,m}, $
and $N_{i,t}\coloneqq \sum_{s=1}^{t} \mathbf{1}_{\{A_s=i\}}$ denotes the number of times arm $i$ is pulled until time $t$. Let $\widehat{\Delta}_{i,m}(t) \coloneqq \widehat{\mu}_{\widehat{i}_m(t),m}(t)-\widehat{\mu}_{i,m}(t)$ denote the empirical gap of the tuple $(i,m) \in [K] \times [M]$, where $\widehat{i}_m(t) \in \arg\max_{\iota \in[K]} \widehat{\mu}_{\iota,m}(t) $ denotes the empirical best arm of objective $m$ at time $t$. Let $g_v(\cdot)$ be as defined in \eqref{eq:g-v-omega}, and for all $m \in [M]$ and $i\in [K]$, let
\begin{equation}
    g_v^{(i,m)}(\omega) \coloneqq  \frac{\Delta_{i,m}^2(v)}{2} \cdot  \frac{\omega_i \, \omega_{i_m^*(v)}}{\omega_i+\omega_{i_m^*(v)}}, \quad \omega \in \Gamma.
    \label{eq:def_g_i_m}
\end{equation}
Given any $\eta>0$, let 
$
    \Gamma^{(\eta)} \coloneqq \big\{\mathbf{\omega}\in \Gamma: ~ \forall i\in[K], ~ \mathbf{\omega}_i \ge \frac{\eta}{K  (1+\eta)} \big\},
$
and for all $\omega, \mathbf{z} \in \Gamma$, let
\begin{align}
    & h_v(\omega,\mathbf{z}) \coloneqq  \min_{m \in [M]} \ \min_{i \in [K] \setminus i_m^*(v)} \bigg\{ g^{(i,m)}_{v}(\mathbf{\omega}) + \langle  \nabla_{\omega} g^{(i,m)}_{v}(\mathbf{\omega}), \, \mathbf{z}- \mathbf{\omega}\rangle\bigg\}.
    \label{def:h}
\end{align}
With the above notations in place, we now describe the surrogate proportion and the associated arm selection rule of our algorithm.

\subsection{Surrogate Proportion and Arm Selection Rule}
\label{sec:surrogate_prop}
Let $l_t \coloneqq \max_{k \in \mathbb{N} : 2^k \le t} 2^k$, and let $\widehat{v}_t$ denote the empirical instance with means $ [\widehat{\mu}_{i,m}(t'): (i,m) \in [K] \times [M]]^\top$ for $t\in \mathbb{N}$ and $t'=\max\{t-1,K\}$.
The {\em surrogate proportion} at time step $t$, denoted $\mathbf{s}_t$, is defined as
\begin{equation}
    \mathbf{s}_t \coloneqq \arg\max_{\mathbf{s}\in \Gamma^{(\eta)}}  h_{\widehat{v}_{l_t}}( \widehat{\omega}_{\cdot,t-1},\mathbf{s}),
    \label{eq:def_s_t}
\end{equation}
where $\widehat{\omega}_{\cdot,t-1}$ is an empirical proportion that will be defined shortly. Notice that $\mathbf{s}_t$ is a probability distribution on the arms, with each component strictly positive ($\geq \frac{\eta}{K \, (1+\eta)} > 0$), thereby placing a strictly positive mass on each arm. 

To describe the empirical proportion $\widehat{\omega}_{\cdot, t}$, we introduce additional notations. Given $a \in [K]$, let $\mathbf{D}(a) \in \mathbb{R}^K$ denote the {\em one-hot} vector of length $K$ with a `$1$' in the $a$th component and `$0$' in the other components. Let $\mathbf{B}_{\cdot, t} = [\mathbf{B}_{i,t}: i \in [K]]^\top \in \mathbb{R}^K$ denote a generic vector of length $K$; in the sequel, we refer to $\mathbf{B}_{\cdot, t}$ as the {\em buffer} at time $t$. With the above notations in place, the quantity $\widehat{\omega}_{\cdot, t} = [\widehat{\omega}_{i, t}: i \in [K]]^\top \in \Gamma$ is defined via
\begin{equation}
    \widehat{\omega}_{i, t} \coloneqq \frac{N_{i,t} + \mathbf{B}_{i,t}}{t}.
    \label{eq:hat-omega-t-1-definition}
\end{equation}

{\bf Arms selection rule \ } We now describe the arms selection rule of our algorithm. To begin with, each arm is pulled once in the first $K$ time steps $t=1,\ldots,K$. Initialising $B_{i,t}=0$  for all $i \in [K]$, for all $t \le K$, the algorithm computes $\widehat{\omega}_{\cdot, t}$ according to \eqref{eq:hat-omega-t-1-definition}. Subsequently, the algorithm computes the surrogate proportion $\mathbf{s}_t$ using \eqref{eq:def_s_t}, and pulls arm $A_t$ according to the rule
\begin{equation}
    A_t \in \argmax_{i \in [K]} \  [\mathbf{B}_{\cdot,t-1} + \mathbf{s}_t]_i,
    \label{eq:arms-selection-rule}
\end{equation}
where $[\mathbf{x}]_i$ represents the $i$-th component of $\mathbf{x}$.
Following this, the algorithm iteratively updates the buffer according to the rule
\begin{equation}
    \mathbf{B}_{\cdot, t} = \mathbf{B}_{\cdot, t-1} - \mathbf{D}(A_t) + \mathbf{s}_t.
    \label{eq:buffer-update-rule}
\end{equation}
 The above process repeats until the stoppage. Some remarks are in order.
\begin{remark}
\label{rem:exact-computatation-of-convex-program-vs-easy-computation-of-LP}
    When $M=1$, the well-known {\sc Track-and-Stop} algorithm of \citet{garivier2016optimal} traditionally computes the oracle weight $\widetilde{\omega}_{\cdot, t} = \argmax_{\omega \in \Gamma} g_{\widehat{v}_{t}}(\omega)$ and samples arm $A_t$ guided by $\widehat{\omega}_{\cdot, t}$ at each time step $t$. Solving the preceding maximisation problem at each time step becomes computationally demanding, as exemplified by the authors therein, which is further exacerbated in scenarios where $M>1$. {\color{black} In fact, it is easy to show that for any instance $v \in \mathcal{P}$, the maximisation problem $\sup_{\omega \in \Gamma} g_v(\omega)$ is a convex program (noting that the function $g_v$ defined in \eqref{eq:g-v-omega} is concave; see Appendix~\ref{lemma:g_v_concave} for the details) with $O(K)$ variables and $O(MK)$ constraints. However, solving this convex program {\em exactly} through an iterative algorithm (such as the ellipsoid method) would necessitate running infinitely many iterations, thus rendering this approach practically infeasible. Of course, in practical implementations, one may only need to solve the convex program {\em approximately}; however, in this case, providing theoretical guarantees would also be commensurately more challenging. } 
    
     {\color{black} In contrast, our proposed strategy for computing surrogate proportions in \eqref{eq:def_s_t} offers computational efficiency by maximising $h_v$ defined in \eqref{def:h} which serves as a surrogate version of $g_v$ (hence the name surrogate proportion for $\mathbf{s}_t$). Notice that $h_v$ is a linear function of its second argument. Thus, the optimisation problem in \eqref{eq:def_s_t} is a simple {\em linear program} that may be solved {\em efficiently} and {\em exactly} using standard techniques such as the simplex method that typically takes polynomial time~\citep{spielman2004smoothed}. In this manner, we sidestep the intricacies of solving for the oracle weight at each time step, hence making our approach well-suited for scenarios with $M>1$.}
\end{remark}

\begin{remark}
The astute reader will observe a striking similarity between the expression for $h_v$ and that of the linear approximation of $g_v$, specified around any point $\omega$ by $g_v(\omega)+\langle \nabla_\omega g_v(\omega), \mathbf{z}-\omega \rangle$ for $\mathbf{z}$ lying in a neighborhood of $\omega$. However, the two expressions are vastly different. 
While optimising the linear approximation of $g_v$ in place of $h_v$ to compute surrogate proportions, it might possibly lead to the convergence arising from the surrogate proportions to a sub-optimal weight in the long run. We show that our approach leads to the convergence of the empirical proportion to an almost-optimal weight for which the evaluation of $g_v$ matches the constant $c^*(v)$ of the lower bound up to a factor of $1+\eta$. See Section~\ref{sec:proof-sketch} for more details.

\end{remark}

\begin{remark}
The sequence $\{l_t\}_{t=1}^{\infty}$ is strategically designed to minimize frequent alterations to the surrogate function $h_{\widehat{v}_{l_t}}$, thereby facilitating precise control over the {\em estimation error}---the difference between $h_{\widehat{v}_{l_t}}$ and $g_{\widehat{v}_{l_t}}$---in our analytical framework. Detailed insights can be found in Lemma~\ref{lemma:g_to_c_star}. Additionally,  $\eta>0$ is judiciously chosen to prevent the estimation error from diverging to infinity.
\end{remark}

\vspace{-.1in}
\subsection{Stopping and Recommendation Rules}
\label{subsec:stopping-rule}

We now delineate the stopping and recommendation rules employed in our algorithm. We adopt a variant of Chernoff's stopping rule~\citep {Kaufmann2016, lattimore_szepesvari_2020}; our design is particularly inspired by~\citet{chen2022federated}.   Specifically, let 
\begin{equation}
    Z(t) \coloneqq  \min_{m\in [M]} \ \min_{i\in[K] \setminus \widehat{i}_m(t)} \  \frac{N_{i,t} \,  N_{\widehat{i}_m(t),t} \, \widehat{\Delta}^2_{i,m}(t)}{2(N_{i,t}+ N_{\widehat{i}_m(t),t})}
    \label{eq:test-statistic}
\end{equation}
denote a test statistic at time $t$. We define the stopping time of our algorithm via
\begin{equation}
    \tau_\delta = \min\{t \geq K: Z(t) \!>\! \beta(t, \delta)\},
 \label{eq:chernoff-stopping-time}
\end{equation}
where the threshold $\beta(t, \delta) \coloneqq MK \log(t^2+t) + f^{-1}(\delta)$. Here, $f:(0,+\infty) \rightarrow (0,1)$ is defined as
\begin{equation}
   f(x) \coloneqq \sum_{i=1}^{MK} \frac{x^{i-1}e^{-x}}{(i-1)!}, \quad x \in (0, +\infty).
   \label{eq:deff}
\end{equation}
As for the recommendation rule, for each objective, our algorithm outputs the arm with the highest empirical mean corresponding to the objective as the best arm of the objective. That is, $\widehat{I}_\delta=[\widehat{i}_1(\tau_\delta), \ldots, \widehat{i}_M(\tau_\delta)]^\top \in [K]^M$, where
$
\widehat{i}_m(\tau_\delta) \in \arg\max_{i\in[K]} \widehat{\mu}_{i,m}(\tau_\delta)
$
for each $m\in [M]$. 

Integrating the arms selection rule (with a fixed parameter $\eta>0$), stopping rule, and recommendation rule delineated above, our algorithm for multi-objective best arm identification, abbreviated as \textsc{MO-BAI}, is presented in Algorithm~\ref{alg:mo-bai}.

\begin{remark}
      Note that $\textsc{MO-BAI}$ selects each arm once in the first $K$ time slots. That is, $N_{\cdot,K} = [1, \ldots, 1]^\top$.  Together with \eqref{eq:hat-omega-t-1-definition} and \eqref{eq:buffer-update-rule}, it implies that $\widehat{\omega}_{\cdot, t} \in \Gamma^{(\eta)}$ for all $t > K$.
\end{remark}

\subsection{Performance of {\sc MO-BAI}}
\label{sec:upperbound}
In this section, we characterise the performance of {\sc MO-BAI} with a fixed input parameter $\eta$.  The first result below asserts that {\sc MO-BAI} is $\delta$-PAC for any $\delta\in (0,1)$. 
\begin{proposition}
\label{prop:delta_pac}
    Fix $\eta>0$ and $\delta \in (0,1)$. Then, {\sc MO-BAI} with parameter $\eta$ is $\delta$-PAC, i.e., $\forall v \in \mathcal{P}$
\begin{align}
    \mathbb{P}_v^{\textsc{MO-BAI}} \left(\tau_\delta < +\infty \right)& =1  \quad \text{and} \\
    \mathbb{P}_v^{\textsc{MO-BAI}} \big(\hat{I}_\delta = I^*(v) \big) &\ge 1-\delta. \label{eq:err-prob-at-most-delta}
\end{align}
\end{proposition}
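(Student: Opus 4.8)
The plan is to prove the two claims separately, following the standard template for Track-and-Stop-type algorithms equipped with a Chernoff (GLR) stopping rule: almost-sure finiteness follows from a growth-rate comparison between the test statistic and the threshold, while the error guarantee follows from a uniform-in-time deviation bound tailored to the threshold $\beta(t,\delta)$.

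\emph{Almost-sure finiteness.} First I would show that every arm is pulled a linear number of times. Summing the buffer update \eqref{eq:buffer-update-rule} from $K+1$ to $t$ (using $\mathbf{B}_{\cdot,K}=\mathbf{0}$) telescopes to $N_{i,t}=1+\sum_{s=K+1}^{t}\mathbf{s}_{s,i}-\mathbf{B}_{i,t}$, while the argmax tracking rule \eqref{eq:arms-selection-rule} keeps $\|\mathbf{B}_{\cdot,t}\|_\infty$ uniformly bounded (the standard property of cumulative ``track-the-sum'' tracking, using that the buffer sums to zero at every step). Combined with the lower bound $\mathbf{s}_{s,i}\ge \tfrac{\eta}{K(1+\eta)}$ coming from $\mathbf{s}_s\in\Gamma^{(\eta)}$, this yields $N_{i,t}\ge c\,t$ for some $c>0$ and all large $t$, so that $N_{i,t}\to\infty$ for every $i$. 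By the strong law of large numbers $\widehat{\mu}_{i,m}(t)\to\mu_{i,m}$ almost surely, hence eventually $\widehat{i}_m(t)=i^*_m(v)$ and $\widehat{\Delta}_{i,m}(t)\to\Delta_{i,m}(v)>0$ for every $(i,m)$. Substituting into \eqref{eq:test-statistic}, each inner term is $\tfrac12\cdot\frac{N_{i,t}N_{i^*_m,t}}{N_{i,t}+N_{i^*_m,t}}\,\widehat{\Delta}_{i,m}^2(t)=\Theta(t)$, so $Z(t)=\Theta(t)$ grows linearly whereas $\beta(t,\delta)=MK\log(t^2+t)+f^{-1}(\delta)=\Theta(\log t)$; therefore $Z(t)>\beta(t,\delta)$ for all large $t$ almost surely, giving $\tau_\delta<\infty$ a.s.

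\emph{Error probability.} On the event $\{\widehat{I}_\delta\neq I^*(v)\}$ there is an objective $m$ with $\widehat{i}_m(\tau_\delta)\neq i^*_m(v)$; write $a=i^*_m(v)$ and $b=\widehat{i}_m(\tau_\delta)\neq a$, so that $\mu_{a,m}-\mu_{b,m}=\Delta_{b,m}(v)>0$ while the empirical gap $\widehat{\mu}_{b,m}(\tau_\delta)-\widehat{\mu}_{a,m}(\tau_\delta)\ge 0$ has the opposite sign. Since $a\neq b=\widehat{i}_m(\tau_\delta)$, the pair $(a,m)$ is admissible in the inner minimisation of \eqref{eq:test-statistic}, so the stopping condition forces
\[
\frac{N_{a,\tau_\delta}\,N_{b,\tau_\delta}\,\widehat{\Delta}^2_{a,m}(\tau_\delta)}{2\,(N_{a,\tau_\delta}+N_{b,\tau_\delta})}\ \ge\ Z(\tau_\delta)\ >\ \beta(\tau_\delta,\delta),
\]
i.e.\ the empirical means of $a$ and $b$ on objective $m$ must have deviated enough to flip the sign of their gap and yet make the GLR statistic exceed $\beta$. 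It therefore suffices to bound, by $\delta$, the probability that some such flipped pairwise statistic ever exceeds $\beta(t,\delta)$ over all $t\ge K$, $m\in[M]$, and $i\neq i^*_m(v)$.

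\emph{The uniform deviation bound and the main obstacle.} The plan for this last step is a method-of-mixtures (equivalently, peeling) argument applied to the $MK$ Gaussian empirical means: for unit-variance Gaussians each flipped pairwise statistic is dominated by the log of an exponential supermartingale whose tail is governed precisely by $f$ in \eqref{eq:deff}, where the ``$MK$'' reflects the $MK$ coordinates $(i,m)$ tracked simultaneously; the additive term $MK\log(t^2+t)$, together with the telescoping identity $\sum_{t\ge1}\tfrac{1}{t^2+t}=1$, absorbs the union over the random horizon, and the choice $\beta(t,\delta)=MK\log(t^2+t)+f^{-1}(\delta)$ makes the total deviation probability at most $f(f^{-1}(\delta))=\delta$, which is \eqref{eq:err-prob-at-most-delta}. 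I expect this concentration step to be the main obstacle: one must control $MK$ empirical-mean processes that are coupled through the shared, data-dependent pull counts $N_{i,t}$, uniformly over an unbounded random horizon, while keeping the constant sharp enough to land exactly at $\delta$ rather than a looser multiple. This requires a mixture-martingale construction matched to the exact shape of $f$ and a verification that the flipped pairwise GLR statistic is indeed dominated by the constructed supermartingale for all $t$ simultaneously.
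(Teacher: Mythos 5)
Your proposal is correct and follows essentially the same route as the paper's proof: almost-sure finiteness from the linear growth of $Z(t)$ against the logarithmic threshold $\beta(t,\delta)$ (the paper's Lemma~\ref{lemma:finite_stopping}), and the error bound by reducing the event $\{\widehat{I}_\delta \neq I^*(v)\}$ at stoppage to a uniform-in-time deviation event whose probability is pinned to exactly $\delta$ by the design of $\beta(t,\delta)$ (the paper phrases this via the variational form of $Z$ and $v \in \mathrm{Alt}(\widehat{v}(\tau_\delta))$, which is equivalent to your flipped-pair transport argument). The concentration step you flag as the main obstacle is precisely what the paper assembles from off-the-shelf ingredients: a per-coordinate Gaussian mixture-martingale bound (Lemma~\ref{lemma:latimaore_338}) giving the $\log(N_{i,t}(N_{i,t}+1))$ correction, combined across the $MK$ coordinates via Lemma~\ref{lemma:mydeltaunion}, which is exactly where $f = f^{(MK)}$ and the sharp landing at $\delta$ come from.
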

We present the proof in Appendix~\ref{sec:proof_of_delta_pac}. It is worth noting that the form of $Z(t)$ and the threshold $\beta(t, \delta)$ play important roles in the proof. The following result demonstrates an asymptotic upper bound on the stopping time of  {\sc MO-BAI}.

\begin{theorem}
\label{thm:upperbound}
Under {\sc MO-BAI} with parameter $\eta > 0$, $\forall v \in \mathcal{P}$,
\begin{align}
    &\limsup_{\delta \downarrow 0} \frac{\mathbb{E}_v\left[\tau_\delta \right]}{\log(\frac{1}{\delta})} \le (1+\eta) \, c^*(v) \quad\mbox{and}
    \label{eq:asymptotic-upper-bound-expected-stopping-time} \\
    &\mathbb{P}_v^{\textsc{MO-BAI}} \left(\limsup_{\delta \downarrow 0} \frac{\tau_\delta}{\log(\frac{1}{\delta})} \le (1+\eta) \, c^*(v) \right) =1. \
    \label{eq:asymptotic-upper-bound-almost-surely-finite-stopping-time}
\end{align}
\end{theorem}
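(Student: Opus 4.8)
The plan is to reduce both claims to a single fact—that the test statistic $Z(t)$ grows linearly in $t$ at the (almost-optimal) rate $1/((1+\eta)\,c^*(v))$—and then to invert the stopping rule using $f^{-1}(\delta)\sim\log(1/\delta)$ as $\delta\downarrow 0$. First I would record two foundational facts that hold under \textsc{MO-BAI} regardless of $\delta$. The greedy buffer update keeps $\mathbf{B}_{\cdot,t}$ bounded in $\ell_\infty$, so $N_{i,t} = t\,\widehat{\omega}_{i,t} + O(1)$ and $\widehat{\omega}_{i,t}$ coincides, up to an $O(1/t)$ correction, with the running average $\frac1t\sum_{s\le t}[\mathbf{s}_s]_i$. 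Since every surrogate proportion lies in $\Gamma^{(\eta)}$, each component is at least $\frac{\eta}{K(1+\eta)}$, forcing $N_{i,t}\ge \frac{\eta\,t}{K(1+\eta)}-O(1)\to\infty$ for every arm (forced exploration). The strong law of large numbers then yields $\widehat{\mu}_{i,m}(t)\to\mu_{i,m}$ almost surely for all $(i,m)$, whence $\widehat{v}_t\to v$, the empirical best arm satisfies $\widehat{i}_m(t)=i^*_m(v)$ for all large $t$, and $\widehat{\Delta}_{i,m}(t)\to\Delta_{i,m}(v)$, all almost surely.

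Next I would establish the linear growth of $Z(t)$. Writing $\frac{N_{i,t}N_{\widehat{i}_m(t),t}}{N_{i,t}+N_{\widehat{i}_m(t),t}} = t\cdot\frac{\widehat{\omega}_{i,t}\,\widehat{\omega}_{\widehat{i}_m(t),t}}{\widehat{\omega}_{i,t}+\widehat{\omega}_{\widehat{i}_m(t),t}} + O(1)$ and using that eventually $\widehat{i}_m(t)=i^*_m(v)$ and $\widehat{\Delta}_{i,m}(t)\to\Delta_{i,m}(v)$, I obtain $Z(t) = t\,g_{\widehat{v}_t}(\widehat{\omega}_{\cdot,t}) + o(t)$ almost surely. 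The key input is Lemma~\ref{lemma:g_to_c_star}, which I invoke to conclude that the surrogate-proportion scheme drives the empirical proportion to an almost-optimal weight, i.e.\ $\liminf_{t} g_{\widehat{v}_{l_t}}(\widehat{\omega}_{\cdot,t}) \ge \frac{1}{(1+\eta)\,c^*(v)}$ almost surely. The restriction to $\Gamma^{(\eta)}$ is exactly what contributes the factor $1+\eta$: replacing the unrestricted optimiser $\omega^*(v)$ by $\tfrac{1}{1+\eta}\omega^*(v)+\tfrac{\eta}{1+\eta}\,\mathrm{unif}$ loses at most a factor $1+\eta$ in $g_v$, by the positive $1$-homogeneity and monotonicity of each $g_v^{(i,m)}$. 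Combining these gives
\[
    \liminf_{t\to\infty}\frac{Z(t)}{t}\ \ge\ \frac{1}{(1+\eta)\,c^*(v)}\quad\text{almost surely.}
\]

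With linear growth of $Z$ in hand, the almost-sure bound \eqref{eq:asymptotic-upper-bound-almost-surely-finite-stopping-time} follows by inverting the stopping rule. Fix $\epsilon\in(0,1)$; almost surely there is a random, $\delta$-free time $T_\epsilon$ after which $Z(t)\ge\frac{(1-\epsilon)\,t}{(1+\eta)c^*(v)}$. Because $\beta(t,\delta)=MK\log(t^2+t)+f^{-1}(\delta)$ is only logarithmic in $t$, for $t\ge T_\epsilon$ the crossing condition $Z(t)>\beta(t,\delta)$ holds as soon as $t\gtrsim (1+\eta)c^*(v)\,f^{-1}(\delta)/(1-\epsilon)$, so $\tau_\delta$ is at most a quantity of this order (plus $T_\epsilon$). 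Dividing by $\log(1/\delta)$, using $f^{-1}(\delta)\sim\log(1/\delta)$, and letting $\delta\downarrow0$ so the fixed $T_\epsilon$ becomes negligible, gives $\limsup_{\delta\downarrow0}\tau_\delta/\log(1/\delta)\le (1+\eta)c^*(v)/(1-\epsilon)$; sending $\epsilon\downarrow0$ yields \eqref{eq:asymptotic-upper-bound-almost-surely-finite-stopping-time}.

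The main obstacle is the expectation bound \eqref{eq:asymptotic-upper-bound-expected-stopping-time}, since the almost-sure argument controls $\tau_\delta$ only beyond the random time $T_\epsilon$ and says nothing about its tail. I would upgrade it by quantifying the convergences above: define a deterministic threshold $T(\delta)\sim(1+\eta)c^*(v)\log(1/\delta)$ and show that $\{\tau_\delta>T(\delta)\}$ forces either a large deviation of some $\widehat{\mu}_{i,m}$ from $\mu_{i,m}$ or a failure of the surrogate scheme to have converged, both controlled by Gaussian concentration inequalities (exploiting $N_{i,t}\gtrsim t$ from forced exploration) with tails summable in $t$. This yields $\mathbb{P}_v(\tau_\delta>t)$ bounded by a summable quantity for $t$ beyond the deterministic threshold, hence $\mathbb{E}_v[\tau_\delta]\le T(\delta)+C$ with $C$ independent of $\delta$; dividing by $\log(1/\delta)$ and taking $\delta\downarrow0$ gives \eqref{eq:asymptotic-upper-bound-expected-stopping-time}. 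The delicate points are making the concentration uniform over the interplay between $t$ and $\delta$ and ensuring the ``bad'' event governing $T_\epsilon$ has a finite-mean (indeed summable-tail) hitting time, which is where most of the technical work resides.
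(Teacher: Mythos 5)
Your proposal is correct and follows essentially the same route as the paper: it establishes the almost-sure linear growth $Z(t)/t \to \tilde{c}(v,\eta)^{-1} \geq \big((1+\eta)\,c^*(v)\big)^{-1}$ via Lemma~\ref{lemma:g_to_c_star} together with the mixture argument $\omega'_i = \frac{\omega^*_i(v)}{1+\eta} + \frac{\eta}{(1+\eta)K}$ (the paper's Lemma~\ref{lemma:relax_c}), then bounds $\tau_\delta$ by a $\delta$-free random time plus a deterministic threshold of order $f^{-1}(\delta)/\big(\tilde{c}(v,\eta)^{-1}-\epsilon\big)$ (the paper's $T_{\rm gap}$ and $T_{\rm thres}$), and finally upgrades to the expectation bound by showing the random time has finite mean through concentration inequalities exploiting the forced exploration $N_{i,t} \gtrsim \frac{\eta t}{K(1+\eta)}$ (the paper's Lemma~\ref{lemma:T_gap_finite}). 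The technical work you defer at the end is precisely what the paper carries out there via martingale concentration, so no new ideas are needed beyond filling in those details.
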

We present the proof in Appendix~\ref{sec:proof_of_upper_bound}. Observe that in the asymptotic limit of $\delta \downarrow 0$, the upper bound in~\eqref{eq:asymptotic-upper-bound-expected-stopping-time} matches the lower bound in~\eqref{eq:lower-bound-asymptotic} up to a factor $1+\eta$. The parameter $\eta$ can be set arbitrarily close to $0$, thereby establishing the asymptotic optimality of {\sc MO-BAI}.

{\color{black}
\begin{remark}
    Our novel idea of surrogate proportion and the MO-BAI algorithm are inspired, at a high level, by the gradient-based algorithms in \cite{jaggi2013revisiting,wang2021fast,menard2019gradient}. Yet, notably, while our algorithm operates with $M \geq 1$ objectives, the algorithm in \cite{wang2021fast} is specifically designed to operate with a {\em single} objective ($M=1$). Adapting the algorithm in \cite{wang2021fast} to our setting, while maintaining computational tractability and considering the exponential $O(K^M)$ scaling of possible sets of best arms across $M$ objectives, presents challenges in preserving asymptotic optimality (Theorem~\ref{thm:upperbound}). Furthermore, even for the case when $M=1$, our algorithm differs substantially from that in \cite{wang2021fast}. The latter functions on the concept of an {\em $r$-subdifferential subspace} (an idea that is in turn adapted from~\cite{ravi2019deterministic}) and involves dealing with an infinite sequence of hyperparameters $\{r_i\}$ that must be chosen carefully. In contrast, our algorithm operates {\em 
 without the need to choose/tune these hyperparameters}.
\end{remark}
}

\section{Proof Sketch of Theorem~\ref{thm:upperbound}}
\label{sec:proof-sketch}

In this section, we outline the key ideas that go into the proof of Theorem~\ref{thm:upperbound}. At a high level, the proof encompasses two important steps: (1) Deriving a bound on the limiting value of $Z(t)/t$, and (2) Deriving a upper bound on the stopping time $\tau_\delta$ based on the limiting value of $Z(t)/t$. Throughout, we fix an underlying instance $v$.

\subsection{Bounding the Limiting Value of \texorpdfstring{$Z(t)/t$}{Z(t)/t}}

The key to deriving a ``good'' upper bound for the limiting value of $Z(t)/t$ lies in establishing that the evaluations of $g_v$ at the oracle weights arising from \textsc{MO-BAI} (with input parameter $\eta$) match, in the long run, with the constant $c^*(v)$ appearing in the lower bound up to a factor of $1+\eta$. Towards this, we introduce the following auxiliary constant $\forall \eta>0$:
\begin{equation}
    C(v,\eta) \coloneqq  \sup_{\substack{\omega, \mathbf{y}\in \Gamma^{(\eta)}, \\ \gamma\in (0,1), \\ \mathbf{z} = \omega + \gamma(\mathbf{y} - \omega)}} \frac{2}{\gamma^2 } \bigg(h_v\big(\omega, \ \mathbf{z}-\omega\big) - g_v\big(\mathbf{z}\big)\bigg).
    \label{eq:def_C}
\end{equation}
The above definition appears to bear close resemblance with the definition for the curvature of a function, but they are not equivalent (see details in Appendix~\ref{sec:proof_of_upper_bound}). We show in Lemma~\ref{lemma:finite_cuvature} that $C(v,\eta)< +\infty$ for all $\eta>0$. For all $\eta>0$, let 
\begin{equation}
    \tilde{c}(v,\eta)^{-1} \coloneq \sup_{\omega \in \Gamma^{(\eta)}} g_v(\omega).
    \label{eq:tilde-c-v-eta}
\end{equation}
In Lemma~~\ref{lemma:relax_c}, we show that $ c^*(v) \le (1+\eta) \, \tilde{c}(v,\eta)$. With the above notations in place, the the key result of this section is presented below.

\begin{replemma}{lemma:g_to_c_star}

Fix $\eta>0$. For all $t_1,t_2 \in \mathbb{N}$ with $t_2>t_1 > K$, under $\textsc{MO-BAI}$ with input parameter $\eta$, we have
\begin{align}
    \lvert \tilde{c}(v,\eta)^{-1}-g_v(\widehat{\omega}_{\cdot,t_2}) \rvert \le  \frac{t_1}{t_2} \, \tilde{c}(v,\eta)^{-1}  + 11 \, \epsilon_{t_1}(v) 
    +  \frac{2 \, \log(t_2) \, \overline{C}_{t_1}(\eta)}{t_2}
    \label{eq:relation-between-tilde-c-and-g-v2}
\end{align} 
almost surely, where for any time step $t$:
\begin{itemize}
    
    \item $\epsilon_t(v) \coloneqq \sup_{t' \ge l_t} \sup_{\omega \in \Gamma^{(\eta)}} \big\lvert g_v(\omega)-g_{\widehat{v}_{t'}}(\omega) \big\rvert$.

    \item  $\overline{C}_t(\eta) \coloneqq \sup_{t' \ge l_t} C(\widehat{v}_{t'},\eta)$.
\end{itemize} 
\end{replemma}
We gather from Lemma~\ref{lemma:g_to_c_star} that $\lim_{t_1 \to \infty} \epsilon_{t_1}(v)=0$ and  $\limsup_{t_1 \to \infty} \overline{C}_{t_1}(\eta) < +\infty$ almost surely; we present the details in Appendix~\ref{sec:proof_of_upper_bound}. Setting $t_2 = t_1^{-\lambda}$ for some $\lambda \in (0,1)$, we demonstrate that as $t_1 \rightarrow \infty$, the right-hand side of~\eqref{eq:relation-between-tilde-c-and-g-v2} converges to $0$ almost surely, which in turn implies that $g_v(\widehat{\omega}_{\cdot,t_2})$ converges to $\tilde{c}(v,\eta)^{-1}$ in the limit as $t_2 \to \infty$ almost surely.
This leads to the following quantitative characterisation of the limiting value of $Z(t)/t$.
\begin{replemma}{lemma:zt_t_to_c_tilde}
    Fix $\eta>0$ and consider the non-stopping version of $\textsc{MO-BAI}$ with input parameter $\eta$ (i.e., a policy in which the stopping rule corresponding to Lines 11-14 in Algorithm~\ref{alg:mo-bai} are not executed). Under this policy,
    \begin{equation*}
        \lim_{t \rightarrow \infty} \frac{Z(t)}{t} = \tilde{c}(v,\eta)^{-1}
    \quad \text{almost surely}.
    \end{equation*}
\end{replemma}

\subsection{Bounding the Stopping Time \texorpdfstring{$\tau_\delta$}{taudelta}}

Using the limiting value of $Z(t)/t$ derived earlier, we upper bound the stopping time almost surely and in expectation. First, we introduce two auxiliary terms.

(1) ${T_{\rm gap}}(v, \eta, \epsilon)$:
  For any $\epsilon>0$, let $T_{\rm gap}(v,\eta,\epsilon)$ denote the smallest positive integer-valued random variable such that
\begin{equation*}
     \left\lvert \frac{Z(t)}{t} -\tilde{c}(v,\eta)^{-1} \right\rvert \le \epsilon \quad \forall \ t \ge T_{\rm gap}(v,\eta,\epsilon).
\end{equation*}
Thanks to Lemma~\ref{lemma:zt_t_to_c_tilde}, we have that $T_{\rm gap}(v,\eta,\epsilon)$ is finite almost surely. We show in Lemma~\ref{lemma:T_gap_finite} that $T_{\rm gap}(v,\eta,\epsilon)$ also has finite expectation.

(2) $ T_{\rm thres}(v, \eta, \epsilon, \delta)$: For $\delta \in (0,1)$ and $\eta>0$, let
\begin{align}
   & T_{\rm thres}(v,\eta, \epsilon, \delta) \coloneqq 1+ \frac{f^{-1}(\delta)}{\tilde{c}(v,\eta)^{-1} - \epsilon} + \frac{MK}{\tilde{c}(v,\eta)^{-1} - \epsilon} \, \log\left( \left(\frac{2f^{-1}(\delta)}{\tilde{c}(v,\eta)^{-1} - \epsilon}\right)^2 + \frac{2f^{-1}(\delta)}{\tilde{c}(v,\eta)^{-1} - \epsilon}  \right) .
   \nonumber
\end{align}
The right-hand side of the above expression is carefully designed to meet the following requirement.

\begin{replemma}{lemma:Tlast}
Fix $\delta \in (0,1)$ and $\eta>0$, and consider the threshold $\beta(t, \delta)$ in {\sc MO-BAI}. For all $\epsilon \in \big(0, \tilde{c}(v,\eta)^{-1} \big)$, there exists $\delta_{\rm thres}(v,\eta,\epsilon) > 0$ such that for all $\delta \in (0, \delta_{\rm thres}(v,\eta, \epsilon))$,
\begin{equation*}
    t \tilde{c}(v,\eta)^{-1} > \beta(t,\delta) + \epsilon t \quad \forall t \ge T_{\rm thres}(v,\eta, \epsilon, \delta).
\end{equation*}
\end{replemma}
Using Lemma~\ref{lemma:Tlast}, we show that for any $\epsilon \in \big(0, \tilde{c}(v,\eta)^{-1} \big)$ and $\delta \in (0, \delta_{\rm thres}(v,\eta, \epsilon))$, almost surely,
\begin{align}
 \tau_{\delta} 
 & \le T_{\rm gap}(v,\eta,\epsilon)+T_{\rm thres}(v,\eta,\delta, \epsilon)+ K +1.
\nonumber
\end{align}
This crucial step leads  to $
\limsup_{\delta \downarrow 0} \tau_{\delta}\cdot \frac{\tilde{c}(v,\eta)^{-1} - \epsilon}{f^{-1}(\delta)} \le 1 
$ almost surely, because $T_{\rm gap}(v,\eta,\epsilon)$ is independent of $\delta$ and that 
$
    \lim_{\delta \downarrow 0} T_{\rm thres}(v,\eta,\delta, \epsilon) \cdot  \frac{\tilde{c}(v,\eta)^{-1} - \epsilon}{ f^{-1}(\delta)}  = 1.
$ 
Finally, using the fact that $ \lim_{\delta \downarrow 0} \frac{f^{-1}(\delta)}{\log(1/\delta)} = 1 $, and letting $\epsilon \downarrow 0$, we arrive at \eqref{eq:asymptotic-upper-bound-expected-stopping-time} and \eqref{eq:asymptotic-upper-bound-almost-surely-finite-stopping-time}. It is noteworthy that our proof techniques are applicable in a wide range of pure exploration problems with single-dimensional and multi-dimensional rewards from arms.
\section{Numerical Study}

\label{sec:numerical}
\begin{figure}[!ht]
    \centering
    \includegraphics[width = .6\columnwidth]{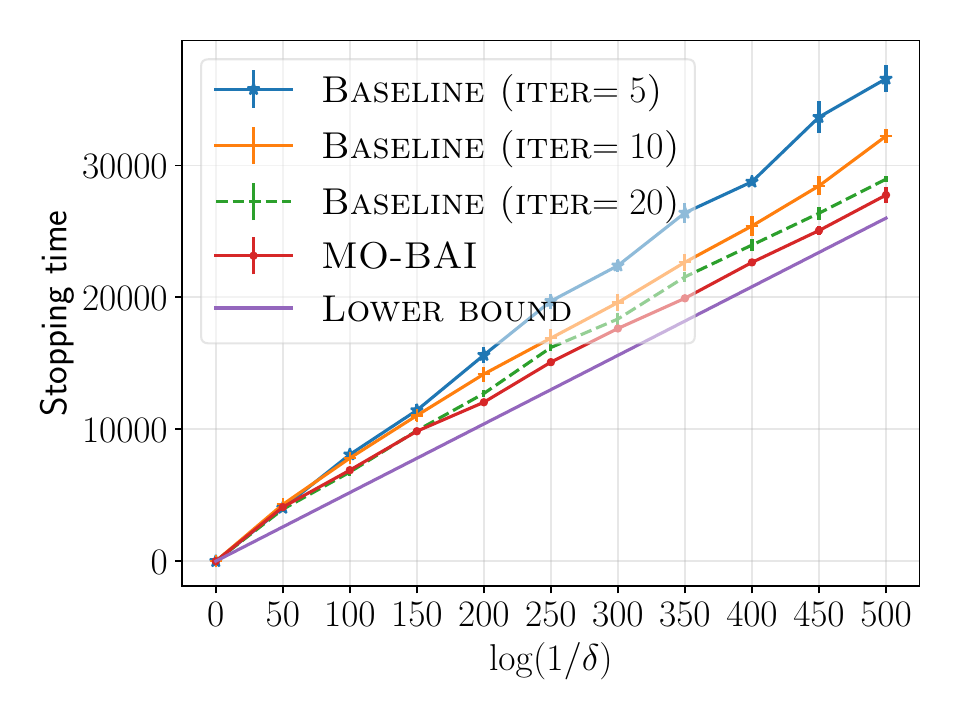}
    \caption {Plot of average stopping times of \textsc{MO-BAI} and \textsc{Baseline} (with varying iteration numbers) for the synthetic dataset.}
    \label{fig:synthetic-stopping-times}
\end{figure}
We run experiments to validate the effectiveness of \textsc{MO-BAI} through empirical assessments on the SNW dataset~\citep{zuluaga2016} and a synthetic dataset, and their detailed descriptions are presented in Appendix~\ref{sec:dataset_description}. Specifically, we compare our algorithm against \textsc{Baseline}, a multi-objective adaptation of the D-Tracking algorithm by \citet{garivier2016optimal} which was originally designed for a single objective (see details in Appendix~\ref{app:baseline_details}), and a Successive Elimination-based~\citep{even2006action} algorithm. As alluded to in Remark~\ref{rem:exact-computatation-of-convex-program-vs-easy-computation-of-LP}, the implementation of \textsc{Baseline} involves solving an optimization problem, which, in scenarios with $M>1$, may not be exactly resolved.
 Hence, we employ an iterative method (details in Appendix~\ref{appndx:Baseline} and pseudo code of Algorithm~\ref{alg:optimisation-sub-routine-in-baseline}), and the accuracy of the obtained solution depends on the number of iterations.  

\begin{table*}[!ht]
\centering
\begin{tabular}{|c|ccc|}
\hline
\multirow{2}{*}{{\sc MO-BAI}} & \multicolumn{3}{c|}{{\sc Baseline} with {\sc iter} Iteration Steps}                                                     \\ \cline{2-4} 
                        & \multicolumn{1}{c|}{\textsc{iter} = 5}              & \multicolumn{1}{c|}{\textsc{iter} = 10}             & \textsc{iter} = 20              \\ \hline
$\mathbf{10.0 \pm 7.2}$         & \multicolumn{1}{c|}{$41.0\pm 10.0$} & \multicolumn{1}{c|}{$92.2\pm 19.8$} & $184.3\pm 41.0$ \\ \hline
\end{tabular}

\caption {Average computation times (in ms) for a single execution of the optimisation routines in {\sc MO-BAI} and {\sc Baseline} on Apple M1 Chip with 16GB of RAM. }
    \label{tab:synthetic-computation-times}
\end{table*}
In addition, for the fairness of comparison, we employ the threshold $\widehat{c}_t(\delta)=\log((1+\log t)/\delta)$ in both {\sc MO-BAI} and {\sc Baseline} in the numerical study, and note that this is different from our theoretical threshold $\beta(t, \delta)$. Additional details regarding the modified threshold $\widehat{c}_t(\delta)$ and its justification in the context of our work can be found in Appendix~\ref{sec:Threshold}.

{\bf Results } A comparison of \textsc{MO-BAI} with $\eta=0.1$ and \textsc{Baseline} with varying iteration counts ($\textsc{iter}=5,10,20$) for the synthetic dataset is depicted in Figure~\ref{fig:synthetic-stopping-times}. The error bars in the figures are obtained from running three independent trials. 
Further, the computation times (in ms) for a single execution of the optimization routines in \textsc{Baseline} and \textsc{MO-BAI} are shown in Table~\ref{tab:synthetic-computation-times}. Clearly, the figures demonstrate the superior performance of {\sc MO-BAI} compared to \textsc{Baseline}. As the number of iteration steps in \textsc{Baseline} increases, its performance approaches that of \textsc{MO-BAI}, an artefact of improved accuracy in solving the optimization routine in \textsc{Baseline}. However, this improvement comes at a significant escalated computational cost, as evidenced by Table~\ref{tab:synthetic-computation-times}. These results, as well as the other experimental results of the SNW dataset in Appendix~\ref{appndx:Baseline} underscore the efficacy of   {\sc MO-BAI}.


\section{Conclusions and Future Work}

This work considered a novel best arm identification setting in which a single pull of an arm yields an $M$-dimensional vector as its reward. The goal was to identify the $M$ best arms, one corresponding to each dimension, under the fixed-confidence regime. We developed an efficient algorithm based on the original idea of {\em surrogate proportions}, that we proved is asymptotically optimal and computationally efficient. We conducted empirical studies on a synthetic dataset and the SNW datasets to substantiate the proposed algorithm's computational efficiency and asymptotic optimality. 
Our results are asymptotically optimal in the sense that the results are tight as the error probability $\delta \downarrow 0$. It would be fruitful to investigate whether the ideas in non-asymptotic strengthenings of single-objective pure exploration problems (e.g., \citet{degenne2019nonasymptotic}) carry over to our multi-objective setting. 


\newpage

\bibliography{main}
\bibliographystyle{apalike}


\newpage
\appendix
\onecolumn

\section{More Details on Numerical Study}
\label{appndx:Baseline}

In this section, we provide more details on our experimental implementation.
Especially, we implement the algorithm of {\sc MO-BAI} as shown in the pseudocode of Algorithm~\ref{alg:mo-bai}. 

\subsection{Simulation Environment}   \label{app:env}
The experiments were executed on an Apple M1 Chip with 16 GB of memory, operating on Mac OS 14.2.1. The linear programming procedures outlined in Algorithm~\ref{alg:mo-bai} and Algorithm~\ref{alg:optimisation-sub-routine-in-baseline} were executed using SciPy v1.11.3 within the Python 3.11.2 environment.

\subsection{Descriptions of the Datasets}
\label{sec:dataset_description}
\textbf{Synthetic Dataset: \ } Our synthetic dataset is generated with parameters $K=20$ and $M=10$. For all pairs $(i, m)$ where $i \neq m$, $\mu_{i,m}$ is uniformly chosen from the interval $[0,1]$. For pairs $(i, m)$ where $i = m$, $\mu_{i,m}$ is uniformly selected from $[1.2,2]$. These values remain constant throughout the experiment. Let $v = [\mu_{i,m}: (i, m) \in [K] \times [M]]^\top$. It is evident that $i_m^*(v)=m$ for every $m \in [M]$. Additionally, $\Delta_{i,m}(v)> 0.2$ for all $i \neq i^*_m(v)$.

\textbf{SNW Dataset: \ } We adopt the SNW dataset introduced by \citet{zuluaga2016}, consisting of 206 distinct hardware implementations of a sorting network. Following the protocol outlined by \citet{ararat2023vector}, the objective values, represented by the negative of the area, serve as mean rewards for the designs, and Gaussian noises are added to the mean rewards in the bandit dynamic. Consequently, in this dataset, we have $K=206$ and $M=2$. To facilitate the simulation, we scale the rewards of each arm by a factor of $10$.

\subsection{Results for the SNW Dataset}
\label{appndx:experimental-results}
We fix $\eta=0.1$ in our implementation of {\sc MO-BAI.}
We run three independent trials, and average the stopping times from these trials to obtain the values in Table~\ref{tab:snw}. The tabulated results indicate the superior performance of our proposed \textsc{MO-BAI} algorithm over \textsc{Baseline} on the SNW dataset. Notably, this dataset involves a greater number of decision variables compared to the synthetic dataset instance (i.e., 206 versus 20), posing increased difficulty in running the optimization routine of {\sc Baseline}. Consequently, to achieve comparable performance with \textsc{MO-BAI}, it is necessary to increase the number of iteration steps in \textsc{Baseline}. This underscores the superiority of \textsc{MO-BAI} from a practical standpoint. 

\begin{table*}[!ht]
    \centering {
    \begin{tabular}{|l|c|c|c|}
        \hline
        $\log(1/\delta)$ & $10$  & $50$  & $100$ \\ 
        \hline
        {\sc MO-BAI} &  {\bf 1,686.67 $\pm$ 21.03} & {\bf 5,097.33 $\pm$ 90.78} & {\bf 8,435.0 $\pm$ 115.23}  \\
        \hline
        {\sc Baseline (iter=100)} & $7,428.0\pm 139.69$ & $18,612.67 \pm 214.59$ &  $30,914.33 \pm 622.61$ \\ 
        \hline
        {\sc Baseline (iter=20)} & $10,584.67 \pm 313.52$  &  $37,316.33 \pm 406.90$&  $65,395.0 \pm 602.98$ \\ 
        \hline
    \end{tabular} }
    
    \caption{Comparison of the empirical stopping times for the SNW dataset \citep{zuluaga2016}.}
    \label{tab:snw}
\end{table*}

\begin{table*}[!ht]
    \centering{ 
    \begin{tabular}{|c|c|c|}
        \hline
       
         & $\delta = 0.1$ & $\delta = 0.05$  \\
        \hline
        {\sc MO-BAI} & \bf{968.82 $\pm$ 58.21}  &  {\bf 1023.77 $\pm$ 67.42} \\
        {\sc Baseline}   & $4485.98 \pm 124.92$	      & $6168.29 \pm 132.01$   \\
         {\sc Baseline-Non-Unif  } & $3841.05 \pm 136.44$	      & $4320.55\pm  128.26$  \\
          {\sc MO-SE} & $2322.39\pm 461.54$	      & $2411.16 \pm 421.88$ \\
        \hline

\end{tabular}
}

\caption{Average stopping times obtained by running $100$ independent trials with practical error probability $\delta=0.1$ and $\delta=0.05$ for the  SNW dataset \citep{zuluaga2016}. In {\sc Baseline} and {\sc Baseline-Non-Unif}, we set $\textsc{iter}=20$.}
\label{tab:practical-100-trials}
\end{table*}

\subsection{Curated Threshold for Simulations}
\label{sec:Threshold}
It is customary in the fixed-confidence BAI literature to employ  thresholds in simulations that differ from theoretical thresholds. Notably, in the single-objective case, \citet{garivier2016optimal} utilized $\beta_{\rm GK}^{\rm empirical}(t, \delta) = \log((1+\log t)/\delta)$ for empirical evaluation, a threshold that is (a log factor) smaller than the theoretical threshold $\beta_{\rm GK}^{\rm theoretical}(t, \delta) = \log(C_K t^\alpha/\delta)$ employed in their D-Tracking algorithm. Here, $\alpha>1$ is a parameter of the D-Tracking algorithm, and $C_K$ is a universal constant that depends only on the number of arms $K$. In a recent work, \citet{kaufmann2021mixture} introduced the concept of ``rank'' for pure exploration problems. They demonstrated that for a problem with rank $R$, the threshold $\widehat{c}_t(\delta) = 3R \, \log(1+\log(t/R)) + O(\log(1/\delta))$, when combined with the GLR test statistic, yields a $\delta$-PAC stopping rule; cf. \citet[Proposition~21]{kaufmann2021mixture}. The authors therein note that BAI with a single objective is a problem of rank $2$. It is noteworthy that the same holds true of multi-objective BAI problems; for a formal proof of this, see Appendix~\ref{appndx:mo-bai-problem-has-rank-2}. In light of the above rationale, we adopt the threshold $\log((1+\log t)/\delta)$ for our simulations following~\citet{garivier2016optimal}, and note that is different from our theoretical threshold $\beta(t, \delta)$ defined in Section~\ref{subsec:stopping-rule}.

{\color{black} Even with the modified threshold of $\log((1+\log t)/\delta)$, we observe near-zero empirical error rates for {\sc MO-BAI} and {\sc Baseline} algorithms in our experiments, thereby implying that this modified threshold is still conservative in practice. This conforms with the heuristics presented in \cite[Section~6]{garivier2016optimal}, providing additional practical justification for adopting the modified threshold in our experiments.}

\begin{algorithm}[!h]
    \caption{Multi-Objective Best Arm Identification (MO-BAI)}
    \begin{algorithmic}[1]
        \REQUIRE ~~\\
        $\delta\in (0,1)$: confidence level.\\
        {\color{black}$\eta>0$: relaxation parameter.}
         
        \ENSURE $\widehat{I}_\delta$: the best arms. 
         \STATE Pull each arm once. 
         \STATE Initialise the buffer $\mathbf{B}_{i,t}=0$ for all $i \in [K]$ and $t\in[K]$.
        
         \FOR{ $t\in \{K+1,K+2,\ldots\} $}
             \STATE Compute the empirical mean $\widehat{\mu}_{i,m}(t)$ for each $(i,m) \in [K] \times [M]$.
                     
             \STATE Compute the current empirical proportion 
            $$
                \widehat{\omega}_{i,t-1} \coloneqq \frac{N_{i,t-1}+\mathbf{B}_{i,t-1}}{t-1}.
            $$
            
            \STATE Set $l_t \leftarrow \max_{k \in \mathbb{N} : 2^k \le t} 2^k$.
            
            \STATE Compute the surrogate proportion {\color{black} for instance $\widehat{v}_{l_t}$} via 
            $$
                \mathbf{s}_t= \arg\max_{\mathbf{s}\in \Gamma^{(\eta)}}  h_{\widehat{v}_{l_t}}( \widehat{\omega}_{\cdot,t-1},\mathbf{s}).
            $$

            \STATE Set $t \leftarrow  t + 1$.
            
            \STATE Pull arm $A_t \in \arg\max_{i\in[K]}  [\mathbf{B}_{\cdot,t-1} + \mathbf{s}_t]_i$.
            
            \STATE Update the Buffer $\mathbf{B}_{\cdot,t} \leftarrow \mathbf{B}_{\cdot,t-1} + \mathbf{s}_t-\mathbf{D}(A_t) $.
                
            \IF{$Z(t)>\beta(t, \delta)$}
                \STATE $\widehat{I}_\delta \leftarrow \text{the empirical best arms at time }t$.
                \STATE break.
            \ENDIF
        \ENDFOR
        \RETURN Best arms $\widehat{I}_\delta$.
    \end{algorithmic}
    \label{alg:mo-bai}
\end{algorithm}

\subsection{The {\sc Baseline} Algorithm and its Implementation Details}
\label{app:baseline_details}
The pseudo-code of {\sc Baseline} is presented in Algorithm~\ref{alg:Baseline}. It is important to note that the approach proposed by~\citet{garivier2016optimal} for solving the optimization problem in line $6$ of {\sc Baseline}  becomes impractical when $M>1$ due to the various best arms across different objectives. In our implementation of {\sc Baseline}, we adopt the sub-routine in Algorithm~\ref{alg:optimisation-sub-routine-in-baseline} to solve the optimisation problem in Line~6 of Algorithm~\ref{alg:Baseline} by specifying the number of iterations steps $\textsc{iter}$, and its convergence can be established by the idea of Lemma~\ref{lemma:g_to_c_star}. Figure~~\ref{fig:synthetic-stopping-times} and Tab.~\ref{tab:synthetic-computation-times} show respectively the stopping times and computation times (in ms) incurred under $\textsc{iter} \in \{5,10,20\}$.
\begin{algorithm}[!h]
    \caption{{\sc Baseline} (Multi-objective adaptation of D-Tracking~\citep{garivier2016optimal})}
    \begin{algorithmic}[1]
        \REQUIRE ~~\\
        $K \in \mathbb{N}$: number of arms \\ 
        $\delta\in (0,1)$: confidence level.\\
        {\sc it}: number of iteration steps
        \ENSURE $\widehat{I}_\delta$: the best arms. 
         \STATE Pull each arm once. 
        
         \FOR{ $t\in \{K+1,K+2,\ldots\} $}
         \IF{$\min_{i\in [K]} N_{i,t-1} < \sqrt{(t-1) / K}$ } 
        \STATE Pull arm $A_t \in \arg\min_{i\in [K]}N_{i,t-1}$; resolve ties uniformly.     
        \ELSE    
            \STATE Compute the empirical oracle weight 
            $$
                \tilde{\omega}_{\cdot,t} = \textsc{SubRoutine}(K, \textsc{it}, \widehat{v}_t).
            $$
            \STATE Pull arm $A_t \in \arg\max_{i\in[K]} N_{i,t-1} - t \, \tilde{\omega}_{i,t}$. 
        \ENDIF
            \IF{$Z(t)>\log\big((1+\log t\big)/\delta)$}
                \STATE $\widehat{I}_\delta \leftarrow$ the empirical best arms of time $t$.
                \STATE break.
            \ENDIF
        \ENDFOR
        \RETURN Best arms 
 $\widehat{I}_\delta$.
    \end{algorithmic}
    \label{alg:Baseline}
\end{algorithm}

\begin{algorithm}[!h]
    \caption{Sub-routine to solve the optimisation in Line~6 of Algorithm~\ref{alg:Baseline} -- $\textsc{SubRoutine}(K, \textsc{it}, \widehat{v})$}
    \begin{algorithmic}[1]
        \REQUIRE ~~\\
        $K \in \mathbb{N}$: number of arms. \\
        {\sc it}: number of iteration steps. \\
        $\widehat{v}$: empirical instance.
        
        \ENSURE $\widehat{\omega}$: the oracle weight. 

        \STATE Initialise $\widehat{\omega} = [1/K, \ldots, 1/K]^\top$,  $\mathbf{\widetilde{N}}=[\widetilde{N}_i: i \in [K]]^\top = \mathbf{0}$.
        
         \FOR{ $k \in \{1, \ldots, \textsc{it}\} $}
            \STATE Compute $\mathbf{s}_k = \argmax_{\mathbf{s} \in \Gamma} \ h_{\widehat{v}}(\widehat{\omega}, \mathbf{s})$.
            \STATE Set $\mathbf{\widetilde{N}} \leftarrow \mathbf{\widetilde{N}}+ \mathbf{s}_k$
            \STATE Update $\widehat{\omega} \leftarrow \frac{\mathbf{\widetilde{N}} }{k}$.
        \ENDFOR
        \RETURN Oracle weight $\widehat{\omega}$.
    \end{algorithmic}
    \label{alg:optimisation-sub-routine-in-baseline}
\end{algorithm}
Furthermore, the threshold used in the implementation of {\sc Baseline} is equal to the single-objective empirical threshold of $\log((1+\log t)/\delta)$ used in \citet{garivier2016optimal}. Notably, this threshold remains independent of the number of objectives $M$. The rationale behind this choice stems from the fact that the ``rank'' of a multi-objective BAI problem with $M$ independent objectives is equal to the rank of the single-objective BAI problem for all values of $M$. See Appendix~\ref{appndx:mo-bai-problem-has-rank-2} for further details.

{\color{black}
As such, the {\sc Baseline} algorithm, with any value of $\textsc{iter} < +\infty$, is not asymptotically optimal (though practically implementable), while asymptotically optimal but practically not implementable for $\textsc{iter} = +\infty$. A plausible scheme to achieve asymptotic optimality, while ensuring practical feasibility, is to let $\textsc{iter}$ grow with $t$. For e.g., if $\textsc{iter}(t) = O(\log t)$, solving for $\widetilde{\omega}_{\cdot, t} = \argmax_{\omega \in \Gamma} g_{\widehat{v}_t}(\omega)$ up to a $1/\mathrm{poly}(t)$ error at time step $t$ requires $O(\log t)$ iterations. However, quantifying the exact growth rate (e.g., $\textsc{iter}(t) = O(\log(t)), O(\sqrt{t}), O(t)$, or $O(\exp(t))$) that is necessary to achieve asymptotic optimality is a technically challenging task;  the latter involves quantifying the approximation error of each subroutine as {\sc iter} grows with $t$, and ensuring that these errors amortize asymptotically as $t \to \infty$. Moreover, if $ITER(t)$ growth as $t$ (e.g., $ITER(t)=\sqrt{t}$), the number of iterations at each time step will go to infinite as $t \rightarrow \infty$, while the number of iterations at each time step is still finite as $t \rightarrow \infty$ in {\sc MO-BAI}.

Nonetheless, our experiments on both synthetic and the SNW datasets indicate that the stopping times typically remain below $10^5$, implying that $\textsc{iter} = \log_2 t \leq 20$ at all times $t$ for these datasets. As shown in Table~\ref{tab:practical-100-trials}, the {\sc MO-BAI} algorithm outperforms the {\sc Baseline} algorithm with {\sc iter} set to 20, on the SNW dataset. This suggests that, in practice, there is limited benefit to allowing {\sc iter} to grow with $t$.
}

{\color{black}
In addition, to enhance the comprehensiveness of our comparative analyses, we investigated alternative implementations of Algorithm~\ref{alg:optimisation-sub-routine-in-baseline} by modifying the initialization method of $\widehat{\omega}$ in Line~1 to $\widehat{\omega}=\tilde{\omega}_{\cdot,t-1}$ (i.e., the estimate of $\omega$ from the previous time step) at time step $t$ instead of setting it to be the uniform distribution $[1/K, \ldots, 1/K]^\top$. We call this variant {\sc Baseline-non-unif}, and present the experimental outcomes in Table~\ref{tab:practical-100-trials}. The empirical findings underscore that  our  proposed {\sc MO-BAI} is  significantly better than  {\sc Baseline-non-uniform} on the SNW dataset.
}

{\color{black}
\begin{algorithm}[!h]
    \caption{{\sc MO-SE} (Multi-objective adaptation of Successive Elimination~\citep{even2006action})}
    \begin{algorithmic}[1]
        \REQUIRE ~~\\
        $K \in \mathbb{N}$: number of arms \\ 
        $\delta\in (0,1)$: confidence level.\\
        {\sc it}: number of iteration steps
        \ENSURE $\widehat{I}_\delta$: the best arms. 
          \STATE Initialize $t=0$
         \FOR{ $ m\in \{1,2,\ldots, M\} $}
         \STATE Set $\mathcal{S}=\{1,2,\ldots, K\}$
         \WHILE{ $\lvert \mathcal{S} \rvert >1$ }
         \STATE Pull each arm in $\mathcal{S}$ once.
         \STATE $t \leftarrow t + \lvert \mathcal{S} \rvert$
         \STATE $\alpha_t=\sqrt{2\ln \left(4 MK t^2 / \delta\right) / t}$
         \STATE Eliminate all the arms $i$ in $\mathcal{S}$ with $\max_{j\in \mathcal{S}} \widehat{\mu}_{j,m}(t) - \widehat{\mu}_{i,m}(t) > 2\alpha_t$
         \ENDWHILE
        \STATE $\widehat{i}_m \leftarrow $ the only arm in $\mathcal{S}$
        \ENDFOR
        \RETURN Best arms 
 $\widehat{I}_\delta = (\widehat{i}_1,\ldots, \widehat{i}_M)$.
    \end{algorithmic}
    \label{alg:MO-SE}
\end{algorithm}

\subsection{Multi-Objective Successive Elimination ({\sc MO-SE}) and its Implementation Details}
We also implement a multi-objective version of a classical algorithm for BAI--Successive Elimination~\citep{even2006action}. This algorithm, which we call Multi-Objective Successive Elimination ({\sc MO-SE})  is shown formally in Algorithm~\ref{alg:MO-SE}. Specifically, in {\sc MO-SE}, there are $M$ rounds, and we determine the empirical best arm of $m-$th objective in $m-$th round using the principle of successive elimination. In particular, we set $\alpha_t=\sqrt{2\ln \left(4 MK t^2 / \delta\right) / t}$ in Algorithm~3 of~\citet{even2006action}, which is a natural adaption to the multiobjective case as there are total $MK$ arms and the noises are Gaussian with unit variance in our setting. 
From a theoretical standpoint, {\sc MO-SE} is not (asymptotically) optimal even in the case of $M=1$, which is clearly inferior to our {\sc MO-BAI}. This can also be clearly seen via
the experimental results  shown in Table~\ref{tab:practical-100-trials}, which again empirically underscores the superiority of our proposed {\sc MO-BAI} over all considered baselines.
}

\subsection{Impact of \texorpdfstring{$\eta$}{eta} on Performance of {\sc MO-BAI}}
We run {\sc MO-BAI} on the synthetic dataset introduced in Section~\ref{sec:numerical} for $\eta \in \{2.0,1.0,0.5,0.1\}$. The results are shown in Figure~~\ref{fig:synthetic_eta}.
We observe that the performance for $\eta=0.1$ is superior to that for $\eta > 0.1$, This observation aligns with our theoretical findings. Indeed, because $N_{i,t}/t \approx \widehat{\omega}_{i,t}$ for all large $t$ (noting that $|B_{i,t}| \leq 1$), and $\min_{i \in [K]} \widehat{\omega}_{i,t} \geq \frac{\eta}{K(1+\eta)}$, it is evident that the fraction of times each arm is pulled in the long run increases with increase in $\eta$ (as $\eta \mapsto \eta/(1+\eta)$ is an increasing function), thereby leading to larger stopping times.  Furthermore, when the stopping time is not excessively large, the performances of $\eta=0.1$ and $\eta=0.5$ empirically demonstrate a notable degree of similarity. This phenomenon may be attributed to the empirical mean necessitating a greater number of arm pulls for stabilization. 

\begin{figure}[!ht]
    \centering
    \includegraphics[width = .7\columnwidth]{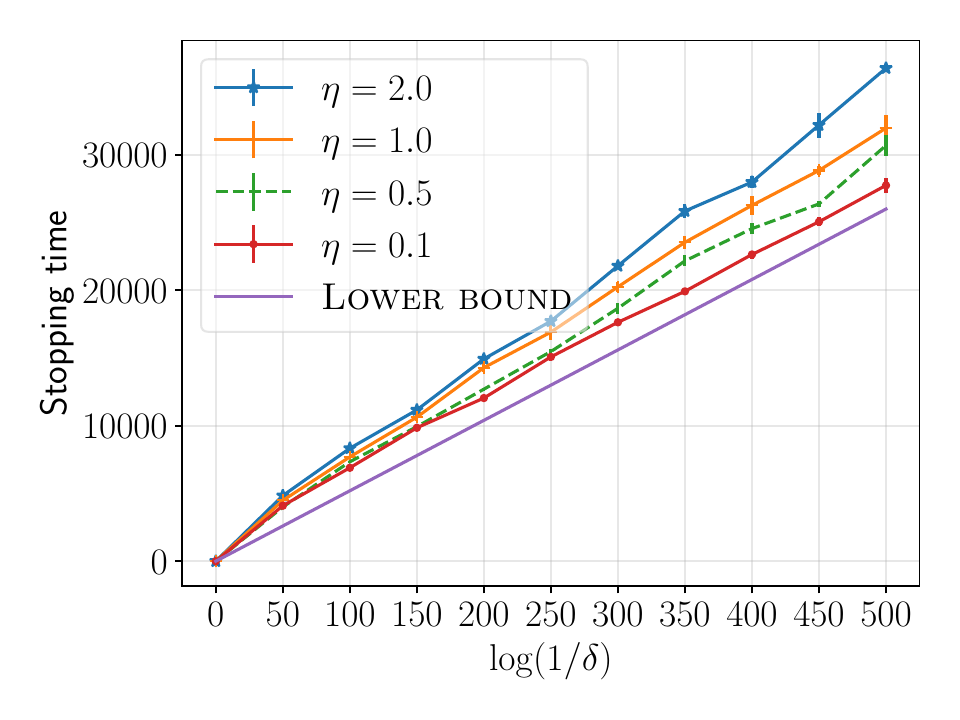}
    \caption {Comparison of the empirical stopping times with varying values of $\eta$ for the synthetic dataset.}
    \label{fig:synthetic_eta}
\end{figure}

\section{Multi-Objective BAI vs Pareto Frontier Identification}
\label{sec:differences_with_pfi}
Our research framework shares structural similarities with those used in Pareto frontier identification, yet the tasks of our investigation are distinctly different. For clarity and ease of illustration in this section, we consider that the arm means are distinct across each objective, specifically, $\mu_{i,m} \neq \mu_{j,m}$ for all $m \in [M]$ and $i \neq j$. Then, according to the extant literature~\citep{auer2016pareto,ararat2023vector,kim2023pareto} on Pareto frontier identification, an arm $i$ is defined as Pareto optimal if, for every other arm $j$ where $j \neq i$, there exists at least one objective $m \in [M]$ for which $\mu_{i,m} > \mu_{j,m}$. Consequently, under this definition, the best arm for each objective (as defined in our paper) is inherently Pareto optimal.

However, typical Pareto Frontier Identification algorithms are designed merely to determine the Pareto optimality of each arm without the capability to specifically identify the optimal arm for any given objective $m \in [ M]$. Conversely, while our proposed algorithm effectively ascertains the Pareto optimality of the best arm {\em for each objective}, it does not ensure the identification of all Pareto optimal arms.

Thus, the nature of our research task diverges fundamentally from that of traditional Pareto Frontier Identification, reflecting distinct analytical goals and methodological requirements.

\section{Rank of Multi-Objective Best Arm Identification}
\label{appndx:mo-bai-problem-has-rank-2}

In this section, we show that the rank of multi-objective BAI problem with $M$ independent objectives is equal to $2$ for all values of $M$; notably, this is also the rank of the single-objective BAI problem. Before we present the formal arguments, we reproduce the definition of rank of a pure exploration problem from \citet{kaufmann2021mixture}.

\begin{definition}\citep[Definition~20]{kaufmann2021mixture}
    \label{def:rank-of-pure-exploration-problem}
    Fix constants $d, P, Q, R \in \mathbb{N}$. A sequential identification problem specified by a partition $\mathcal{O}=\bigcup_{p=1}^{P} \mathcal{O}_p$, where $O_p \subseteq \mathbb{R}^d$ for all $p$, is said to have rank $R$ if for every $p \in \{1, \ldots, P\}$,
    \begin{equation}
        \mathcal{O} \setminus \mathcal{O}_p = \bigcup_{q=1}^{Q} \bigg\{\boldsymbol{\lambda} \in \mathbb{R}^d: (\lambda_{k_1^{p,q}}, \ldots, \lambda_{k_R^{p,q}}) \in \mathcal{L}_{p, q}\bigg\}
        \label{eq:rank-definition}
    \end{equation}
    for a family of indices $k_r^{p,q} \in [d]$ and open sets $\mathcal{L}_{p,q}$ indexed by $r \in [R]$, $q \in [Q]$, and $p \in [P]$. In other words, the problem has rank $R$ if for each $p$, the set $\mathcal{O} \setminus \mathcal{O}_p$ is a finite union of sets that are each defined in terms of only $R$-tuples.
\end{definition}

\subsection{Rank of Single-Objective BAI Problem}

Consider the classical BAI problem with a single objective $(M=1)$ and $K$ arms, specified by a problem instance $\boldsymbol{\mu} = [\mu_1, \ldots, \mu_K]^\top \in \mathbb{R}^K$. In this case, we have $d=P=K$, and for each $p \in [K]$, the set $\mathcal{O}_p = \{\boldsymbol{\lambda} \in \mathbb{R}^d: a^\star(\boldsymbol{\lambda})=p\}$ consists of all problem instances with best arm $p$. Furthermore, for each $p \in [K]$,
$$
    \mathcal{O} \setminus \mathcal{O}_p = \bigcup_{q \neq p} \bigg\{\boldsymbol{\lambda} \in \mathbb{R}^K: \lambda_q > \lambda_p\bigg\},
$$
thereby implying that rank $R=2$.

\subsection{Rank of Multi-Objective BAI Problem}
Consider now a multi-objective BAI problem with $M$ independent objectives and $K$ arms. In this case, a problem instance is specified by $\boldsymbol{\mu} = [\mu_{i,m}: (i, m) \in [K] \times [M]] \subseteq \mathbb{R}^{KM}$. We thus have $d = KM$. Also, for any $p = [p_1, \ldots, p_M]^\top \in [K]^M$, we have 
$$
    \mathcal{O}_p = \{\boldsymbol{{\lambda}} \in \mathbb{R}^{KM}:~ \forall m \in [M], ~~ p_m = \text{best arm of objective }m \text{ under the instance }\boldsymbol{\lambda}\},
$$
thereby implying that $P = K^{M}$. Furthermore,
\begin{equation}
    \mathcal{O} \setminus \mathcal{O}_p = \bigcup_{m=1}^{M} \ \bigcup_{q = [q_1, \ldots, q_M]^\top \in [K]^M} \bigg\{\boldsymbol{\lambda} = [\lambda_{i,m}: (i, m) \in [K] \times [M]]^\top \in \mathbb{R}^{KM}: \lambda_{q_m, m} > \lambda_{p_m, m}\bigg\},
    \label{eq:rank-of-mobai}
\end{equation}
thereby implying that $R=2$.
\section{Proof of Proposition~\ref{prop:lower_bound}}
\label{appndx:proof-of-lower-bound}

Firstly, we introduce a useful lemma adapted from~\citet{Kaufmann2016}.
\begin{lemma}
\label{lemma:useful-lemma}
Fix $\delta>0$ and a $\delta$-PAC policy $\pi$ with stopping time $\tau_\delta$. Let $\mathcal{F}_{\tau_\delta} = \sigma(\{(X_{A_t,m}(t), \, A_t) : t\in [\tau_\delta], m\in[M]\})$ denote the history of all the arm pulls and rewards seen up to the stopping time $\tau_\delta$ under the policy $\pi$. Then, for any pair of instances $v, v' \in \mathcal{P}$ with arm means $\{\mu_{i,m}: i \in [K], m \in [M]\}$ and $\{\mu_{i,m}^\prime: i \in [K], m \in [M]\}$ respectively, and any $\mathcal{F}_{\tau_\delta}$-measurable event $E$,
\begin{align}
	\sum_{i=1}^K \ \sum_{m=1}^M \ \mathbb{E}_{v}^\pi\left[N_{i,\tau_\delta}\right] \, \frac{(\mu_{i,m} - \mu_{i,m}^\prime)^2}{2} \ge  d_{\mathrm{KL}}\left(\mathbb{P}_{v}^\pi(E), \mathbb{P}_{{v'}}^{\pi}(E)\right),
	\label{eq:transportation-lemma}
\end{align} 
where $D_{\mathrm{KL}}(p \| q)$ denotes the Kullback--Leibler (KL) divergence between distributions $p$ and $q$, and $d_{\mathrm{KL}}(x,y)$ denotes the KL divergence between two Bernoulli distributions with parameters $x$ and $y$. 
\end{lemma}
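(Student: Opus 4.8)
The plan is to follow the change-of-measure template of \citet{Kaufmann2016}, adapting it to the multi-objective Gaussian setting in which a single pull of an arm produces $M$ \emph{independent} unit-variance Gaussian observations, one per objective. The backbone is an identity expressing the expected log-likelihood ratio of the observed data (under $\mathbb{P}_v^\pi$ relative to $\mathbb{P}_{v'}^\pi$) as a weighted sum of per-pull Kullback--Leibler divergences, combined with the data-processing inequality that contracts this divergence onto the single event $E$.

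First I would write out the log-likelihood ratio of the trajectory up to the stopping time. Since the policy selects $A_t$ as a (possibly randomized) function only of the shared history and of $\delta$, the arm-selection probabilities coincide under $\mathbb{P}_v^\pi$ and $\mathbb{P}_{v'}^\pi$ and cancel in the ratio; only the reward densities contribute. Writing $f_\mu$ for the density of $\mathcal{N}(\mu,1)$ and using the independence of the $M$ objectives together with the conditional independence of rewards across time, the ratio factorizes as
\begin{equation*}
    L_{\tau_\delta} \coloneqq \log \frac{\mathrm{d}\mathbb{P}_v^\pi}{\mathrm{d}\mathbb{P}_{v'}^\pi}\bigg|_{\mathcal{F}_{\tau_\delta}} = \sum_{t=1}^{\tau_\delta} \sum_{m=1}^{M} \log \frac{f_{\mu_{A_t,m}}(r_{t,m})}{f_{\mu'_{A_t,m}}(r_{t,m})}.
\end{equation*}
Grouping the summands by the arm pulled, the contribution of each pair $(i,m)$ consists of exactly $N_{i,\tau_\delta}$ i.i.d.\ increments, each having mean $D_{\mathrm{KL}}(\mathcal{N}(\mu_{i,m},1)\,\|\,\mathcal{N}(\mu'_{i,m},1)) = (\mu_{i,m}-\mu'_{i,m})^2/2$ under $\mathbb{P}_v^\pi$, the last equality being the closed form of the KL divergence between unit-variance Gaussians.

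Next I would apply Wald's identity to each pair $(i,m)$ to obtain
\begin{equation*}
    \mathbb{E}_v^\pi[L_{\tau_\delta}] = \sum_{i=1}^K \sum_{m=1}^M \mathbb{E}_v^\pi[N_{i,\tau_\delta}] \, \frac{(\mu_{i,m}-\mu'_{i,m})^2}{2},
\end{equation*}
which is precisely the left-hand side of \eqref{eq:transportation-lemma}. It then remains to lower-bound $\mathbb{E}_v^\pi[L_{\tau_\delta}]$ by the right-hand side. Observing that $\mathbb{E}_v^\pi[L_{\tau_\delta}]$ is exactly the relative entropy between $\mathbb{P}_v^\pi$ and $\mathbb{P}_{v'}^\pi$ restricted to $\mathcal{F}_{\tau_\delta}$, and that $\mathbf{1}_E$ is an $\mathcal{F}_{\tau_\delta}$-measurable binary statistic, the data-processing inequality for KL divergence applied to the coarsening $\omega \mapsto \mathbf{1}_E(\omega)$ yields $\mathbb{E}_v^\pi[L_{\tau_\delta}] \ge d_{\mathrm{KL}}(\mathbb{P}_v^\pi(E), \mathbb{P}_{v'}^\pi(E))$. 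Chaining the two displays gives \eqref{eq:transportation-lemma}.

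The main obstacle will be the rigorous justification of the Wald step at the random stopping time $\tau_\delta$: the count $N_{i,\tau_\delta}$ is generally not a stopping time in the filtration generated by the successive pulls of arm $i$ alone, so the elementary version of Wald's lemma does not apply directly. I would instead verify that $L_t - \sum_{i,m} N_{i,t}\,(\mu_{i,m}-\mu'_{i,m})^2/2$ is a $\mathbb{P}_v^\pi$-martingale in the trajectory filtration and invoke optional stopping together with a monotone/dominated convergence argument to pass to $\tau_\delta$, exactly as in \citet{Kaufmann2016}; the inequality is trivial when $\mathbb{E}_v^\pi[\tau_\delta] = +\infty$, so the content lies in the finite-expectation regime. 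A minor secondary point is confirming that the independence of the objectives is what turns the single-pull log-density ratio into the inner sum over $m \in [M]$, so that the per-pull divergence is the \emph{sum} of the $M$ scalar Gaussian KL terms.
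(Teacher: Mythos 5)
Your proposal is correct and is essentially the paper's own approach: the paper omits the proof, stating that it follows the same lines as \citet[Lemma~19]{Kaufmann2016}, and your argument (log-likelihood ratio factorization with cancelling selection probabilities, the Wald-type identity via the martingale $L_t - \sum_{i,m} N_{i,t}(\mu_{i,m}-\mu'_{i,m})^2/2$ with optional stopping, then data processing applied to the coarsening $\omega \mapsto \mathbf{1}_E(\omega)$) is precisely that standard transportation argument, correctly adapted so that independence across objectives makes the per-pull divergence the sum of the $M$ scalar Gaussian KL terms. Your handling of the stopping-time subtlety (that $N_{i,\tau_\delta}$ is not a stopping time for arm $i$'s own observation filtration) matches how \citet{Kaufmann2016} resolve it, so nothing is missing.
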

The proof of Lemma \ref{lemma:useful-lemma} follows along the same lines as in the proof of~\citet[Lemma 19]{Kaufmann2016}, and is hence omitted. We then note the following lower bound derived from 
Lemma~\ref{lemma:useful-lemma}.

\begin{lemma}
\label{lemma:lower_bound_0}
    Fix $\delta>0$ and instance $v \in \mathcal{P}$. For any $\delta$-PAC policy $\pi$ with stopping time $\tau_\delta$,
    $$
        \mathbb{E}_{v}^{\pi}[\tau_\delta] \ge \dfrac{\log(\frac{1}{4\delta})}{\sup_{\omega \in \Gamma} \  \inf_{v'\in {\rm Alt}(v)}\! \sum_{i=1}^{K} \,  \omega_{i} \, \sum_{m=1}^M \frac{\big(\mu_{i,m}(v)\!-\!\mu_{i,m}(v')\big)^2}{2}},
    $$
    where ${\rm Alt}(v)$ denotes the set of problem instances with a set of best arms distinct from the set of best arm under $v$.
\end{lemma}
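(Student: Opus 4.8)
The plan is to derive \textbf{Lemma~\ref{lemma:lower_bound_0}} directly from the transportation inequality in \textbf{Lemma~\ref{lemma:useful-lemma}} by a standard change-of-measure argument combined with an introduction of sampling proportions. First I would fix an arbitrary confusing instance $v' \in {\rm Alt}(v)$, i.e.\ an instance whose set of best arms differs from $I^*(v)$. The key observation is that since $\pi$ is $\delta$-PAC, the recommendation under $v$ satisfies $\mathbb{P}_v^\pi(\widehat{I}_\delta = I^*(v)) \ge 1-\delta$, whereas under $v'$ the correct answer is $I^*(v') \neq I^*(v)$, so $\mathbb{P}_{v'}^\pi(\widehat{I}_\delta = I^*(v)) \le \mathbb{P}_{v'}^\pi(\widehat{I}_\delta \neq I^*(v')) \le \delta$. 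I would take the $\mathcal{F}_{\tau_\delta}$-measurable event $E = \{\widehat{I}_\delta = I^*(v)\}$ in \eqref{eq:transportation-lemma}.

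The next step is to lower bound the binary KL term $d_{\mathrm{KL}}(\mathbb{P}_v^\pi(E), \mathbb{P}_{v'}^\pi(E))$. Using $\mathbb{P}_v^\pi(E) \ge 1-\delta$ and $\mathbb{P}_{v'}^\pi(E) \le \delta$, together with the monotonicity of $d_{\mathrm{KL}}(x,y)$ in each argument and the well-known estimate $d_{\mathrm{KL}}(1-\delta, \delta) \ge \log\frac{1}{2.4\delta} \ge \log\frac{1}{4\delta}$ (cf.\ the bound $d_{\mathrm{KL}}(x,y)\ge \log\frac{1}{2y}$ used by \citet{garivier2016optimal,Kaufmann2016}), I obtain
\begin{equation}
    \sum_{i=1}^K \sum_{m=1}^M \mathbb{E}_v^\pi[N_{i,\tau_\delta}] \, \frac{(\mu_{i,m}(v)-\mu_{i,m}(v'))^2}{2} \ge \log\Big(\frac{1}{4\delta}\Big).
    \label{eq:pre-optimization-lower-bound}
\end{equation}
Since $v' \in {\rm Alt}(v)$ was arbitrary, the left-hand side may be minimized over all $v' \in {\rm Alt}(v)$ while the inequality is preserved.

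It then remains to convert the expected pull-counts $\mathbb{E}_v^\pi[N_{i,\tau_\delta}]$ into a normalized proportion. Writing $\mathbb{E}_v^\pi[\tau_\delta] = \sum_{i=1}^K \mathbb{E}_v^\pi[N_{i,\tau_\delta}]$ and setting $\omega_i \coloneqq \mathbb{E}_v^\pi[N_{i,\tau_\delta}]/\mathbb{E}_v^\pi[\tau_\delta]$, which is a valid element of $\Gamma$, I factor out $\mathbb{E}_v^\pi[\tau_\delta]$ from \eqref{eq:pre-optimization-lower-bound}. Taking the infimum over $v' \in {\rm Alt}(v)$ on this specific $\omega$ and then bounding that by the supremum over all $\omega \in \Gamma$ yields
\begin{equation}
    \mathbb{E}_v^\pi[\tau_\delta] \cdot \sup_{\omega \in \Gamma} \inf_{v' \in {\rm Alt}(v)} \sum_{i=1}^K \omega_i \sum_{m=1}^M \frac{(\mu_{i,m}(v)-\mu_{i,m}(v'))^2}{2} \ge \log\Big(\frac{1}{4\delta}\Big),
    \label{eq:post-optimization-lower-bound}
\end{equation}
which rearranges to the claimed bound. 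I expect the main obstacle to be purely a matter of care rather than depth: ensuring the direction of the supremum/infimum interchange is valid (here it is immediate because the particular $\omega$ induced by the policy is a feasible point for the sup, so its value lower bounds the sup) and tracking the exact constant $\log\frac{1}{4\delta}$ through the Bernoulli KL estimate. The subsequent identification of the inner sup-inf value with the explicit gap-based expression $c^*(v)^{-1}$ in \eqref{eq:cstar} — by solving the inner optimization over $v' \in {\rm Alt}(v)$ in closed form, where the worst-case alternative perturbs a single suboptimal arm past the best arm of one objective — is the heavier computation, but it would be carried out in the subsequent step of the proof of Proposition~\ref{prop:lower_bound} rather than in Lemma~\ref{lemma:lower_bound_0} itself.
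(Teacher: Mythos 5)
Your proposal is correct and follows essentially the same route as the paper's own proof: apply the transportation inequality (Lemma~\ref{lemma:useful-lemma}) to the event $E^* = \{\widehat{I}_\delta = I^*(v)\}$, lower-bound the binary KL term by $\log\frac{1}{4\delta}$ via the $\delta$-PAC guarantees, normalise the expected pull counts by $\mathbb{E}_v^\pi[\tau_\delta]$ to obtain a point $\bar{\omega} \in \Gamma$, and then pass from the infimum over $v' \in \mathrm{Alt}(v)$ at $\bar{\omega}$ to the supremum over $\Gamma$. Your deferral of the closed-form simplification to $c^*(v)^{-1}$ also matches the paper, which handles that separately in Lemma~\ref{lemma:c_star_simplicatoin}.
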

\begin{proof}
Fix a $\delta$-PAC policy $\pi$. Let
$$
E^* \coloneqq \{ \widehat{I}_\delta = I^*(v) \},
$$
where $\widehat{I}_\delta$ denotes the set of best arms (one for each objective) output by policy $\pi$ at stoppage. By Lemma~\ref{lemma:useful-lemma}, we have 
\begin{align}
  \sum_{m=1}^M \  \sum_{i=1}^K \mathbb{E}_{v}^\pi\left[N_{i,\tau_\delta} \right] \,  \frac{(\mu_{i,m}(v)-\mu_{i,m}(v'))^2}{2} \ge  d_{\mathrm{KL}}\left(\mathbb{P}_{v}^\pi(E^*), \mathbb{P}_{{v'}}^{\pi}(E^*)\right).\label{eq:lowbound_c1}
  \end{align}
Because $\pi$ is $\delta$-PAC, we have  $\mathbb{P}_{v}^\pi(E^*) \ge 1-\delta$ and $\mathbb{P}_{v'}^\pi(E^*) \le \delta$. This in turn implies that
\begin{equation}
\label{eq:lowbound_c2}
    d_{\mathrm{KL}}\left(\mathbb{P}_{v}^\pi(E^*), \mathbb{P}_{{v'}}^{\pi}(E^*)\right) \ge \log \Big(\frac{1}{4\delta}\Big).
\end{equation}
Combining~\eqref{eq:lowbound_c1} and~\eqref{eq:lowbound_c2}, we obtain
  \begin{align}
   \sum_{m=1}^M \  \sum_{i=1}^K \mathbb{E}_{v}^\pi\left[N_{i,\tau_\delta} \right] \, \frac{(\mu_{i,m}(v)-\mu_{i,m}(v'))^2}{2} \ge  \log\Big(\frac{1}{4\delta}\Big) \label{proof_lowerboud:step1},
\end{align}

Letting $\bar{\omega}_{i} \coloneqq \frac{\mathbb{E}_{v}^\pi\left[N_{i,\tau_\delta} \right]}{\mathbb{E}_{v}^\pi[ \tau_\delta]} $, we have
\begin{equation}
    \mathbb{E}_{v}^\pi [ \tau_\delta ]   \ge  
     \frac{\log(\frac{1}{4\delta})}{ \sum_{m=1}^M\  \sum_{i=1}^K \bar{\omega}_{i} \frac{(\mu_{i,m}(v)-\mu_{i,m}(v'))^2}{2}}. 
     \label{proof_lowerboud:step2}
\end{equation}
Noting that~\eqref{proof_lowerboud:step2} holds for any $v' \in {\rm Alt}(v)$, we have
\begin{equation}
\mathbb{E}_{v}^\pi [\tau_\delta]  \ge  
  \frac{\log(\frac{1}{4\delta})}{\inf_{v' \in {\rm Alt}(v)} \sum_{m=1}^M\  \sum_{i=1}^K \bar{\omega}_{i} \frac{(\mu_{i,m}(v)-\mu_{i,m}(v'))^2}{2}} . \label{proof_lowerboud:step3}
\end{equation}
Finally, using the fact that $\tilde{\omega} = [\tilde{\omega}_i: i \in [K]]^\top \in \Gamma$, we get
\begin{equation}
\mathbb{E}_{v}^\pi[  \tau_\delta]   \ge  
  \frac{\log(\frac{1}{4\delta})}{\sup_{\omega \in \Gamma} \inf_{v' \in {\rm Alt}(v)} \sum_{m=1}^M\  \sum_{i=1}^K \omega_{i} \frac{(\mu_{i,m}(v)-\mu_{i,m}(v'))^2}{2}}. \label{proof_lowerboud:step4}
\end{equation} 
 This completes the proof.
\end{proof}

\begin{lemma}
\label{lemma:c_star_simplicatoin}
    Fix $v\in \mathcal{P}$ with arm means $\{\mu_{i,m}: i \in [K], m \in [M]\}$. For any $\mathbf{\omega} \in \Gamma$,
    \begin{equation}
        g_v(\omega) =   \inf_{v'\in {\rm Alt}(v)} \ \sum_{i=1}^{K} \,  \omega_{i} \sum_{m=1}^M \frac{\big(\mu_{i,m} -\!\mu_{i,m}^\prime\big)^2}{2},
        \label{eq:simplication}
    \end{equation}
    where $\mu_{i,m}^\prime$ denotes the mean of arm $i$ corresponding to objective $m$ under the instance $v'$, and $g_v(\cdot)$ is defined in~\eqref{eq:g-v-omega}.
\end{lemma}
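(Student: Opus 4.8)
The plan is to compute the infimum on the right-hand side of \eqref{eq:simplication} in closed form and show it equals $g_v(\omega)$. The starting point is a structural description of ${\rm Alt}(v)$: an instance $v'$ fails to share the best-arm collection of $v$ precisely when there is some objective $m \in [M]$ and some arm $j \neq i_m^*(v)$ that is at least as good as $i_m^*(v)$ in objective $m$. Writing $\mathcal{A}_{m,j} \coloneqq \{v' : \mu_{j,m}' \geq \mu_{i_m^*(v),m}'\}$, I would argue that ${\rm Alt}(v) = \bigcup_{m \in [M]} \bigcup_{j \neq i_m^*(v)} \mathcal{A}_{m,j}$ up to a boundary (strict versus non-strict inequality) that does not change the infimum. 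Consequently the infimum over ${\rm Alt}(v)$ becomes a minimum over the finitely many pairs $(m,j)$ of the infimum over each $\mathcal{A}_{m,j}$.

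First I would fix a pair $(m,j)$ and minimise $\sum_i \omega_i \sum_{m'} \frac{(\mu_{i,m'} - \mu'_{i,m'})^2}{2}$ over $v' \in \mathcal{A}_{m,j}$. The key observation is that the objective is a sum of non-negative terms, one per coordinate $(i,m')$, whereas the constraint defining $\mathcal{A}_{m,j}$ involves only the two coordinates $(j,m)$ and $(i_m^*(v), m)$. Hence every other coordinate can be left equal to its value under $v$ at zero cost, reducing the problem to the two-variable program
\begin{align}
\inf_{b \geq a} \ \omega_{i_m^*(v)} \frac{(\mu_{i_m^*(v),m} - a)^2}{2} + \omega_j \frac{(\mu_{j,m} - b)^2}{2},
\end{align}
where $a = \mu'_{i_m^*(v),m}$ and $b = \mu'_{j,m}$.

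Next I would solve this convex program. Since $\mu_{i_m^*(v),m} > \mu_{j,m}$, the unconstrained minimiser violates $b \geq a$, so the constraint is active and $a = b = c$ at the optimum; minimising the resulting single-variable quadratic gives the weighted mean $c = (\omega_{i_m^*(v)}\mu_{i_m^*(v),m} + \omega_j \mu_{j,m})/(\omega_{i_m^*(v)} + \omega_j)$, and substituting back yields $\frac{\Delta_{j,m}^2(v)}{2}\cdot\frac{\omega_j \, \omega_{i_m^*(v)}}{\omega_j + \omega_{i_m^*(v)}}$. Taking the minimum over all $(m,j)$ then reproduces exactly $g_v(\omega)$ as defined in \eqref{eq:g-v-omega}. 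I would separately verify the degenerate cases $\omega_j = 0$ and/or $\omega_{i_m^*(v)} = 0$: when a weight vanishes, one of the two means can be moved freely at no cost, so the infimum is $0$, matching the stated convention $\frac{\omega_i \, \omega_{i_m^*(v)}}{\omega_i + \omega_{i_m^*(v)}} = 0$.

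The main obstacle is the decomposition step rather than the two-variable calculus: I must justify that the cheapest alternative perturbs only a single objective and only the pair of arms entering one active constraint, and carefully handle the interchange of infimum with the union together with the open/non-open boundary, so that no alternative cheaper than the per-$(m,j)$ values is overlooked. The degenerate zero-weight cases likewise require care to align the infimum with the stated convention.
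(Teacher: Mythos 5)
Your proposal is correct and follows essentially the same route as the paper's proof: the same decomposition of ${\rm Alt}(v)$ into the finitely many sets indexed by pairs $(m,j)$ with $j \neq i_m^*(v)$, the same reduction to a two-coordinate program by leaving all other means untouched, and the same optimal value $\frac{\Delta_{j,m}^2(v)}{2}\cdot\frac{\omega_j\,\omega_{i_m^*(v)}}{\omega_j+\omega_{i_m^*(v)}}$. The only cosmetic difference is that the paper solves the two-variable program via Lagrange multipliers while you invoke the active-constraint/weighted-mean argument (and you additionally spell out the zero-weight degenerate cases, which the paper leaves to its stated convention); both yield identical optimisers.
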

\begin{proof}
Note that
\begin{align} 
    & \inf_{v'\in {\rm Alt}(v)} \  \sum_{m=1}^{M} \  \sum_{i=1}^K  \omega_i\frac{\left(\mu_{i,m}-\mu_{i,m}^\prime\right)^2}{2} \nonumber\\
    &= \min_{m \in [M]}  \min_{i\in[K] \setminus i^*_m(v)} \  \inf_{v':\mu_{i,m}^\prime > \mu_{i^*_m(v),m}^\prime} \  \sum_{\jmath=1}^{M}\  \sum_{\imath=1}^K  \omega_\imath\frac{(\mu_{\imath,\jmath}-\mu_{\imath,\jmath}^\prime)^2}{2} \nonumber\\
    &= \min_{m \in [M]} \ \min_{i\in[K] \setminus i^*_m(v)} \  \inf_{v':\mu_{i,m}^\prime \ge \mu_{i^*_m(v),m}^\prime}\   \left[\omega_i \, \frac{(\mu_{i,m}-\mu_{i,m}^\prime)^2}{2} +\omega_{i^*_m(v)}\frac{(\mu_{i^*_m(v),m}-\mu_{i^*_m(v),m}^\prime)^2}{2}\right] \label{eq:g_v(omega)-simplification_last}\\
    &= \min_{m\in [M]} \ \min_{i\in[K] \setminus i^*_m(v) } \ \frac{\Delta_{i,m}^2(v)}{2} \, \left(\frac{\omega_i \, \omega_{i^*_m(v)}}{\omega_i+\omega_{i^*_m(v)}} \right)
    \label{eq:g_v(omega)-simplification} \\
    &= g_v(\omega), \nonumber
\end{align}
where \eqref{eq:g_v(omega)-simplification} follows by using the method of Lagrange multipliers and noting that the inner infimum in \eqref{eq:g_v(omega)-simplification_last} is attained at 
\begin{align}
    \mu_{i,m}^\prime
    &= \mu_{i,m} + \left(\mu_{i^*_m(v),m}-\mu_{i,m} \right)\frac{\omega_{i^*_m(v)}}{\omega_i+\omega_{i^*_m(v)}}  ,\nonumber\\
    \mu_{i^*_m(v),m}^\prime 
    &= \mu_{i^*_m(v),m} - \left(\mu_{i^*_m(v),m} - \mu_{i,m} \right) \, \frac{\omega_{i}}{\omega_i+\omega_{i^*_m(v)}} .
    \label{eq:optimisers-of-inner-infimum}
\end{align}
This completes the proof.
\end{proof}

\begin{proof}[Proof of Proposition~\ref{prop:lower_bound}]

Finally, by combining Lemmas~\ref{lemma:lower_bound_0} and~\ref{lemma:c_star_simplicatoin}, we see that Theorem~\ref{prop:lower_bound} holds.
\end{proof}

\section{Proof of Proposition \ref{prop:delta_pac}}
\label{sec:proof_of_delta_pac}
Below, we first record some useful results that will be used in the proof.

\begin{lemma}{\citep[Lemma 33.8]{lattimore_szepesvari_2020}}
    Let $Y_1,Y_2,\ldots$ be independent Gaussian random variables with mean $\mu$ and unit variance. Let $\widehat{\mu}_n \coloneqq \frac{1}{n} \sum_{i=1}^n Y_i$. Then,
    \[
    \mathbb{P} \left( \exists\,  n \in \mathbb{N}: \frac{n}{2}(\widehat{\mu}_n-\mu)^2 \ge \log(1/\delta) + \log(n(n+1)) \right) \le \delta.
    \]
    \label{lemma:latimaore_338}
\end{lemma}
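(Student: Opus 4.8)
The plan is to reduce this time-uniform (anytime) statement to a per-$n$ tail bound for a single Gaussian average, and then pay for the uniformity over $n$ by a union bound whose cost is exactly absorbed by the additive $\log(n(n+1))$ term. First I would fix $n \in \mathbb{N}$ and record that, since $Y_1,\ldots,Y_n$ are i.i.d.\ $\mathcal{N}(\mu,1)$, the average $\widehat{\mu}_n$ is exactly $\mathcal{N}(\mu, 1/n)$; hence $Z_n \coloneqq \sqrt{n}\,(\widehat{\mu}_n - \mu)$ is a standard normal random variable and $\tfrac{n}{2}(\widehat{\mu}_n - \mu)^2 = \tfrac{1}{2}Z_n^2$. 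For the threshold $x_n \coloneqq \log(1/\delta) + \log(n(n+1))$, the event inside the probability is therefore precisely $\{\,|Z_n| \ge \sqrt{2x_n}\,\}$.

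Next I would invoke the sharp two-sided Gaussian tail bound $\mathbb{P}(|Z| \ge a) \le e^{-a^2/2}$, valid for every $a \ge 0$ and standard normal $Z$. Applying it with $a = \sqrt{2x_n}$ gives $\mathbb{P}\big(\tfrac{n}{2}(\widehat{\mu}_n-\mu)^2 \ge x_n\big) \le e^{-x_n} = \delta/(n(n+1))$ for each fixed $n$. The one delicate point is the \emph{constant}: the naive route of splitting $\{|Z_n|\ge a\}$ into its two one-sided halves and applying $\mathbb{P}(Z_n \ge a) \le e^{-a^2/2}$ to each would cost an extra factor of $2$ and yield only $2\delta$ at the end. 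I would therefore use the strengthened one-sided estimate $\mathbb{P}(Z \ge a) \le \tfrac{1}{2}e^{-a^2/2}$ (equivalently the two-sided form above), which is exactly what makes the stated threshold with $\log(1/\delta)$, rather than $\log(2/\delta)$, suffice; this inequality is verified by checking that $a \mapsto \tfrac{1}{2}e^{-a^2/2} - (1-\Phi(a))$ vanishes at $a=0$, rises, and then decays back to $0$, so it stays nonnegative.

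Finally I would take a union bound over all $n \in \mathbb{N}$ and telescope: $\mathbb{P}\big(\exists\, n : \tfrac{n}{2}(\widehat{\mu}_n-\mu)^2 \ge x_n\big) \le \sum_{n=1}^{\infty} \frac{\delta}{n(n+1)} = \delta \sum_{n=1}^{\infty}\big(\tfrac{1}{n} - \tfrac{1}{n+1}\big) = \delta$, which closes the argument. The only genuine obstacle is securing the sharp constant in the second step; everything else is the standard Chernoff-plus-telescoping template. An alternative that sidesteps the constant issue altogether is a method-of-mixtures argument, building the nonnegative supermartingale $M_n = \int e^{\lambda \sum_{i \le n}(Y_i - \mu) - n\lambda^2/2}\,d\nu(\lambda)$ for a Gaussian mixing measure $\nu$ and applying Ville's maximal inequality to obtain a time-uniform bound without any union bound; I would keep the union-bound proof as the primary one for its transparency and because it matches the specific $\log(n(n+1))$ correction in the statement.
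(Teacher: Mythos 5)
Your proof is correct. There is in fact no in-paper proof to compare against: the paper imports this statement verbatim from \citet[Lemma 33.8]{lattimore_szepesvari_2020} without proof, and your argument---a per-$n$ Gaussian tail bound followed by a union bound whose $\log(n(n+1))$ cost telescopes via $\sum_{n\ge 1}\frac{1}{n(n+1)}=1$---is exactly the standard proof of that cited result. You also correctly isolate and justify the only delicate point, namely the factor-of-two-free tail bound $\mathbb{P}(|Z|\ge a)\le e^{-a^2/2}$, equivalently $1-\Phi(a)\le \tfrac{1}{2}e^{-a^2/2}$ for $a\ge 0$ (the difference vanishes at $a=0$, has derivative $e^{-a^2/2}\bigl(\tfrac{1}{\sqrt{2\pi}}-\tfrac{a}{2}\bigr)$ that is positive then negative, and tends to $0$ at infinity, hence is nonnegative throughout), without which the union bound would only yield $2\delta$.
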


\begin{lemma}\citep[Lemma A.4]{chen2022federated}
    Fix $n \in \mathbb{N}$. Let $Y_1,Y_2,\ldots,Y_n$ be independent random variables with $\mathbb{P}(Y_i \le y) \le y$ for all $y\in[0,1]$ and $i\in[n]$. Then, for any $\epsilon > 0$,
    \begin{equation}
    \label{eq:lemmaf}
    \mathbb{P} \bigg( \sum_{i=1}^n \log(1/Y_i) \ge \epsilon \bigg) \le f^{(n)}(\epsilon)
    \end{equation}
    where $f^{(n)}:(0,+\infty) \rightarrow (0,1) $ is defined by 
    \[
        f^{(n)}(x) = \sum_{i=1}^{n} \frac{x^{i-1}e^{-x}}{(i-1)!}, \quad x \in (0, +\infty).
    \]
    \label{lemma:mydeltaunion}
\end{lemma}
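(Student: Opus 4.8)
The plan is to recognise the sum $\sum_{i=1}^n \log(1/Y_i)$ as being stochastically dominated by an Erlang random variable and then to read off its tail in closed form. First I would translate the hypothesis into a one-sided exponential tail bound. Setting $X_i \coloneqq \log(1/Y_i)$, the assumption $\mathbb{P}(Y_i \le y) \le y$ for $y \in [0,1]$ gives, for every $t \ge 0$,
\begin{equation*}
    \mathbb{P}(X_i \ge t) = \mathbb{P}\big(Y_i \le e^{-t}\big) \le e^{-t},
\end{equation*}
since $e^{-t} \in (0,1]$. Thus each $X_i$ is stochastically dominated by a standard exponential variable $E_i \sim \mathrm{Exp}(1)$, the inequality for $t<0$ being trivial. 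This is the step where the specific hypothesis on the $Y_i$ is used, and it is entirely elementary.

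Next I would lift this pointwise domination to the sum. Because the $Y_i$ (hence the $X_i$) are independent and stochastic dominance is preserved under convolution of independent summands, $S_n \coloneqq \sum_{i=1}^n X_i$ is stochastically dominated by $\sum_{i=1}^n E_i$, which is an Erlang (i.e.\ $\mathrm{Gamma}(n,1)$) random variable. Concretely one couples the variables on a common space so that $\widetilde{X}_i \le \widetilde{E}_i$ almost surely for each $i$ while preserving all marginals; independence then yields $\sum_i \widetilde{X}_i \le \sum_i \widetilde{E}_i$ almost surely, and hence the domination of the laws. This preservation of dominance under independent sums is the one point that must be handled with genuine care, and I expect it to be the main obstacle in a fully rigorous write-up; everything else is computation.

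It then remains to evaluate the Erlang tail and match it with $f^{(n)}$. I would invoke the standard Poisson-process identity: if $N(\cdot)$ is a rate-one Poisson process whose $n$-th arrival time is $T_n \sim \mathrm{Gamma}(n,1)$, then $\{T_n > \epsilon\} = \{N(\epsilon) \le n-1\}$, so that
\begin{equation*}
    \mathbb{P}\Big(\sum_{i=1}^n E_i \ge \epsilon\Big) = \mathbb{P}\big(N(\epsilon) \le n-1\big) = \sum_{k=0}^{n-1} \frac{\epsilon^{k} e^{-\epsilon}}{k!} = \sum_{i=1}^{n} \frac{\epsilon^{i-1} e^{-\epsilon}}{(i-1)!} = f^{(n)}(\epsilon),
\end{equation*}
the last equality being the definition of $f^{(n)}$ after reindexing $i = k+1$. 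Combining this with the stochastic domination established above gives
\begin{equation*}
    \mathbb{P}\Big(\sum_{i=1}^n \log(1/Y_i) \ge \epsilon\Big) = \mathbb{P}(S_n \ge \epsilon) \le \mathbb{P}\Big(\sum_{i=1}^n E_i \ge \epsilon\Big) = f^{(n)}(\epsilon),
\end{equation*}
which is exactly the claimed bound.

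As a route that avoids the coupling altogether, one could instead argue by induction on $n$: the base case $n=1$ is precisely the exponential tail inequality above, and the inductive step conditions on $X_n$ and convolves the inductive hypothesis against the tail of $X_n$, reproducing the recursion that the Erlang survival function $f^{(n)}$ satisfies. I expect the stochastic-dominance argument to be the cleanest to present, with the convolution-preserves-dominance step being the only place that demands a careful coupling.
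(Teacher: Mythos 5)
Your proof is correct. Note first that the paper itself offers no proof of this lemma: it is imported verbatim by citation from \citet{chen2022federated} (Lemma A.4), so your argument is a self-contained derivation of a fact the paper treats as a black box. The chain of reasoning is sound: the hypothesis $\mathbb{P}(Y_i \le y)\le y$ translates exactly into the exponential tail bound $\mathbb{P}(\log(1/Y_i)\ge t)\le e^{-t}$ for $t\ge 0$ (and trivially for $t<0$), i.e.\ stochastic domination of each $X_i=\log(1/Y_i)$ by an $\mathrm{Exp}(1)$ variable; the one step you flag as delicate---preservation of the usual stochastic order under sums of independent summands---is indeed the crux, and the quantile coupling you sketch (couple each pair $(\widetilde X_i,\widetilde E_i)$ so that $\widetilde X_i\le\widetilde E_i$ almost surely, independently across $i$) settles it rigorously; and the closing identity
\begin{equation*}
\mathbb{P}\Big(\textstyle\sum_{i=1}^n E_i \ge \epsilon\Big)=\sum_{k=0}^{n-1}\frac{\epsilon^k e^{-\epsilon}}{k!}=f^{(n)}(\epsilon)
\end{equation*}
is the standard Gamma--Poisson duality, with the $\ge$ versus $>$ distinction immaterial by continuity of the Erlang law. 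Your alternative route by induction on $n$ (convolving the inductive hypothesis against the exponential tail bound for $X_n$) would also work and is essentially the dominance argument unrolled; either version would serve as a complete proof, which is more than the paper provides.
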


\begin{proof}[Proof of Proposition \ref{prop:delta_pac}]

Fix any confidence level $\delta \in (0,1)$, problem instance $v \in \mathcal{P}$, and $\eta>0$.  Consider $\textsc{MO-BAI}$.  We claim that $\tau_\delta < + \infty$ almost surely; a proof of this is deferred until the proof of Lemma~\ref{lemma:finite_stopping}. For $m\in[M]$ and $i\in [K]$, let 
\begin{equation}
    \xi_{i,m} \coloneqq  \sup_{t \ge K} \frac{N_{i,t}}{2}\big(\widehat{\mu}_{i,m}(t)-\mu_{i,m}(v)\big)^2 - \log\big(N_{i,t}(N_{i,t}+1)\big).
    \label{eq:xi-i-m}
\end{equation}
From Lemma \ref{lemma:latimaore_338}, we know that for any confidence level $\delta' \in (0,1)$,
\begin{align}
    \mathbb{P}_v^{\textsc{MO-BAI}} (\xi_{i,m} \ge \log(1/\delta')) \le \delta'.
\end{align}
Let $\xi'_{i,m} \coloneqq \exp(-\xi_{i,m})$. From Lemma \ref{lemma:mydeltaunion}, we know that for any $\epsilon>0$,
\begin{align}
    & \mathbb{P}_v^{\textsc{MO-BAI}} \left(\sum_{m\in [M]}\  \sum_{i \in [K]}\  \log(1/\xi'_{i,m}) \ge \epsilon \right) \le f^{(MK)}(\epsilon) \nonumber \\
    & \stackrel{(a)}{\implies} \mathbb{P}_v^{\textsc{MO-BAI}} \left(\sum_{m\in [M]}\  \sum_{i \in [K]}\  \xi_{i,m} \ge \epsilon \right) \le f^{(MK)}(\epsilon) \nonumber \\
    & \stackrel{(b)}{\implies} \mathbb{P}_v^{\textsc{MO-BAI}} \left(\sum_{m\in [M]}\  \sum_{i \in [K]}\  \xi_{i,m} \ge \epsilon \right) \le f(\epsilon) \nonumber \\
    & \stackrel{(c)}{\implies} \mathbb{P}_v^{\textsc{MO-BAI}} \left(\sum_{m\in [M]}\  \sum_{i \in [K]}\  \xi_{i,m} \ge f^{-1}(\delta)  \right) \le \delta,  
    \label{eq:combinationbound} 
\end{align}
definition of $\xi'_{i,m}$,  $(b)$ follows from the definition of $f$ in \eqref{eq:deff}, and in writing $(c)$, we (i) make use of the fact that $f$ is continuous and strictly decreasing and therefore admits an inverse, and (ii) set $\epsilon = f^{-1}(\delta)$.
Plugging the expression for $\xi_{i,m}$ from \eqref{eq:xi-i-m} in \eqref{eq:combinationbound}, and noting that $N_{i,t} \leq t$ for all $i \in [K]$ and $t \geq K$, we get
\begin{align}
    & \mathbb{P}_v^{\textsc{MO-BAI}} \bigg(\forall\, t\ge K \  \sum_{m\in [M]} \sum_{i \in [K]}
    \frac{N_{i,t}}{2}\big(\widehat{\mu}_{i,m}(t)-\mu_{i,m}(v)\big)^2  \le  MK\log\big(t(t+1)\big) + f^{-1}(\delta) \bigg) \ge 1-\delta \nonumber \\ 
    & \implies \mathbb{P}_v^{\textsc{MO-BAI}} \bigg(\forall\, t\ge K \ \sum_{m\in [M]} \sum_{i \in [K]}
    \frac{N_{i,t}}{2}\big(\widehat{\mu}_{i,m}(t)-\mu_{i,m}(v)\big)^2  \le  \beta(t, \delta) \bigg) \ge 1-\delta.
    \label{eq:importanceforpac}
\end{align}
Note that at the stopping time $\tau_\delta = \tau_\delta$, by definition, we must have
\[
    Z(\tau_\delta) = \inf_{v' \in {\rm Alt}(\widehat{v}(\tau_\delta))} \sum_{m \in [M]} \sum_{i \in [K]} N_{i,\tau_\delta} \frac{(\mu_{i,m}(v') - \widehat{\mu}_{i,m}(\tau_\delta))^2}{2} > \beta(\tau_\delta, \delta).
\]
Thus, \eqref{eq:importanceforpac} may be expressed equivalently as 
$$
    \mathbb{P}_v^{\textsc{MO-BAI}} \bigg( v \notin {\rm Alt} \big(\widehat{v}(\tau_\delta)\big) \bigg) \ge 1 - \delta \quad \Longleftrightarrow \quad \mathbb{P}_v^{\textsc{MO-BAI}}\bigg( I^*(v) = I^*\big(\widehat{v}(\tau_\delta)\big) \bigg) \ge 1 - \delta,
$$ 
thereby establishing the desired result. This completes the proof.
\end{proof}


\section{Proof of Theorem \ref{thm:upperbound}}
\label{sec:proof_of_upper_bound}
Let $v \in \mathcal{P}$ be fixed. Firstly, we define the curvature~\citep{jaggi2013revisiting} of a concave function.

\begin{definition}[Curvature]
    Given a concave function $f:\mathcal{D} \rightarrow \mathbb{R}$ defined on a convex domain $\mathcal{D}$, the {\em curvature} of $f$ is defined as
    \begin{equation}
        C_{\rm cur}(f) \coloneqq \sup_{\substack{\omega, \mathbf{y} \in \mathcal{D}, \\ \gamma \in (0,1), \\ \mathbf{d} \in \partial f(\omega)}} \frac{2}{\gamma^2} \bigg( f(\omega)+ \langle \mathbf{z} - \omega, \mathbf{d} \rangle - f \big(\mathbf{z}\big) \bigg), \quad \mathbf{z} = \omega + \gamma(\mathbf{y} - \omega),
        \label{eq:def_Cur}
    \end{equation}
    where $\partial f(\omega)$ denotes the super-differential of $f$ at $\omega$.
\end{definition}
In the following, we will show that $g^{(i,m)}_v(\cdot)\big|_{\Gamma^{(\eta)}}$ (the function $g^{(i,m)}_v(\cdot)$ restricted to the set $\Gamma^{(\eta)}$) is a concave function. It is worth noting that the curvature of $g_v$ is a function of its super-gradients, whereas the constant $C(v,\eta)$ is a function of the gradient of $g^{(i,m)}_v(\cdot)$ for $i \neq i^*_m(v)$ (see \eqref{def:h}). While $C_{\rm cur}(g_v)$ may be infinitely large, we will show that $C(v,\eta)$ is finite for all $\eta>0$. The latter, as we shall see, is an important property that will enable us to demonstrate the asymptotic optimality of our policy.


Before we proceed further, we record some useful results.
\begin{lemma}{(Adapted from ~\citet[Lemma 7]{jaggi2013revisiting})}
\label{Lemma:jaggi}
Let $f: \mathcal{D} \to \mathbb{R}$ be a concave and differentiable function defined over a convex domain $\mathcal{D}$, with a gradient function $\nabla f$ is Lipschitz-continuous w.r.t. some norm $\|.\|$ over the domain $\mathcal{D}$ with a Lipschitz constant $L>0$. Then,
$$
C_{\rm cur}(f) \leq \operatorname{diam}_{\|\cdot\|}(\mathcal{D})^2 L,
$$
where $\operatorname{diam}_{\|\cdot\|}(\mathcal{D}) \coloneqq \sup_{\mathbf{x},\mathbf{y} \in \mathcal{D}} \| \mathbf{x}-\mathbf{y} \|$.
\end{lemma}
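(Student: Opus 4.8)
The plan is to exploit the fact that, since $f$ is differentiable, the super-differential $\partial f(\omega)$ collapses to the singleton $\{\nabla f(\omega)\}$, so the supremum over $\mathbf{d} \in \partial f(\omega)$ in the definition~\eqref{eq:def_Cur} of $C_{\rm cur}(f)$ disappears and we may simply substitute $\mathbf{d} = \nabla f(\omega)$. It then suffices to bound, uniformly over $\omega, \mathbf{y} \in \mathcal{D}$ and $\gamma \in (0,1)$, the quantity $\frac{2}{\gamma^2}\big(f(\omega) + \langle \mathbf{z} - \omega, \nabla f(\omega)\rangle - f(\mathbf{z})\big)$, where $\mathbf{z} = \omega + \gamma(\mathbf{y} - \omega)$. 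This is a \emph{linearisation gap}: the difference between the first-order Taylor expansion of $f$ at $\omega$, evaluated at $\mathbf{z}$, and the true value $f(\mathbf{z})$.

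First I would rewrite this gap in integral form along the segment joining $\omega$ to $\mathbf{z}$. Applying the fundamental theorem of calculus to $t \mapsto f(\omega + t(\mathbf{z}-\omega))$ on $[0,1]$ gives $f(\mathbf{z}) - f(\omega) = \int_0^1 \langle \nabla f(\omega + t(\mathbf{z}-\omega)), \, \mathbf{z}-\omega\rangle\, dt$, and since $\langle \mathbf{z}-\omega, \nabla f(\omega)\rangle$ is constant in $t$, the gap equals $\int_0^1 \langle \nabla f(\omega) - \nabla f(\omega + t(\mathbf{z}-\omega)), \, \mathbf{z}-\omega\rangle\, dt$.

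Next I would bound the integrand using the (generalised) Cauchy--Schwarz inequality $|\langle \mathbf{a}, \mathbf{b}\rangle| \le \|\mathbf{a}\|_* \, \|\mathbf{b}\|$ together with the Lipschitz hypothesis, which yields $\|\nabla f(\omega) - \nabla f(\omega + t(\mathbf{z}-\omega))\|_* \le L\, t\, \|\mathbf{z}-\omega\|$ (the two arguments differ by $t(\mathbf{z}-\omega)$). The integrand is thus at most $L\, t\, \|\mathbf{z}-\omega\|^2$, so the gap is at most $\int_0^1 L\, t\, \|\mathbf{z}-\omega\|^2\, dt = \tfrac{L}{2}\|\mathbf{z}-\omega\|^2$. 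Substituting $\|\mathbf{z}-\omega\| = \gamma\, \|\mathbf{y}-\omega\|$ cancels the $\gamma^2$ exactly, giving $\frac{2}{\gamma^2}\cdot\tfrac{L}{2}\gamma^2\|\mathbf{y}-\omega\|^2 = L\, \|\mathbf{y}-\omega\|^2$. Finally, bounding $\|\mathbf{y}-\omega\| \le \operatorname{diam}_{\|\cdot\|}(\mathcal{D})$ and taking the supremum over all admissible $\omega, \mathbf{y}, \gamma$ delivers $C_{\rm cur}(f) \le L\, \operatorname{diam}_{\|\cdot\|}(\mathcal{D})^2$.

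The argument requires no appeal to concavity beyond the differentiability that collapses the super-differential, so there is no serious obstacle. The one point I would treat with care is the interplay between the norm in which the diameter is measured and the norm in which the Lipschitz continuity of $\nabla f$ is assumed: the Cauchy--Schwarz step must pair $\|\cdot\|$ with its dual $\|\cdot\|_*$, so the Lipschitz bound on $\nabla f$ should be read in the dual norm for the telescoping to close cleanly. In our application the relevant norm on $\Gamma^{(\eta)} \subseteq \mathbb{R}^K$ is Euclidean, whence $\|\cdot\|_* = \|\cdot\|$ and this subtlety is vacuous.
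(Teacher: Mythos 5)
The paper never proves this lemma: it is imported by citation from \citet[Lemma~7]{jaggi2013revisiting}, so the only proof to compare against is the one in that source, and yours is essentially it. The fundamental-theorem-of-calculus representation of the linearisation gap, H\"older's inequality pairing $\|\cdot\|$ with its dual $\|\cdot\|_*$, the Lipschitz bound yielding $\tfrac{L}{2}\|\mathbf{z}-\omega\|^2$, and the exact cancellation of $\gamma^2$ is precisely the standard argument; your proof is correct, including your (correct) reading of the hypothesis as a bound on gradient differences in the \emph{dual} norm.

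Two peripheral points are worth flagging. First, the step ``differentiability collapses $\partial f(\omega)$ to $\{\nabla f(\omega)\}$'' is guaranteed only at relative-interior points of $\mathcal{D}$; at boundary points the superdifferential of a concave function can be strictly larger (for $f(x)=x$ on $[0,1]$ one has $\partial f(0)=[1,\infty)$, which would make the literal supremum in \eqref{eq:def_Cur} infinite even for this linear $f$). This is a defect of the paper's adapted definition rather than of your argument: the definition in \citet{jaggi2013revisiting}, which the paper's own application in Lemma~\ref{lemma:finite_cuvature} effectively uses, is stated with the gradient, exactly as you use it --- but a fully rigorous write-up should either restrict \eqref{eq:def_Cur} to the gradient or restrict $\omega$ to the relative interior. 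Second, your closing remark that ``in our application the relevant norm is Euclidean'' is factually wrong: the paper invokes this lemma on $\Gamma^{(\eta)}$ with the $\ell_1$ norm, whose dual is $\ell_\infty$. The pairing still closes there, because the paper establishes Lipschitz continuity of the gradient with differences measured in $\|\cdot\|_1$, which dominates $\|\cdot\|_\infty$, but the subtlety you dismiss as vacuous is in fact live in the paper's application.
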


\begin{lemma}{(\citet[Lemma A.7]{chen2022federated})}
\label{lemma:f_delta}
Given $n\in\mathbb{N}$, let $f^{(n)}(x) =  \sum_{i=1}^{n} \frac{x^{i-1}e^{-x}}{(i-1)!}$ for $x \in (0, +\infty)$. Then, $(f^{(n)})^{-1}(\delta)=(1+o(1)) \log(1/\delta)$ as $\delta \downarrow 0$, i.e., 
\begin{equation}
    \lim_{\delta \downarrow 0} \frac{(f^{(n)})^{-1}(\delta)}{\log(1/\delta)} = 1.
    \label{eq:f_delta}
\end{equation}
\end{lemma}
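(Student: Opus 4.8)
The plan is to reduce the claim about the inverse function to a clean asymptotic statement about $f^{(n)}$ itself, and then invert. First I would record the closed form $f^{(n)}(x) = e^{-x}\sum_{j=0}^{n-1} x^j/j!$, which (upon recognizing it as the lower tail of a $\mathrm{Poisson}(x)$ law) makes the qualitative behavior transparent: $f^{(n)}(0^+) = 1$ and $f^{(n)}(x) \to 0$ as $x \to \infty$. To guarantee that $(f^{(n)})^{-1}$ is well-defined, I would differentiate and exploit the telescoping identity
$$\frac{d}{dx} f^{(n)}(x) = -e^{-x}\,\frac{x^{n-1}}{(n-1)!} < 0,$$
so that $f^{(n)}$ is a continuous, strictly decreasing bijection from $(0,\infty)$ onto $(0,1)$, and its inverse is a legitimate object.

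The heart of the argument is the asymptotic expansion of $-\log f^{(n)}$. Taking logarithms of the closed form,
$$-\log f^{(n)}(x) = x - \log\left(\sum_{j=0}^{n-1} \frac{x^j}{j!}\right).$$
As $x \to \infty$, the polynomial inside the logarithm is dominated by its top-degree term $x^{n-1}/(n-1)!$, so that $\log\left(\sum_{j=0}^{n-1} x^j/j!\right) = (n-1)\log x + O(1)$. Substituting back gives $-\log f^{(n)}(x) = x - (n-1)\log x + O(1)$, whence $\lim_{x\to\infty} \frac{-\log f^{(n)}(x)}{x} = 1$, since $(n-1)\log x = o(x)$.

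Finally I would perform the change of variables $x_\delta \coloneqq (f^{(n)})^{-1}(\delta)$. Because $f^{(n)}$ is a decreasing bijection, $x_\delta \to \infty$ as $\delta \downarrow 0$, and by definition $\delta = f^{(n)}(x_\delta)$, i.e.\ $\log(1/\delta) = -\log f^{(n)}(x_\delta)$. Combining this with the previous display yields
$$\frac{\log(1/\delta)}{x_\delta} = \frac{-\log f^{(n)}(x_\delta)}{x_\delta} \to 1,$$
which is exactly the assertion $\frac{(f^{(n)})^{-1}(\delta)}{\log(1/\delta)} \to 1$. The only delicate point is making the dominant-term estimate for $\log\big(\sum_{j=0}^{n-1} x^j/j!\big)$ rigorous; this is routine bookkeeping (factor out $x^{n-1}/(n-1)!$ and sandwich the remaining finite sum $1 + \sum_{j=0}^{n-2}\frac{(n-1)!}{j!}\,x^{j-(n-1)}$ between two positive constants for all large $x$), so I expect no substantive obstacle beyond it.
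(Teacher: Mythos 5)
Your proof is correct, but there is an important structural point: the paper does not prove this lemma at all --- it is imported verbatim as Lemma A.7 of Chen et al.\ (2022), so there is no internal proof to compare against. What you have written is therefore a valid, self-contained replacement for that external citation. The ingredients all check out: the telescoping identity $\frac{d}{dx}f^{(n)}(x) = -e^{-x}x^{n-1}/(n-1)!$ (the sum $S'(x)-S(x)$ collapses to $-x^{n-1}/(n-1)!$) shows $f^{(n)}$ is a continuous strictly decreasing bijection from $(0,\infty)$ onto $(0,1)$, which incidentally justifies the paper's unexplained use of $f^{-1}(\delta)$ in the threshold $\beta(t,\delta)$; the expansion $-\log f^{(n)}(x) = x-(n-1)\log x + O(1)$ follows from factoring the top-degree term of the Poisson polynomial, and the ``routine bookkeeping'' you defer is genuinely routine (the bracketed factor $1+\sum_{j=0}^{n-2}\frac{(n-1)!}{j!}x^{j-(n-1)}$ lies in $[1,2]$ for all large $x$, and the sum is empty when $n=1$, where the lemma holds with equality since $(f^{(1)})^{-1}(\delta)=\log(1/\delta)$); and the inversion step is sound because $x_\delta = (f^{(n)})^{-1}(\delta)\to\infty$ as $\delta\downarrow 0$, so $\log(1/\delta)/x_\delta = -\log f^{(n)}(x_\delta)/x_\delta \to 1$. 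In short: correct and complete, and arguably a useful addition, since it makes the paper's appendix self-contained where it currently relies on an external lemma.
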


The below result demonstrates the concavity of the function $g^{(i,m)}_v(\cdot)\big|_{\Gamma^{(\eta)}}$ for all $(i,m)$ with $i \neq i_m^*(v)$.
\begin{lemma}
    \label{lemma:g_v_concave}
    Fix $v\in \mathbb{P}$, $\eta>0$, $m\in [M]$, and $i\in[K]$ such that $i \neq i^*_m(v)$. Then, $g^{(i,m)}_v(\cdot)\big|_{\Gamma^{(\eta)}}$ is a concave function.
\end{lemma}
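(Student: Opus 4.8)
The plan is to reduce the claim to a statement about a single two-variable function and then verify concavity by a direct Hessian computation. Observe first that, having fixed the pair of distinct coordinates $i$ and $i_m^*(v)$, the function $g^{(i,m)}_v$ depends on $\omega$ \emph{only} through the two entries $\omega_i$ and $\omega_{i_m^*(v)}$. Writing the constant $c \coloneqq \Delta_{i,m}^2(v)/2$, which is strictly positive because $i \neq i_m^*(v)$ forces $\Delta_{i,m}(v) > 0$, we have $g^{(i,m)}_v(\omega) = c \, \phi(\omega_i, \omega_{i_m^*(v)})$, where $\phi(x,y) \coloneqq \frac{xy}{x+y}$. Since $\omega \mapsto (\omega_i, \omega_{i_m^*(v)})$ is a linear map and $\Gamma^{(\eta)}$ is convex (being defined by linear constraints), and since composing a concave function with a linear map and scaling by the positive constant $c$ preserves concavity, it suffices to prove that $\phi$ is concave on the open region $\{(x,y): x, y > 0\}$. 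Note that on $\Gamma^{(\eta)}$ every coordinate is at least $\frac{\eta}{K(1+\eta)} > 0$, so the projection of $\Gamma^{(\eta)}$ lands in this region, where $\phi$ is smooth and the denominator $x+y$ is bounded away from zero.

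For the core step I would compute the Hessian of $\phi$ directly. The first partials are $\phi_x = \frac{y^2}{(x+y)^2}$ and, by symmetry, $\phi_y = \frac{x^2}{(x+y)^2}$; differentiating once more gives
\[
    \nabla^2 \phi(x,y) = \frac{2}{(x+y)^3}\begin{pmatrix} -y^2 & xy \\ xy & -x^2 \end{pmatrix}.
\]
The inner matrix has trace $-(x^2+y^2) < 0$ and determinant $x^2 y^2 - x^2 y^2 = 0$, so its eigenvalues are $0$ and $-(x^2+y^2) \le 0$. Because the scalar prefactor $\frac{2}{(x+y)^3}$ is positive on the domain, $\nabla^2 \phi(x,y)$ is negative semidefinite throughout $\{x,y > 0\}$, which establishes that $\phi$ is concave there.

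Combining the two steps yields the claim: on $\Gamma^{(\eta)}$ we have $g^{(i,m)}_v = c \, \phi \circ L$ with $L$ linear and $\phi$ concave, hence $g^{(i,m)}_v|_{\Gamma^{(\eta)}}$ is concave. I do not anticipate a genuine obstacle; the argument is routine. The only point deserving care is that $\nabla^2\phi$ is defined only where $x+y > 0$, which is exactly why restricting to $\Gamma^{(\eta)}$ (where coordinates are bounded away from $0$) is convenient. Should one wish to avoid the domain restriction entirely, an alternative route notes that $\phi$ is the harmonic mean up to a factor of two, hence homogeneous of degree one; concavity then follows from superadditivity, $\phi(a+b) \ge \phi(a)+\phi(b)$, on the closed positive orthant. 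Either way, the concavity itself does not hinge on the particular choice of domain.
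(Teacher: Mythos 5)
Your proof is correct and follows essentially the same route as the paper: the paper verifies concavity by noting that the $K\times K$ Hessian of $g^{(i,m)}_v$ has one negative eigenvalue $-\Delta_{i,m}^2(v)(\omega_i^2+\omega_{i_m^*(v)}^2)/(\omega_i+\omega_{i_m^*(v)})^3$ and a zero eigenvalue of multiplicity $K-1$, which is exactly the computation you perform after your (clean, and slightly more explicit) reduction to the two-variable function $\phi(x,y)=xy/(x+y)$ composed with a linear coordinate projection. The only difference is organizational, not mathematical, and your write-up fills in the details the paper dismisses as ``easily shown.''
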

\begin{proof}[Proof of Lemma~\ref{lemma:g_v_concave}]
It can be easily shown that the eigenvalues of the Hessian matrix of $g^{(i,m)}_v(\mathbf{\cdot})$ are 
\begin{itemize}
    \item $\dfrac{-\Delta_{i,m}^2(\omega_i^2+\omega_{i^*_m(v)}^2)}{(\omega_i+\omega_{i^*_m(v)})^3}$ with algebraic multiplicity $1$, and
    
    \item $0$ with algebraic multiplicity $K-1$.
\end{itemize}
Hence, the Hessian matrix of $g^{(i,m)}_v(\cdot)$ is negative semi-definite matrix for all $\omega \in \Gamma^{(\eta)}$, thus proving the desired result.
\end{proof}

With the above ingredients in place, we now prove that $C(v,\eta)$ is finite for all $\eta>0$.
\begin{lemma}
    \label{lemma:finite_cuvature}
    Fix $v \in \mathcal{P}$. For any $\eta>0$,
    \begin{equation}
        C(v,\eta) \le \max_{i \neq i^*_m(v)} \frac{2 \, \Delta_{i,m}^2(v) \, (1+\eta) \, K }{\eta}.
        \label{eq:C-v-eta-finite}
    \end{equation}
\end{lemma}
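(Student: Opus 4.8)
The plan is to exploit the concavity of each component $g_v^{(i,m)}(\cdot)\big|_{\Gamma^{(\eta)}}$ established in Lemma~\ref{lemma:g_v_concave}, and to dominate $C(v,\eta)$ by the ordinary curvatures $C_{\rm cur}$ of these \emph{smooth} components, each of which is controlled through Lemma~\ref{Lemma:jaggi}. The conceptual point is that $C(v,\eta)$ is \emph{not} the curvature of $g_v$ itself: because $g_v$ is a minimum of several functions, its super-differential is ill-behaved at the kinks and $C_{\rm cur}(g_v)$ may be infinite, whereas $C(v,\eta)$ compares the per-component linearized surrogate against $g_v$ and turns out to be finite.

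First I would fix $\omega, \mathbf{y}\in \Gamma^{(\eta)}$, $\gamma \in (0,1)$, and $\mathbf{z}=\omega+\gamma(\mathbf{y}-\omega)$, noting $\mathbf{z}\in\Gamma^{(\eta)}$ by convexity. Let $(i_0,m_0)$ with $i_0 \neq i^*_{m_0}(v)$ attain the minimum in $g_v(\mathbf{z}) = \min_{m\in[M]}\min_{i\neq i^*_m(v)} g_v^{(i,m)}(\mathbf{z})$. Since the minimum defining $h_v(\omega,\mathbf{z})$ runs over exactly these indices, it is bounded above by its $(i_0,m_0)$ term, and subtracting $g_v(\mathbf{z})=g_v^{(i_0,m_0)}(\mathbf{z})$ gives
\begin{equation*}
h_v(\omega,\mathbf{z}) - g_v(\mathbf{z}) \le g_v^{(i_0,m_0)}(\omega) + \langle \nabla_\omega g_v^{(i_0,m_0)}(\omega),\, \mathbf{z}-\omega\rangle - g_v^{(i_0,m_0)}(\mathbf{z}).
\end{equation*}
Multiplying by $2/\gamma^2$ and taking the supremum, the right-hand side is precisely of the form appearing in the definition of $C_{\rm cur}(g_v^{(i_0,m_0)}\big|_{\Gamma^{(\eta)}})$, using that each $g_v^{(i,m)}$ is differentiable on $\Gamma^{(\eta)}$ so that its super-differential is the singleton $\{\nabla_\omega g_v^{(i,m)}\}$. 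This yields $C(v,\eta)\le \max_{m\in[M]}\max_{i\neq i^*_m(v)} C_{\rm cur}(g_v^{(i,m)}\big|_{\Gamma^{(\eta)}})$.

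Next I would bound each component curvature through Lemma~\ref{Lemma:jaggi}, which gives $C_{\rm cur}(g_v^{(i,m)}\big|_{\Gamma^{(\eta)}}) \le \operatorname{diam}_{\|\cdot\|_2}(\Gamma^{(\eta)})^2\, L_{i,m}$, where $L_{i,m}$ is a Lipschitz constant of $\nabla g_v^{(i,m)}$ over $\Gamma^{(\eta)}$. Since $\Gamma^{(\eta)}\subseteq\Gamma$ and $\Gamma$ is the probability simplex, its $\ell_2$-diameter is at most $\sqrt{2}$, so $\operatorname{diam}_{\|\cdot\|_2}(\Gamma^{(\eta)})^2 \le 2$. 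For $L_{i,m}$ I take the supremum over $\Gamma^{(\eta)}$ of the spectral norm of the Hessian of $g_v^{(i,m)}$; by the eigenvalue computation already carried out in Lemma~\ref{lemma:g_v_concave}, this spectral norm equals $\Delta_{i,m}^2(v)\,(\omega_i^2+\omega_{i^*_m(v)}^2)/(\omega_i+\omega_{i^*_m(v)})^3$. Using $\omega_i^2+\omega_{i^*_m(v)}^2 \le (\omega_i+\omega_{i^*_m(v)})^2$ together with the defining bound $\omega_i,\omega_{i^*_m(v)}\ge \eta/(K(1+\eta))$ on $\Gamma^{(\eta)}$, this is at most $\Delta_{i,m}^2(v)/(\omega_i+\omega_{i^*_m(v)}) \le \Delta_{i,m}^2(v)\,K(1+\eta)/(2\eta)$. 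Combining, $C_{\rm cur}(g_v^{(i,m)}\big|_{\Gamma^{(\eta)}}) \le \Delta_{i,m}^2(v)\,K(1+\eta)/\eta$, which is within the claimed bound, and maximizing over $(i,m)$ finishes the argument.

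The main obstacle, and the heart of the lemma, is the first step: showing that the curvature-like quantity $C(v,\eta)$ is dominated by the curvatures of the smooth components $g_v^{(i,m)}$ rather than by the curvature of their non-smooth minimum $g_v$. The key insight is that linearizing each component separately and then taking the minimum produces an upper bound that can be matched, index by index, to the component active in $g_v$ at the evaluation point $\mathbf{z}$; this is exactly what keeps $C(v,\eta)$ finite while $C_{\rm cur}(g_v)$ may diverge. The secondary technical care is preventing the Hessian bound from blowing up near the simplex boundary, which is precisely why restricting to $\Gamma^{(\eta)}$ (keeping every coordinate, in particular $\omega_i$ and $\omega_{i^*_m(v)}$, bounded away from $0$) is indispensable.
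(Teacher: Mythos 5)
Your proof is correct and follows the same two-step structure as the paper's: first dominate $C(v,\eta)$ by $\max_{(i,m):\, i\neq i^*_m(v)} C_{\rm cur}\left(g_v^{(i,m)}\big|_{\Gamma^{(\eta)}}\right)$ using the index that attains the minimum defining $g_v(\mathbf{z})$, then bound each component curvature via Lemma~\ref{Lemma:jaggi} together with the Hessian bounds available on $\Gamma^{(\eta)}$. The only difference is cosmetic: you instantiate Jaggi's lemma with the $\ell_2$ norm (spectral norm of the Hessian, squared diameter at most $2$) whereas the paper uses the $\ell_1$ norm with the max-column-sum bound; your choice in fact yields a constant tighter by a factor of $2$, which still implies the stated inequality.
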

\begin{proof}[Proof of Lemma~\ref{lemma:finite_cuvature}]
In this proof, we will first prove that the curvature of $g^{(i,m)}_v(\cdot)\big|_{\Gamma^{(\eta)}}$ is finite for all $i\in[K]$ and $m\in [M]$. Subsequently, we will show that $C(v,\eta)$ is no greater than the maximum curvature of $g^{(i,m)}_v(\cdot)\big|_{\Gamma^{(\eta)}}$ computed over all $(i,m)$ pairs.

Note that for any $i \in [K]$, $m\in [M]$, $\imath \in [K]$ and $\jmath \in [K]$, we have $\forall \omega \in \Gamma^{(\eta)}$
\begin{equation}
    \frac{\partial }{\partial \omega_{\imath}  } g^{(i,m)} (\omega) = 
    \begin{cases}
        \dfrac{\Delta^2_{i,m}(v) \, \omega_{i^*_m(v)}^2}{2(\omega_{i^*_m(v)}+\omega_{i})^2}, & \text{ if } \imath = i, \\
        \\
        \dfrac{\Delta^2_{i,m}(v) \, \omega_{i}^2}{2(\omega_{i^*_m(v)}+\omega_i)^2}, & \text{ if } \imath = i^*_m(v), \\
        \\
        0,  & \text{ otherwise, } \\
    \end{cases}
    \label{eq:single_di}
\end{equation}
and
\begin{equation}
    \frac{\partial^2 }{\partial \omega_{\imath} \partial \omega_{\jmath} } g^{(i,m)} (\omega) = 
    \begin{cases}
        -\dfrac{\Delta^2_{i,m}(v) \, \omega_{i^*_m(v)}^2}{(\omega_{i^*_m(v)}+\omega_i)^3}, &   \text{ if } \imath=\jmath=i, \\
        \\
        -\dfrac{\Delta^2_{i,m}(v) \, \omega_{i}^2}{(\omega_{i^*_m(v)}+\omega_i)^3}, & \text{ if } \imath=\jmath=i^*_m(v), \\
        \\
        \dfrac{\Delta^2_{i,m}(v) \, \omega_{i}\omega_{i^*_m(v)}}{(\omega_{i^*_m(v)}+\omega_i)^3}, & \text{ if } \imath=i, \jmath=i^*_m(v)  \text{ or }  \imath=i^*_m(v), \jmath=i \\
        0, & \text{ otherwise.
        }
    \end{cases}
    \label{eq:double_di}
\end{equation}
Recall that $\omega \in \Gamma^{(\eta)}$ implies $\frac{\eta}{K(1+\eta)} \le \omega_i \le 1 $ for all $i \in [K]$. Using this fact along with~\eqref{eq:double_di}, we get that
$$
    \sup_{\omega \in \Gamma^{(\eta)}} \frac{\partial^2 }{\partial \omega_{\imath} \partial \omega_{\jmath} } g^{(i,m)} (\omega)  < +\infty \quad \forall i, \imath, \jmath \in [K], ~ m \in [M].
$$
In addition,~\eqref{eq:single_di} implies that the function $\nabla g^{(i,m)}: \Gamma^{(\eta)} \to \mathbb{R}^K$ is continuous. Combining the above facts, we see that $\nabla g^{(i,m)}: \Gamma^{(\eta)} \to \mathbb{R}$ is Lipschitz-continuous w.r.t.\ the  norm $\|\cdot\|_1$ with a finite Lipschitz constant $L$ satisfying
\begin{align}
    L & \le \sup_{\omega\in \Gamma^{(\eta)},\mathbf{d} \in \mathbb{R}^K} \frac{\| \mathbf{H}_{g^{(i,m)}}\left( \omega \right)  \mathbf{d} \|_1 } {\|\mathbf{d}\|_1} \nonumber \\
    & = \sup_{\omega\in \Gamma^{(\eta)}, \ \jmath\in[K]} {\| \mathbf{H}_{{g^{(i,m)}}}^{(\cdot,\jmath)}\left( \omega \right)   \|_1 } \nonumber  \\ 
    & \stackrel{(a)}{\le} \sup_{\omega\in \Gamma^{(\eta)}, \ \imath\in[K], \ \jmath\in[K]} {2\lvert \mathbf{H}^{(\imath,\jmath)}_{{g^{(i,m)}}}\left( \omega \right)   \rvert }  \nonumber \\ 
    & \stackrel{(b)}{\le} \frac{2\Delta_{i,m}^2(v)(1+\eta)K }{\eta},  
\end{align}
where $\mathbf{H}_{{g^{(i,m)}}}$ denotes the Hessian matrix of ${g^{(i,m)}}$ and $\mathbf{H}_{{g^{(i,m)}}}^{(\cdot,\jmath)}$ (resp.\  $\mathbf{H}_{{g^{(i,m)}}}^{(\imath,\jmath)}$) denotes its $\jmath$-th column (resp.\ its $(\imath, \jmath))$-th element), $(a)$ above follows the fact that the Hessian matrix of  $g^{(i,m)} (\cdot)\big|_{\Gamma^{(\eta)}}$ has at most two non-zero elements in each column, a fact that in turn follows from~\eqref{eq:double_di}, and $(b)$ above follows the fact that
$$
    \frac{\partial^2 }{\partial \omega_{\imath} \partial \omega_{\jmath} } g^{(i,m)} (\omega) \le \frac{\Delta_{i,m}^2(v)(1+\eta)K }{\eta} \quad \forall \omega \in \Gamma^{(\eta)}.
$$
Using Lemma~\ref{Lemma:jaggi} along with the fact that ${\rm diam}_{\|\cdot\|_1}(\Gamma^{(\eta)}) \leq 1$, we have for any $i\in[K]$ and $m\in[M]$
\begin{equation}
    C_{\rm cur}\left(g_v^{(i,m)}\big|_{\Gamma^{(\eta)}}\right) \leq \frac{2 \, \Delta_{i,m}^2(v) \, (1+\eta) \, K }{\eta}.
    \label{eq:cur_is_finite}
\end{equation}
In addition, from~\eqref{eq:def_C} and~\eqref{def:h}, we note by setting $\mathbf{z}=\omega+\gamma(\mathbf{y}-\omega)$ that
\begin{align}
    C(v,\eta) =  \sup_{\substack{\omega, \mathbf{y} \in \Gamma^{(\eta)}, \\ \gamma\in (0,1)}} \frac{2}{\gamma^2 } \bigg( \min_{(i,m): i\neq i^*_m(v)} g^{(i,m)}_{v}(\omega) + \langle  \nabla_{\omega} g^{(i,m)}_{v}(\omega), \mathbf{z}- \omega\rangle - g_v(\mathbf{z}) \bigg),
    \label{eq:re_C}
\end{align}
Note that for any $\mathbf{z} \in \Gamma^{(\eta)}$, there exist $i'\in[K]$ and $m'\in[M]$ with $i'\neq i^*_{m'}(v)$ such that $g_v(\mathbf{z}) = g^{(i',m')}_v(\mathbf{z})$. Therefore,
\begin{align}
    & \min_{(i,m): i\neq i^*_m(v)} g^{(i,m)}_{v}(\omega) + \langle  \nabla_{\omega} g^{(i,m)}_{v}(\omega), \mathbf{z}- \omega\rangle - g_v(\mathbf{z}) \nonumber \\
    & = \min_{(i,m): i\neq i^*_m(v)} g^{(i,m)}_{v}(\omega) + \langle  \nabla_{\omega} g^{(i,m)}_{v}(\omega), \mathbf{z}- \omega\rangle - g^{(i',m')}_v(\mathbf{z}) \nonumber  \\
    & \le  g^{(i',m')}_{v}(\omega) + \langle  \nabla_{\omega} g^{(i',m')}_{v}(\omega), \mathbf{z}- \omega\rangle - g^{(i',m')}_v(\mathbf{z}) \nonumber \\
    & \le  \max_{(i,m): i\neq i^*_m(v)} \bigg[ g^{(i,m)}_{v}(\omega) + \langle  \nabla_{\omega} g^{(i,m)}_{v}(\omega), \mathbf{z}- \omega\rangle - g^{(i,m)}_v(\mathbf{z}) \bigg].
    \label{eq:less_than_max_g}
\end{align}
Combining~\eqref{eq:re_C} and~\eqref{eq:less_than_max_g},  and setting $\mathbf{z}=\omega+\gamma(\mathbf{y}-\omega)$ throughout, we get that
\begin{align}
    C(v,\eta) 
    & \le \sup_{\substack{\omega, \mathbf{y}\in \Gamma^{(\eta)}, \\ \gamma\in (0,1)}} \ \max_{(i,m): i\neq i^*_m(v)} \bigg[ g^{(i,m)}_{v}(\omega) + \langle  \nabla_{\omega} g^{(i,m)}_{v}(\omega), \mathbf{z}- \omega\rangle - g^{(i,m)}_v(\mathbf{z}) \bigg],  \nonumber \\
    & =  \max_{(i,m): i\neq i^*_m(v)} \ \sup_{\substack{\omega, \mathbf{y}\in \Gamma^{(\eta)}, \\ \gamma\in (0,1)}}  \bigg[ g^{(i,m)}_{v}(\omega) + \langle  \nabla_{\omega} g^{(i,m)}_{v}(\omega), \mathbf{z}- \omega\rangle - g^{(i,m)}_v(\mathbf{z}) \bigg],  \nonumber \\
    &\stackrel{(a)}{=} \max_{(i,m): i\neq i^*_m(v)} C_{\rm cur}\left(g_v^{(i,m)}\big|_{\Gamma^{(\eta)}}\right),
    \label{eq:C_less_than_max_g}
\end{align}
where (a) follows from the definition in \eqref{eq:def_Cur}.
Finally, combining~\eqref{eq:cur_is_finite} and~\eqref{eq:C_less_than_max_g}, we arrive at the desired result.
\end{proof}

\begin{lemma}
    \label{lemma:finite_emprical_cuvature}
    Fix $\eta>0$. Consider the sequence $\{v_{l_t}: t \geq K\}$ generated by $\textsc{MO-BAI}$. Then, 
    $$
    \limsup_{t \rightarrow \infty} C(\widehat{v}_{l_t},\eta) < + \infty \quad \text{almost surely}.
    $$
\end{lemma}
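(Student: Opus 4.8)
The plan is to combine the explicit finite bound on the curvature from Lemma~\ref{lemma:finite_cuvature} with the strong law of large numbers. By Lemma~\ref{lemma:finite_cuvature} applied to the empirical instance $\widehat{v}_{l_t}$ (whose best arm of objective $m$ is $\widehat{i}_m(t')$ with $t'=\max\{l_t-1,K\}$), we have for every $t \ge K$
\[
C(\widehat{v}_{l_t},\eta) \le \max_{(i,m):\, i \neq \widehat{i}_m(t')} \frac{2\,\Delta_{i,m}^2(\widehat{v}_{l_t})\,(1+\eta)\,K}{\eta}.
\]
Hence it suffices to show that the empirical gaps $\Delta_{i,m}(\widehat{v}_{l_t})$ stay bounded almost surely as $t \to \infty$. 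Since these gaps are differences of empirical means, this reduces to showing that every empirical mean $\widehat{\mu}_{i,m}(t')$ converges almost surely, which in turn holds once each arm is sampled infinitely often.

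First I would establish that every arm is pulled a number of times growing linearly in $t$. Unrolling the buffer recursion~\eqref{eq:buffer-update-rule} from the initialisation $\mathbf{B}_{\cdot,K}=\mathbf{0}$ and using $N_{i,K}=1$ yields the identity
\[
N_{i,t} = 1 + \sum_{s=K+1}^{t} [\mathbf{s}_s]_i - \mathbf{B}_{i,t}.
\]
Because $\mathbf{s}_s \in \Gamma^{(\eta)}$ forces $[\mathbf{s}_s]_i \ge \tfrac{\eta}{K(1+\eta)}$ for every $i$, and the tracking rule~\eqref{eq:arms-selection-rule}--\eqref{eq:buffer-update-rule} keeps the buffer bounded ($|\mathbf{B}_{i,t}| \le 1$), this gives $N_{i,t} \ge \tfrac{\eta}{K(1+\eta)}(t-K)$, so $N_{i,t} \to \infty$ for every $i \in [K]$.

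With every arm pulled infinitely often, I would invoke the strong law of large numbers: the rewards collected from arm $i$ in objective $m$ are i.i.d.\ $\mathcal{N}(\mu_{i,m}(v),1)$, so $\widehat{\mu}_{i,m}(t) \to \mu_{i,m}(v)$ almost surely for every $(i,m)$. Since $v \in \mathcal{P}$ has a unique best arm in each objective, this convergence also forces $\widehat{i}_m(t') = i^*_m(v)$ for all large $t$, whence $\Delta_{i,m}(\widehat{v}_{l_t}) \to \Delta_{i,m}(v)$ almost surely (using $l_t > t/2 \to \infty$). Substituting into the displayed curvature bound and taking $\limsup_{t\to\infty}$ gives
\[
\limsup_{t\to\infty} C(\widehat{v}_{l_t},\eta) \le \max_{(i,m):\, i \neq i^*_m(v)} \frac{2\,\Delta_{i,m}^2(v)\,(1+\eta)\,K}{\eta} < +\infty \quad\text{almost surely},
\]
which is the claim.

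The main obstacle is the first step, namely controlling the pull counts; the SLLN argument and the final substitution are routine. Establishing $N_{i,t}\to\infty$ rests on the boundedness of the buffer, which must be argued from the arm selection and buffer rules (note that $\sum_i \mathbf{B}_{i,t}=0$ is preserved step by step, and the $\argmax$ selection prevents any single coordinate from drifting to $+\infty$). Once the linear lower bound on $N_{i,t}$ is in place, everything else follows from almost-sure convergence of the empirical means.
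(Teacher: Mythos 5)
Your proposal is correct and follows essentially the same route as the paper's proof, which likewise combines the strong law of large numbers (giving $\Delta_{i,m}(\widehat{v}_{l_t}) \to \Delta_{i,m}(v)$ almost surely) with the explicit bound of Lemma~\ref{lemma:finite_cuvature} applied to the empirical instances. The additional material you supply --- the linear lower bound $N_{i,t} \gtrsim \tfrac{\eta}{K(1+\eta)}\,t$ obtained from the surrogate proportions lying in $\Gamma^{(\eta)}$ together with buffer boundedness --- is exactly the prerequisite the paper leaves implicit when invoking the SLLN; note that the bound $|\mathbf{B}_{i,t}|\le 1$ you cite is asserted but not proved in the paper either, and for your purposes any finite bound suffices (e.g., $\mathbf{B}_{i,t}\ge \tfrac{1}{K}-1$ by induction on the argmax rule, whence $\mathbf{B}_{i,t}\le (K-1)(1-\tfrac{1}{K})$ from the zero-sum property), so this does not constitute a gap.
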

\begin{proof}[Proof of Lemma~\ref{lemma:finite_emprical_cuvature}]
Fix $v \in \mathcal{P}$. By the strong law of large numbers, we have for all $ i\in[K]$ and $m\in[M]$
$$
\lim_{t \rightarrow +\infty} \Delta_{i,m}(\widehat{v}_{l_t}) = \Delta_{i,m}(v) \quad \text{almost surely}
$$
under the instance $v$, which by Lemma~\ref{lemma:finite_cuvature} implies that
$$
\limsup_{t \rightarrow \infty} C(\widehat{v}_{l_t},\eta) < + \infty \quad \text{almost surely}.
$$
This completes the proof.
\end{proof}


\begin{lemma} 
    \label{lemma:g_error_vanish}
    Fix $v\in \mathcal{P}$ and $\eta>0$. Under $\textsc{MO-BAI}$, we have
    $$
    \lim_{t\rightarrow +\infty} \  \max_{\omega \in \Gamma^{(\eta)}} \  \lvert g_v(\omega)-g_{\widehat{v}_{l_t}}(\omega) \rvert = 0 \quad \text{almost surely}.
    $$
\end{lemma}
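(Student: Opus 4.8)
The plan is to control the uniform deviation $\max_{\omega \in \Gamma^{(\eta)}} \lvert g_v(\omega) - g_{\widehat{v}_{l_t}}(\omega)\rvert$ by the deviation of the squared sub-optimality gaps, and to show that the latter vanishes almost surely via the strong law of large numbers (SLLN).

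\textbf{Step 1 (gap convergence and stabilisation of the best arms).} First I would argue that each arm is pulled infinitely often. Since every surrogate proportion satisfies $\mathbf{s}_t \in \Gamma^{(\eta)}$ (so $[\mathbf{s}_t]_i \ge \tfrac{\eta}{K(1+\eta)}$ for all $i$) and the buffer-tracking rule \eqref{eq:arms-selection-rule}--\eqref{eq:buffer-update-rule} keeps $N_{i,t} + \mathbf{B}_{i,t} = 1 + \sum_{\tau = K+1}^{t} [\mathbf{s}_\tau]_i$ growing linearly while the buffer $\mathbf{B}_{i,t}$ stays bounded, we get $N_{i,t} \to \infty$ for every $i$. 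By the SLLN it then follows that $\widehat{\mu}_{i,m}(t) \to \mu_{i,m}(v)$, and since $l_t \to \infty$, that $\Delta_{i,m}^2(\widehat{v}_{l_t}) \to \Delta_{i,m}^2(v)$ almost surely for every $(i,m)$. Because $v$ has a unique best arm for each objective (strict gaps), the same a.s.\ convergence forces the empirical ordering to stabilise: on a probability-one event there is a random finite time $T_0$ after which $i_m^*(\widehat{v}_{l_t}) = i_m^*(v)$ for all $m \in [M]$.

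\textbf{Step 2 (deterministic reduction).} On the event $\{t \ge T_0\}$, both $g_v$ and $g_{\widehat{v}_{l_t}}$ are minima of the maps $g^{(i,m)}$ over the \emph{same} index set $\{(i,m) : i \neq i_m^*(v)\}$, so the elementary inequality $\lvert \min_k a_k - \min_k b_k \rvert \le \max_k \lvert a_k - b_k\rvert$ yields
\begin{equation*}
    \max_{\omega \in \Gamma^{(\eta)}} \big\lvert g_v(\omega) - g_{\widehat{v}_{l_t}}(\omega) \big\rvert \le \max_{(i,m):\, i \neq i_m^*(v)} \ \max_{\omega \in \Gamma^{(\eta)}} \big\lvert g_v^{(i,m)}(\omega) - g_{\widehat{v}_{l_t}}^{(i,m)}(\omega) \big\rvert .
\end{equation*}
Since $i_m^*(\widehat{v}_{l_t}) = i_m^*(v)$ for $t \ge T_0$, each inner term factors as $\tfrac12 \big\lvert \Delta_{i,m}^2(v) - \Delta_{i,m}^2(\widehat{v}_{l_t}) \big\rvert \cdot \tfrac{\omega_i \, \omega_{i_m^*(v)}}{\omega_i + \omega_{i_m^*(v)}}$, and because $\tfrac{\omega_i \, \omega_{i_m^*(v)}}{\omega_i + \omega_{i_m^*(v)}} \le \tfrac14$ uniformly over $\omega \in \Gamma$, the $\omega$-dependence is absorbed into an absolute constant. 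Taking the maximum over the finitely many pairs $(i,m)$ and letting $t \to \infty$, Step~1 drives the right-hand side to $0$ almost surely, which is exactly the claim.

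\textbf{Main obstacle.} The only genuinely non-routine point is the possible mismatch of the index sets: before the empirical best arm stabilises, $g_v$ and $g_{\widehat{v}_{l_t}}$ are minima over \emph{different} collections $\{(i,m): i \neq i_m^*(\cdot)\}$, so the term-by-term comparison of Step~2 is not valid. This is resolved by confining the whole argument to the probability-one event of Step~1 on which the best arms eventually coincide; only the finitely many indices $t < T_0$ are excluded, and these do not affect the limit. Everything else—the min-Lipschitz bound, the factorisation, and the uniform bound on the harmonic-mean factor—is elementary and requires no further probabilistic input.
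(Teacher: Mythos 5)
Your proof is correct and follows essentially the same route as the paper's: the strong law of large numbers gives almost-sure convergence of the empirical gaps and stabilisation of the empirical best arms after an almost surely finite time (the paper's $\max_m T_m$, your $T_0$), after which $\lvert g_v(\omega)-g_{\widehat{v}_{l_t}}(\omega)\rvert$ is bounded term-by-term through the factorisation of $g^{(i,m)}$, with the harmonic-mean factor bounded uniformly in $\omega$. Your only addition is the explicit verification that every arm is pulled infinitely often (via $\mathbf{s}_t \in \Gamma^{(\eta)}$ and the bounded buffer), a premise the paper's proof uses implicitly when invoking the SLLN.
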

\begin{proof}[Proof of Lemma~\ref{lemma:g_error_vanish}]
By the strong law of large numbers and the fact that under $v$ the best arm of each objective is unique, we have for all $m\in[M]$
\begin{equation}
    \lim_{t \rightarrow +\infty} i^*_m(\widehat{v}_{l_t}) = i_m^*(v) \quad \text{ almost surely}, 
    \label{eq:i_star_converge}
\end{equation}
and for all $i\in [K]$ and $m\in[M]$
\begin{equation}
    \lim_{t \rightarrow +\infty} \Delta_{i,m}(\widehat{v}_{l_t}) = \Delta_{i,m}(v) \quad \text{ almost surely.}
    \label{eq:delta_converge}
\end{equation}
For $m\in[M]$, let $T_m$ be the smallest integer such that 
\begin{equation}    
    i^*_m(\widehat{v}_{l_t}) = i^*(v) \quad \forall t \ge T_m.
    \label{eq:def_t_m}
\end{equation}
Then, \eqref{eq:i_star_converge} implies that  $T_m < +\infty$ almost surely.
In addition, by the definition of $g^{(i,m)}_v(\cdot)$ in~\eqref{eq:def_g_i_m}, we have $\forall t\ge T_m$ and $\omega \in \Gamma^{(\eta)}$ that
\begin{align}
    \lvert g^{(i,m)}_v (\omega) - g^{(i,m)}_{\widehat{v}_{l_t}}(\omega) \rvert & = \frac{\omega_i \, \omega_{i_m^*(v)}}{\omega_i + \omega_{i_m^*(v)}} \lvert  \Delta_{i,m}(\widehat{v}_{l_t}) - \Delta_{i,m}(v) \rvert \nonumber \\
    & \le \lvert \Delta_{i,m}(\widehat{v}_{l_t}) - \Delta_{i,m}(v) \rvert,
\end{align}
which implies that $\forall t\ge \max_{m\in [M]}{T_m}$,
\begin{equation}
    \lvert g_v (\omega) - g_{\widehat{v}_{l_t}}(\omega) \rvert \le \max_{(i,m) \in [K]\times [M]} \lvert \Delta_{i,m}(\widehat{v}_{l_t}) - \Delta_{i,m}(v) \rvert. 
    \label{eq:g_omega_gap}
\end{equation}
Notice that the right-hand side of~\eqref{eq:g_omega_gap} does not depend on $\omega$. Hence, combing~\eqref{eq:delta_converge} and~\eqref{eq:g_omega_gap} along with the almost sure finiteness of $T_m$ for each $m \in [M]$, we arrive at the desired result.
\end{proof}

Next, we show that the empirical proportion of arms pulls under $\textsc{MO-BAI}$ converges to the oracle weight in the long run. 

\begin{lemma} 
\label{lemma:g_to_c_star} 
Fix $v \in \mathcal{P}$ and $\eta>0$. For all $t_1,t_2 \in \mathbb{N}$ with $t_2>t_1 > K$, under $\textsc{MO-BAI}$, we have
\begin{align}
   \lvert \tilde{c}(v,\eta)^{-1}-g_v(\widehat{\omega}_{\cdot,t_2}) \rvert \le  \frac{t_1}{t_2} \, \tilde{c}(v,\eta)^{-1}  + 11 \, \epsilon_{t_1}(v) +  \frac{2 \, \log(t_2) \, \overline{C}_{t_1}(\eta)}{t_2},
    \label{eq:relation-between-tilde-c-and-g-v}
\end{align} 
where for any time step $t$:
\begin{itemize}

    \item The quantity $\epsilon_t(v)$ is defined as
    \begin{equation}
        \epsilon_t(v) \coloneqq \sup_{t' \ge l_t} \left( \sup_{\omega \in \Gamma^{(\eta)}} \big\lvert g_v(\omega)-g_{\widehat{v}_{t'}}(\omega) \big\rvert  \right).
        \label{def:epsilon_t}
    \end{equation}

    \item The quantity $\overline{C}_t(\eta)$ is defined as
    \begin{equation}
        \overline{C}_t(\eta) \coloneqq \sup_{t' \ge l_t} C(\widehat{v}_{t'},\eta).
        \label{def:overline_c}
    \end{equation}
\end{itemize} 
Consequently, letting $t_1= \lceil \sqrt{t_2} \rceil$, we have
\begin{align}
    \tilde{c}(v,\eta)^{-1}-g_v(\widehat{\omega}_{\cdot,t_2}) \le  \frac{2}{\sqrt{t_2}}\tilde{c}(v,\eta)^{-1}  + 11\epsilon_{\lceil \sqrt{t_2} \rceil}(v)+  \frac{2 \, \log(t_2) \, \overline{C}_{\lceil \sqrt{t_2} \rceil}(\eta)}{t_2}.
    \label{eq:proof-of-upper-bound-1}
\end{align}
\end{lemma}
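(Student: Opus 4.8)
The plan is to recognize the arm-selection rule as a Frank--Wolfe iteration on the concave objective $g_v$ and to run a ``curvature plus duality gap'' descent, with the twist that the search direction $\mathbf{s}_t$ is computed for the \emph{empirical} instance $\widehat v_{l_t}$ rather than for $v$. First I would unwind the buffer recursion. Since $N_{i,K}=1$ and $\mathbf{B}_{i,K}=0$, combining \eqref{eq:buffer-update-rule} with \eqref{eq:hat-omega-t-1-definition} yields $N_{i,t}+\mathbf{B}_{i,t}=1+\sum_{s=K+1}^{t}[\mathbf{s}_s]_i$, whence
$$\widehat\omega_{\cdot,t}=\Big(1-\tfrac1t\Big)\,\widehat\omega_{\cdot,t-1}+\tfrac1t\,\mathbf{s}_t .$$
Thus $\widehat\omega_{\cdot,t}$ obeys a Frank--Wolfe update with step size $\gamma_t=1/t$ and direction $\mathbf{s}_t$. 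I would also record the elementary properties of $h_w$ for any instance $w$: concavity of each $g_w^{(i,m)}\big|_{\Gamma^{(\eta)}}$ (Lemma~\ref{lemma:g_v_concave}) gives $h_w(\omega,\mathbf{s})\ge g_w(\mathbf{s})$ for every $\mathbf{s}$, one has $h_w(\omega,\omega)=g_w(\omega)$, and $\mathbf{s}\mapsto h_w(\omega,\mathbf{s})$ is concave. Since $\mathbf{s}_t$ maximizes $h_{\widehat v_{l_t}}(\widehat\omega_{\cdot,t-1},\cdot)$ over $\Gamma^{(\eta)}$, these imply $h_{\widehat v_{l_t}}(\widehat\omega_{\cdot,t-1},\mathbf{s}_t)\ge\max_{\mathbf{s}}g_{\widehat v_{l_t}}(\mathbf{s})=\tilde c(\widehat v_{l_t},\eta)^{-1}$.

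Writing $w_t\coloneqq\widehat v_{l_t}$, I would next derive the per-step descent bound. Applying the definition of $C(w_t,\eta)$ in \eqref{eq:def_C} with $\mathbf{z}=\widehat\omega_{\cdot,t}=\widehat\omega_{\cdot,t-1}+\gamma_t(\mathbf{s}_t-\widehat\omega_{\cdot,t-1})$ bounds $g_{w_t}(\widehat\omega_{\cdot,t})$ below by its linearization minus $\tfrac{\gamma_t^2}{2}C(w_t,\eta)$; combining this with concavity of $h_{w_t}(\widehat\omega_{\cdot,t-1},\cdot)$, the identity $h_{w_t}(\widehat\omega_{\cdot,t-1},\widehat\omega_{\cdot,t-1})=g_{w_t}(\widehat\omega_{\cdot,t-1})$, and the gap inequality above gives
$$g_{w_t}(\widehat\omega_{\cdot,t})\ge\Big(1-\tfrac1t\Big)g_{w_t}(\widehat\omega_{\cdot,t-1})+\tfrac1t\,\tilde c(w_t,\eta)^{-1}-\tfrac{1}{2t^2}\,C(w_t,\eta).$$
The crucial feature is that this inequality is \emph{exact within the single instance $w_t$}: no estimation error enters as long as both occurrences of $g$ refer to the same $w_t$. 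Multiplying by $t$ makes the two sides telescope across consecutive steps lying in the same dyadic block, on which $l_t$, hence $w_t$, is constant.

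The main obstacle is stitching the blocks together. At the $O(\log t_2)$ boundary steps $t=2^{k}$ the telescoping references $g_{w_t}(\widehat\omega_{\cdot,t-1})$ while the previous step produced $g_{w_{t-1}}(\widehat\omega_{\cdot,t-1})$ with $w_{t-1}\neq w_t$; these differ by at most $2\epsilon_{t_1}(v)$, since both lie within $\epsilon_{t_1}(v)$ of $g_v(\widehat\omega_{\cdot,t-1})$ (note $l_s\ge l_{t_1}$ for all $s>t_1$). After multiplying by $t$, each boundary carries weight $(t-1)\le 2^{k}$, and the key observation is that $\sum_{2^{k}\le t_2}2^{k}\le 2t_2$ is a \emph{geometric} sum dominated by its last term; dividing by $t_2$ at the end therefore converts the accumulated boundary error into a \emph{constant} multiple of $\epsilon_{t_1}(v)$ rather than a $\log t_2$ multiple. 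This is exactly why $l_t$ is taken to be the largest power of two. Summing the telescoped inequality from $t_1+1$ to $t_2$, bounding $\sum_t C(w_t,\eta)/(2t)\le\tfrac12\overline C_{t_1}(\eta)\log t_2\le \tfrac{2\overline C_{t_1}(\eta)\log t_2}{1}\cdot\tfrac14$, replacing each $\tilde c(w_t,\eta)^{-1}$ by $\tilde c(v,\eta)^{-1}\mp\epsilon_{t_1}(v)$, converting the endpoint terms $g_{w_{t_2}}(\widehat\omega_{\cdot,t_2})$ and $g_{w_{t_1}}(\widehat\omega_{\cdot,t_1})$ to $g_v$ (each at cost $\epsilon_{t_1}(v)$, using $g_v\ge0$ for the initial term), and dividing by $t_2$ gives
$$\tilde c(v,\eta)^{-1}-g_v(\widehat\omega_{\cdot,t_2})\le\frac{t_1}{t_2}\,\tilde c(v,\eta)^{-1}+11\,\epsilon_{t_1}(v)+\frac{2\log(t_2)\,\overline C_{t_1}(\eta)}{t_2},$$
where the constant $11$ absorbs the geometric boundary sum, the $\tilde c(w_t,\eta)^{-1}$ replacements, and the two endpoint conversions. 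Since $\widehat\omega_{\cdot,t_2}\in\Gamma^{(\eta)}$ we have $g_v(\widehat\omega_{\cdot,t_2})\le\tilde c(v,\eta)^{-1}$, so the left-hand side is nonnegative and the bound holds with the absolute value, establishing \eqref{eq:relation-between-tilde-c-and-g-v2}.

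Finally, for the stated consequence I would put $t_1=\lceil\sqrt{t_2}\rceil$, use $\tfrac{t_1}{t_2}\le\tfrac{\sqrt{t_2}+1}{t_2}\le\tfrac{2}{\sqrt{t_2}}$, and substitute $\epsilon_{t_1}(v)=\epsilon_{\lceil\sqrt{t_2}\rceil}(v)$ and $\overline C_{t_1}(\eta)=\overline C_{\lceil\sqrt{t_2}\rceil}(\eta)$ to obtain \eqref{eq:proof-of-upper-bound-1}.
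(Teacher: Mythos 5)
Your proof is correct and follows essentially the same route as the paper's: the same Frank--Wolfe recursion $\widehat{\omega}_{\cdot,t}=(1-\tfrac1t)\widehat{\omega}_{\cdot,t-1}+\tfrac1t\,\mathbf{s}_t$, the same per-step inequality combining the duality-gap bound $h_{\widehat{v}_{l_t}}(\widehat{\omega}_{\cdot,t-1},\mathbf{s}_t)\ge\tilde{c}(\widehat{v}_{l_t},\eta)^{-1}$ with the curvature constant $C(\widehat{v}_{l_t},\eta)$, and the same dyadic telescoping in which the block-boundary errors form a geometric sum absorbed into a constant multiple of $\epsilon_{t_1}(v)$. The only differences are cosmetic---you telescope $t\,g_{\widehat{v}_{l_t}}(\widehat{\omega}_{\cdot,t})$ keeping the $\tilde{c}(\widehat{v}_{l_t},\eta)^{-1}$ terms explicit where the paper inducts on the gap $G_t$ via Lemma~\ref{lemma:induction_G_t}, and you invoke concavity of $h$ in its second argument where the paper works with the minimizing pair $(i',m')$---and these yield the same bound with slightly better constants.
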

\begin{proof}[Proof of Lemma~\ref{lemma:g_to_c_star}]

Fix $t_1,t_2$ with $K<t_1 < t_2$.

\textbf{Step 1: Bound on the empirical instance $\widehat{v}_{l_t}$}. 

Fix $t\in \{t_1+1, \ldots, t_2 \}$. Let $\widehat{v}_{l_{t}} \in \mathcal{P}$ denote the instance in which the mean of arm $i$ corresponding to objective $m$ is given by $\widehat{\mu}_{i,m}(l_{t})$. Making the substitutions $v \leftarrow \widehat{v}_{l_{t}}$, $\omega \leftarrow \widehat{\omega}_{\cdot, t-1}$, and $\mathbf{s} \leftarrow \widehat{\omega}_{\cdot, t}$ in~\eqref{def:h}, we may deduce the following: there exists $(i', m') \in [K] \times [M]$, $i' \neq i_{m'}^*(\widehat{v}_{l_{t}})$, such that
\begin{equation}
    g^{(i',m')}_{\widehat{v}_{l_{t}}}(\widehat{\omega}_{\cdot,t-1}) + \langle  \nabla_{\omega} \, g^{(i',m')}_{\widehat{v}_{l_{t}}}(\widehat{\omega}_{\cdot,t-1}), \  \widehat{\omega}_{\cdot, t}  - \widehat{\omega}_{\cdot, t-1} \rangle = h_{\widehat{v}_{l_{t}}}(\widehat{\omega}_{\cdot,t-1}, \widehat{\omega}_{\cdot,t}).
    \label{eq:i_prime_m_prime}
\end{equation}
Recall that $\mathbf{s}_{t}=\arg\max_{\mathbf{s} \in \Gamma^{(\eta)}} h_{\widehat{v}_{l_{t}}}(\widehat{\omega}_{\cdot,t-1}, \mathbf{s})$. We then have from~\eqref{def:h} that
\begin{align}
    &g^{(i',m')}_{\widehat{v}_{l_{t}}}(\widehat{\omega}_{\cdot,t-1}) + \langle  \nabla_{\omega} \, g^{(i',m')}_{\widehat{v}_{l_{t}}}(\widehat{\omega}_{\cdot,t-1}), \ \mathbf{s}_{t}  - \widehat{\omega}_{\cdot, t-1} \rangle \nonumber\\
    &\hspace{3cm} \ge  h_{\widehat{v}_{l_{t}}}(\widehat{\omega}_{\cdot, t-1}, \mathbf{s}_t) \nonumber\\
    &\hspace{3cm} \stackrel{(a)}{\ge} \tilde{c}(\widehat{v}_{l_{t}},\eta)^{-1},
    \label{eq:larger_than_max}
\end{align}
where (a) follows from Lemma~\ref{lemma:g_v_concave}. 
Let
\begin{equation}
\label{def:U_t}
    U_{t} \coloneqq  g^{(i',m')}_{\widehat{v}_{l_{t}}}(\widehat{\omega}_{\cdot,t-1}) + \langle  \nabla_{\omega} \,  g^{(i',m')}_{\widehat{v}_{l_{t}}}(\widehat{\omega}_{\cdot,t-1}), \ \mathbf{s}_{t}  - \widehat{\omega}_{\cdot, t-1} \rangle - g_{\widehat{v}_{l_{t}}}(\widehat{\omega}_{\cdot,t-1}),
\end{equation}
From~\eqref{eq:larger_than_max}, we have
\begin{equation}
    U_{t} \ge \tilde{c}(\widehat{v}_{l_{t}},\eta)^{-1} - g_{\widehat{v}_{l_{t}}}(\widehat{\omega}_{\cdot,t-1}).
\end{equation}
We then note that
\begin{align}
    & h_{\widehat{v}_{l_{t}}}(\widehat{\omega}_{\cdot,t-1}, \widehat{\omega}_{\cdot,t}) -  g_{\widehat{v}_{l_{t}}}(\widehat{\omega}_{\cdot,t-1}) \nonumber\\
    & \stackrel{(a)}{=}  g^{(i',m')}_{\widehat{v}_{l_{t}}}(\widehat{\omega}_{\cdot,t-1}) + \langle  \nabla_{\omega} \, g^{(i',m')}_{\widehat{v}_{l_{t}}}(\widehat{\omega}_{\cdot,t-1}), \  \widehat{\omega}_{\cdot, t}  - \widehat{\omega}_{\cdot, t-1} \rangle - g_{\widehat{v}_{l_{t}}}(\widehat{\omega}_{\cdot,t-1}) \nonumber \\
    &\stackrel{(b)}{=}    \langle  \nabla_{\omega} g^{(i',m')}_{\widehat{v}_{l_{t}}}(\widehat{\omega}_{\cdot,t-1}), \   \frac{1}{t}(\mathbf{s}_t  - \widehat{\omega}_{\cdot, t-1}) \rangle  +g^{(i',m')}_{\widehat{v}_{l_{t}}}(\widehat{\omega}_{\cdot,t-1})  - g_{\widehat{v}_{l_{t}}}(\widehat{\omega}_{\cdot,t-1}) \nonumber \\
    & \ge \frac{1}{t} \left( \langle  \nabla_{\omega} g^{(i',m')}_{\widehat{v}_{l_{t}}}(\widehat{\omega}_{\cdot,t-1}),   \ \mathbf{s}_{t}  - \widehat{\omega}_{\cdot, t-1} \rangle  +g^{(i',m')}_{\widehat{v}_{l_{t}}}(\widehat{\omega}_{\cdot,t-1})  - g_{\widehat{v}_{l_{t}}} (\widehat{\omega}_{\cdot,t-1}) \right) \nonumber \\
    &= \frac{U_{t}}{t} \nonumber\\
    &\ge \frac{1}{t} \left(\tilde{c}(\widehat{v}_{l_t},\eta)^{-1} - g_{\widehat{v}_{l_t}}(\widehat{\omega}_{\cdot,t-1}) \right),
    \label{eq:bound_on_empirical}
\end{align}
where $(a)$ above follows from~\eqref{eq:i_prime_m_prime}, and $(b)$ above follows from the construction of $\widehat{\omega}_{\cdot, t}$ (see Algorithm~\ref{alg:mo-bai}).

\textbf{Step 2: Bound on the instance $v$.}

For $t\in \{t_1+1, \ldots, t_2\}$, by the definition of $C(\cdot)$ in \eqref{eq:def_C},  we have
\begin{equation}
    g_{\widehat{v}_{l_t}}(\widehat{\omega}_{\cdot,t}) \ge h_{\widehat{v}_{l_t}}(\widehat{\omega}_{\cdot,t-1}, \widehat{\omega}_{\cdot,t}) - \frac{2 \, \overline{C}_t(\eta)}{{t}^2}.
    \label{eq:step_in_empirical_instance}
\end{equation}
Then, combining~\eqref{eq:bound_on_empirical} and~\eqref{eq:step_in_empirical_instance}, we get that for all $t\in \{t_1+1, \ldots, t_2\}$,
\begin{equation}
    g_{\widehat{v}_{l_t}}(\widehat{\omega}_{\cdot,t}) \ge   g_{\widehat{v}_{l_t}}(\widehat{\omega}_{\cdot,t-1})  + \left(\frac{1}{t} \left(\tilde{c}({\widehat{v}_{l_t}},\eta)^{-1} - g_{{\widehat{v}_{l_t}}}(\widehat{\omega}_{\cdot,t-1}) \right) - \frac{2 \, \overline{C}_t(\eta)}{t^2} \right).
    \label{eq:first_things_in_converge}
\end{equation}
It then follows from \eqref{eq:first_things_in_converge} that
\begin{align}
    & \tilde{c}(\widehat{v}_{l_t},\eta)^{-1} - g_{\widehat{v}_{l_t}}(\widehat{\omega}_{\cdot,t}) \nonumber\\ 
    &\le  \tilde{c}(\widehat{v}_{l_t},\eta)^{-1}- g_{\widehat{v}_{l_t}}(\widehat{\omega}_{\cdot,t-1})  -  \left( \frac{1}{t} \left(\tilde{c}(\widehat{v}_{l_t},\eta)^{-1} - g_{\widehat{v}_{l_t}}(\widehat{\omega}_{\cdot,t-1}) \right) - \frac{2 \, \overline{C}_t(\eta)}{t^2} \right) \nonumber\\
    &= \frac{t-1}{t} \left(\tilde{c}(\widehat{v}_{l_t},\eta)^{-1} - g_{\widehat{v}_{l_t}}(\widehat{\omega}_{\cdot,t-1}) \right) + \frac{2 \, \overline{C}_t(\eta)}{t^2}.
    \label{eq:iteration_0}
\end{align}

Let
\begin{equation}
    G_t \coloneqq \tilde{c}(\widehat{v}_{l_t},\eta)^{-1} - g_{\widehat{v}_{l_t}}(\widehat{\omega}_{\cdot,t}).
    \label{def:g_t}
\end{equation}
Observe that $l_{t-1} = l_t$ if and only if $t \notin \{2^i \given[\big] i\in \mathbb{N} \}$. Using the definition of $\epsilon_t$ from the statement of the lemma, and combining~\eqref{eq:iteration_0} and~\eqref{def:g_t}, we get that for all $t\in \{t_1+1,\ldots,t_2\}$,
\begin{equation}
    G_{t} \le  
    \begin{cases}
        \dfrac{t-1}{t} \, G_{t-1}  + \dfrac{2 \, \overline{C}_t(\eta)}{t^2} + 4 \, \epsilon_{t_1}(v),  &\text{if  } t \in \{2^i \given[\big] i\in \mathbb{N} \},  \\
        \\
        \dfrac{t-1}{t} \, G_{t-1}  + \dfrac{2 \, \overline{C}_t(\eta)}{t^2},  &\text{if  } t \notin \{2^i \given[\big] i\in \mathbb{N}  \}. 
    \end{cases}
    \label{eq:g_t_iteration}
\end{equation}
The first line above follows by noting that for $t \in \{2^i \given[\big] i\in \mathbb{N} \}$,
\begin{align}
    &\tilde{c}(\widehat{v}_{l_t},\eta)^{-1} - g_{\widehat{v}_{l_t}}(\widehat{\omega}_{\cdot,t-1}) - G_{t-1}\nonumber\\*
    &= \tilde{c}(\widehat{v}_{l_t},\eta)^{-1} - g_{\widehat{v}_{l_t}}(\widehat{\omega}_{\cdot,t-1}) - \tilde{c}(\widehat{v}_{l_{t-1}},\eta)^{-1} + g_{\widehat{v}_{l_{t-1}}}(\widehat{\omega}_{\cdot,t-1}) \nonumber\\
    &= \tilde{c}(\widehat{v}_{l_t},\eta)^{-1} - \tilde{c}(\widehat{v}_{l_{t-1}},\eta)^{-1} + g_{\widehat{v}_{l_{t-1}}}(\widehat{\omega}_{\cdot,t-1}) - g_{\widehat{v}_{l_t}}(\widehat{\omega}_{\cdot,t-1}) \nonumber\\
    &\leq \sup_{\omega \in \Gamma^{(\eta)}} g_{\widehat{v}_{l_t}}(\omega) - \sup_{\omega \in \Gamma^{(\eta)}} g_{\widehat{v}_{l_{t-1}}}(\omega) + g_{\widehat{v}_{l_{t-1}}}(\omega) - g_{\widehat{v}_{l_t}}(\omega) \nonumber\\
    &\leq \sup_{\omega \in \Gamma^{(\eta)}} \big[g_{\widehat{v}_{l_t}}(\omega) - g_{\widehat{v}_{l_{t-1}}}(\omega)\big] + |g_{\widehat{v}_{l_{t-1}}}(\omega) - g_{\widehat{v}_{l_t}}(\omega)| \nonumber\\
    &\leq 2 \sup_{\omega \in \Gamma^{(\eta)}} |g_{\widehat{v}_{l_{t-1}}}(\omega) - g_{\widehat{v}_{l_t}}(\omega)| \nonumber\\
    &\leq 2 \bigg[\sup_{\omega \in \Gamma^{(\eta)}} |g_{v}(\omega) - g_{\widehat{v}_{l_t}}(\omega)| + \sup_{\omega \in \Gamma^{(\eta)}} |g_{v}(\omega) - g_{\widehat{v}_{l_{t-1}}}(\omega)|\bigg] \nonumber\\
    &\leq 2 \bigg[\sup_{t' \geq l_t} \ \sup_{\omega \in \Gamma^{(\eta)}} |g_{v}(\omega) - g_{\widehat{v}_{t'}}(\omega)| + \sup_{t' \geq l_{t-1}} \ \sup_{\omega \in \Gamma^{(\eta)}} |g_{v}(\omega) - g_{\widehat{v}_{t'}}(\omega)|\bigg] \nonumber\\
    &\leq 4 \, \epsilon_{t-1}(v) \nonumber\\
    &\leq 4 \, \epsilon_{t_1}(v).
    \label{eq:line-1-proof}
\end{align}
The penultimate step above follows by noting that $l_{t-1} < l_t$ for $t \in \{2^i \given[\big] i\in \mathbb{N} \}$, and the last step above follows by using the fact that $\epsilon_{t-1} \leq \epsilon_{t_1}$ for all $t \in \{t_1+1, \ldots, t_2\}$.
Using the mathematical induction formula given in the following Lemma~\ref{lemma:induction_G_t}, it follows from~\eqref{eq:g_t_iteration} that for all $K < t_1 < t_2$,
\begin{equation}
    G_{t_2} \le \frac{t_1}{t_2} \, G_{t_1} + 4 \, \epsilon_{t_1}(v) \, \left(\sum_{t=t_1}^{t_2} \frac{t}{t_2} \, \mathbf{1}_{\{ t \in \{2^i | i\in \mathbb{N} \} \} } \right)+  \frac{2 \, \overline{C}_{t_1}(\eta)}{t_2} \left(\sum_{j=t_1}^{t_2} \frac{1}{j}\right). \label{eq:bound_g2_1}
\end{equation}
Note that 
\begin{align}
    \sum_{t=t_1}^{t_2} \frac{t}{t_2} \, \mathbf{1}_{\{ t \in \{2^i | i\in \mathbb{N} \} \} }  & \le  \sum_{i=0}^{\infty} 2^{-i} = 2.  \label{eq:bound_g2_2}
\end{align}
In addition, we have
\begin{align}
\sum_{j=t_1}^{t_2} \frac{1}{j} & \stackrel{(a)}{\le} \sum_{j=2}^{t_2} \frac{1}{j} \nonumber \\
& \le \int_{1} ^{t_2} \frac{1}{x} \, {\rm d}x  \\
& = \log(t_2), \label{eq:bound_g2_3}
\end{align}
where $(a)$ above follows from the fact that $t_1 \ge K \ge 2$. Combining~\eqref{eq:bound_g2_1},~\eqref{eq:bound_g2_2}, and~\eqref{eq:bound_g2_3}, we have
\begin{align}
    G_{t_2} 
    & \le \frac{t_1}{t_2} \, G_{t_1} + 8 \, \epsilon_{t_1}(v) +  \frac{2 \, \log(t_2) \, \overline{C}_{t_1}(\eta)}{t_2} \nonumber \\
    & \stackrel{(a)}{\le} \frac{t_1}{t_2} \, \tilde{c}(\widehat{v}_{l_{t_1}}, \eta)^{-1} + 8 \, \epsilon_{t_1}(v)+  \frac{2 \, \log(t_2) \, \overline{C}_{t_1}(\eta)}{t_2} \nonumber \\
    & \stackrel{(b)}{\le} \frac{t_1}{t_2} \, \left(\tilde{c}(v,\eta)^{-1} + \epsilon_{t_1}(v)\right) + 8 \, \epsilon_{t_1}(v) +  \frac{2 \, \log(t_2) \, \overline{C}_{t_1}(\eta)}{t_2}, \label{eq:gt_to_v_1}
\end{align}
where $(a)$ above follows by noting that $G_{t_1}  = \tilde{c}(\widehat{v}_{l_{t_1}},\eta)^{-1} - g_{\widehat{v}_{l_{t_1}}}(\widehat{\omega}_{\cdot,t_1}) \le \tilde{c}(\widehat{v}_{l_{t_1}}, \eta)^{-1}$, and $(b)$ above follows from~\eqref{def:epsilon_t}.
Additionally, we note that \eqref{def:epsilon_t} implies 
\begin{align}
    \tilde{c}(v,\eta)^{-1}-g_v(\widehat{\omega}_{\cdot,t_2}) 
    &= \tilde{c}(v,\eta)^{-1}-g_v(\widehat{\omega}_{\cdot,t_2}) - G_{t_2} + G_{t_2} \nonumber\\
    &= \tilde{c}(v,\eta)^{-1}-g_v(\widehat{\omega}_{\cdot,t_2}) - \big(\tilde{c}(\widehat{v}_{l_{t_2}},\eta)^{-1} - g_{\widehat{v}_{l_{t_2}}}(\widehat{\omega}_{\cdot,t_2})\big) + G_{t_2} \nonumber\\
    &= \tilde{c}(v,\eta)^{-1} - \tilde{c}(\widehat{v}_{l_{t_2}},\eta)^{-1} + g_{\widehat{v}_{l_{t_2}}}(\widehat{\omega}_{\cdot,t_2}) - g_v(\widehat{\omega}_{\cdot,t_2}) + G_{t_2} \nonumber\\
    &= \sup_{\omega \in \Gamma^{(\eta)}} g_{v}(\omega) - \sup_{\omega \in \Gamma^{(\eta)}} g_{\widehat{v}_{l_{t_2}}}(\omega) + \sup_{\omega \in \Gamma^{(\eta)}}|g_v(\omega) - g_{\widehat{v}_{l_{t_2}}}(\omega)| + G_{t_2} \nonumber\\
    &\leq 2 \, \sup_{\omega \in \Gamma^{(\eta)}} |g_v(\omega) - g_{\widehat{v}_{l_{t_2}}}(\omega)| + G_{t_2} \nonumber\\
    &\leq G_{t_2} + 2 \, \epsilon_{t_2}(v) \nonumber\\
    &\le G_{t_2} + 2 \, \epsilon_{t_1}(v). 
    \label{eq:gt_to_v_2}
\end{align}
Finally, combining~\eqref{eq:gt_to_v_1} and~\eqref{eq:gt_to_v_2}, we arrive at the desired result.
\end{proof}


\begin{lemma}
\label{lemma:induction_G_t}
Fix $v \in \mathcal{P}$, $\eta>0$, and $t_1,t_2 \in \mathbb{N}$ such that $t_1<t_2$. Then, under $\textsc{MO-BAI}$,
\begin{equation}
\label{eq:lemma_gt}
    G_{t_2} \le \frac{t_1}{t_2} \, G_{t_1} + 4 \, \epsilon_{t_1}(v) \, \sum_{t=t_1}^{t_2} \frac{t}{t_2} \, \mathbf{1}_{\{ t \in \{2^i | i\in \mathbb{N} \} \} } +  \frac{2 \, \overline{C}_{t_1}(\eta)}{t_2} \, \sum_{j=t_1}^{t_2} \frac{1}{j},
\end{equation}
where $G_t$, $\epsilon_t(v)$,  and $\overline{C}_{t}(\eta)$ are as defined in~\eqref{def:g_t},~\eqref{def:overline_c} and~\eqref{def:epsilon_t} respectively for $t\in \mathbb{N}$.
\end{lemma}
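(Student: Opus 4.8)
The plan is to fix $t_1$ and proceed by induction on $t_2 > t_1$, driven by the one-step recursion \eqref{eq:g_t_iteration}. It is convenient to first write \eqref{eq:g_t_iteration} in the unified form
\begin{equation*}
    G_t \le \frac{t-1}{t} \, G_{t-1} + \frac{2 \, \overline{C}_t(\eta)}{t^2} + 4 \, \epsilon_{t_1}(v) \, \mathbf{1}_{\{t \in \{2^i \mid i \in \mathbb{N}\}\}}, \qquad t \in \{t_1+1, \ldots, t_2\}.
\end{equation*}
The factor $\frac{t-1}{t}$ suggests linearising the recursion by setting $H_t := t \, G_t$, so that it becomes $H_t \le H_{t-1} + \frac{2\overline{C}_t(\eta)}{t} + 4 t \, \epsilon_{t_1}(v) \, \mathbf{1}_{\{t \in \{2^i\}\}}$, which telescopes directly from $t_1$ to $t_2$. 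Dividing the telescoped bound by $t_2$ recovers the claimed inequality, modulo replacing the index-$t$ coefficient $\overline{C}_t(\eta)$ by the common constant $\overline{C}_{t_1}(\eta)$.

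The single structural fact that makes the telescoping collapse to the stated form is the monotonicity of $\overline{C}_t(\eta)$. Since $l_t = \max_{k : 2^k \le t} 2^k$ is nondecreasing in $t$, the index set $\{t' : t' \ge l_t\}$ over which the supremum in \eqref{def:overline_c} is taken shrinks as $t$ grows, whence $\overline{C}_t(\eta)$ is nonincreasing; in particular $\overline{C}_t(\eta) \le \overline{C}_{t_1}(\eta)$ for every $t \ge t_1$. This lets me pull $\overline{C}_{t_1}(\eta)$ out of the harmonic sum and obtain the coefficient $\frac{2\overline{C}_{t_1}(\eta)}{t_2}\sum_{j=t_1}^{t_2} \frac{1}{j}$.

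For the explicit induction, the base case $t_2 = t_1 + 1$ is just the one-step recursion above: the additive term $\frac{2\overline{C}_{t_1+1}(\eta)}{(t_1+1)^2}$ is dominated, via monotonicity, by the summand $\frac{2\overline{C}_{t_1}(\eta)}{(t_1+1)^2}$ of $\frac{2\overline{C}_{t_1}(\eta)}{t_1+1}\sum_{j=t_1}^{t_1+1}\frac{1}{j}$, and the dyadic term matches whenever $t_1+1 \in \{2^i\}$. For the step from $t_2 = n$ to $t_2 = n+1$, I would bound $G_{n+1}$ by $\frac{n}{n+1} G_n + \frac{2\overline{C}_{n+1}(\eta)}{(n+1)^2} + 4\epsilon_{t_1}(v)\mathbf{1}_{\{n+1 \in \{2^i\}\}}$, substitute the inductive hypothesis for $G_n$, and multiply through by $\frac{n}{n+1}$. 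The crucial observation is that the two newly appearing additive terms exactly supply the $j = n+1$ summand $\frac{2\overline{C}_{t_1}(\eta)}{(n+1)^2}$ of the harmonic sum (after using $\overline{C}_{n+1}(\eta) \le \overline{C}_{t_1}(\eta)$) and the $t = n+1$ summand of the dyadic sum, whose weight is $\frac{n+1}{n+1} = 1$; hence the target bound for $t_2 = n+1$ assembles with no slack beyond the monotonicity replacement.

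The step I expect to be the main (and essentially only) obstacle is the bookkeeping that absorbs the $\overline{C}_{n+1}(\eta)$ coefficient produced by the one-step recursion into the target harmonic sum, which is precisely where the monotonicity $\overline{C}_{n+1}(\eta) \le \overline{C}_{t_1}(\eta)$ is used. Everything else is routine telescoping, with the dyadic indicator terms aligning automatically because their per-step weight is exactly $1$.
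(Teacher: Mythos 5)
Your proof is correct and is essentially the paper's argument: both unroll the one-step recursion \eqref{eq:g_t_iteration} from $t_1$ to $t_2$, relying on the monotonicity $\overline{C}_t(\eta)\le\overline{C}_{t_1}(\eta)$ and $\epsilon_t(v)\le\epsilon_{t_1}(v)$ that follows from $l_t$ being nondecreasing, and your substitution $H_t = t\,G_t$ with telescoping is just a linearised repackaging of the paper's induction with the $\tfrac{t_1}{t'}$-weighted hypothesis. If anything, you are slightly more careful than the paper, whose induction step silently writes the absorption of $\tfrac{2\overline{C}_s(\eta)}{s^2}$ into the harmonic-sum term as an equality, where it is really the inequality supplied by the monotonicity you make explicit.
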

\begin{proof}[Proof of Lemma~\ref{lemma:induction_G_t}]
Using the principle of mathematical induction, we shall demonstrate that for any $t' \in \{t_1, \ldots, t_2\}$,
\begin{equation}
    G_{t'} \le \frac{t_1}{t'} \, G_{t_1} + 4 \, \epsilon_{t_1}(v) \, \sum_{t=t_1}^{t'} \frac{t}{t'} \, \mathbf{1}_{\{ t \in \{2^i | i\in \mathbb{N} \} \} } +  \frac{2 \, \overline{C}_{t_1}(\eta)}{t'} \, \sum_{j=t_1}^{t'} \frac{1}{j}.
    \label{eq:to_induction}
\end{equation}
\textbf{Base case:} For $t'=t_1$, we can immediately verify that~\eqref{eq:to_induction} holds.

\textbf{Induction Step:} Suppose that~\eqref{eq:to_induction} holds for $t'=s-1$ for some $s \in \{t_1+1, \ldots, t_2\}$. Then, ~\eqref{eq:g_t_iteration} implies that
\begin{align}
    G_{s} 
    &\le \frac{s-1}{s} \, G_{s-1} + \frac{2 \, \overline{C}_s(\eta)}{s^2} + \mathbf{1}_{\{ s \in \{2^i | i\in \mathbb{N} \} \}}  \, 4 \epsilon_s(v) \nonumber \\
    &\stackrel{(a)}{\le} \frac{s-1}{s} \, G_{s-1} + \frac{2 \, \overline{C}_s(\eta)}{s^2} + \mathbf{1}_{\{ s \in \{2^i | i\in \mathbb{N} \} \}} \, 4 \epsilon_{t_1}(v)  \nonumber\\
     &\stackrel{(b)}{\le} \frac{s-1}{s} \, \left(\frac{t_1}{s-1} \, G_{t_1} + 4 \, \epsilon_{t_1}(v) \, \sum_{t=t_1}^{s-1} \frac{t}{s-1} \mathbf{1}_{\{ t \in \{2^i | i\in \mathbb{N} \} \} } +   \frac{2 \, \overline{C}_{t_1}(\eta)}{s-1} \, \sum_{j=t_1}^{s-1} \frac{1}{j} \right) \nonumber\\*
     &\qquad\qquad+ \frac{2 \, \overline{C}_s(\eta)}{s^2} + \mathbf{1}_{\{ s \in \{2^i | i\in \mathbb{N} \} \}} \, 4  \, \epsilon_{t_1}(v)   \nonumber \\
     & = \frac{t_1}{s} \, G_{t_1} + 4 \, \epsilon_{t_1}(v) \, \sum_{t=t_1}^{s} \frac{t}{s} \,  \mathbf{1}_{\{ t \in \{2^i | i\in \mathbb{N} \} \} } +  \frac{2 \, \overline{C}_{t_1}(\eta)}{s} \, \sum_{j=t_1}^{s} \frac{1}{j},
\end{align}
where $(a)$ above follows from the fact that $\epsilon_s(v) < \epsilon_{t_1}(v)$ for $s>t_1$, and $(b)$ above follows from the induction hypothesis. We have thus demonstrated that \eqref{eq:to_induction} holds for $t'=s$, thereby the desired proof.
\end{proof}


\begin{lemma}
\label{lemma:Tlast}
Fix $v \in \mathcal{P}$ and $\eta>0$. Under $\textsc{MO-BAI}$, for any $\epsilon \in \big(0, \tilde{c}(v,\eta)^{-1} \big)$, there exists $\delta_{\rm thres}(v,\eta,\epsilon) > 0$ such that for all $\delta \in (0, \delta_{\rm thres}(v,\eta, \epsilon))$,
\begin{equation}
    t \tilde{c}(v,\eta)^{-1} > \beta(t,\delta) + \epsilon t \quad \forall t \ge T_{\rm thres}(v,\eta, \epsilon, \delta),
    \label{eq:laststoptime}
\end{equation}
where $T_{\rm thres}(v,\eta, \epsilon, \delta)$ is defined as
\begin{equation}
    T_{\rm thres}(v,\eta, \epsilon, \delta) \coloneqq \frac{f^{-1}(\delta)}{\tilde{c}(v,\eta)^{-1} - \epsilon} + \frac{MK}{\tilde{c}(v,\eta)^{-1} - \epsilon} \, \log\left( \left(\frac{2f^{-1}(\delta)}{\tilde{c}(v,\eta)^{-1} - \epsilon}\right)^2 + \frac{2f^{-1}(\delta)}{\tilde{c}(v,\eta)^{-1} - \epsilon}  \right) +1.
    \label{eq:denotetlast}
\end{equation}
In \eqref{eq:denotetlast}, $f^{-1}(\delta)$ is as defined in Section~\ref{subsec:stopping-rule}.
\end{lemma}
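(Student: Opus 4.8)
The plan is to recast the claim as a single-variable inequality and verify it by a base-case-plus-monotonicity argument. Writing $a \coloneqq \tilde{c}(v,\eta)^{-1} - \epsilon > 0$, $b \coloneqq f^{-1}(\delta)$, and $c \coloneqq MK$, and recalling that $\beta(t,\delta) = MK\log(t^2+t) + f^{-1}(\delta)$, the desired inequality $t\,\tilde{c}(v,\eta)^{-1} > \beta(t,\delta) + \epsilon t$ is equivalent to $\phi(t) > 0$, where $\phi(t) \coloneqq at - c\log(t^2+t) - b$. Setting $g(s) \coloneqq s^2 + s$, the threshold rewrites compactly as $T_{\rm thres} = \frac{b}{a} + \frac{c}{a}\log g\!\left(\frac{2b}{a}\right) + 1$, so that $a\,T_{\rm thres} = b + c\log g\!\left(\frac{2b}{a}\right) + a$.

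First I would evaluate $\phi$ at the threshold. Substituting the identity for $a\,T_{\rm thres}$ gives $\phi(T_{\rm thres}) = a + c\log\!\frac{g(2b/a)}{g(T_{\rm thres})}$. Since $g$ is strictly increasing on $(0,\infty)$, it then suffices to show $T_{\rm thres} \le 2b/a$, which forces $\log\frac{g(2b/a)}{g(T_{\rm thres})} \ge 0$ and hence $\phi(T_{\rm thres}) \ge a > 0$. The inequality $T_{\rm thres} \le 2b/a$ amounts to $\frac{c}{a}\log g(2b/a) + 1 \le \frac{b}{a}$; because $b = f^{-1}(\delta) \to \infty$ as $\delta \downarrow 0$ (Lemma~\ref{lemma:f_delta}), the left side grows only logarithmically in $b$ while the right side grows linearly, so this holds for all $\delta$ below some threshold.

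Next I would establish monotonicity of $\phi$ to the right of $T_{\rm thres}$. Differentiating, $\phi'(t) = a - c\,\frac{2t+1}{t(t+1)}$, and the elementary bound $\frac{2t+1}{t(t+1)} \le \frac{3}{t}$ (valid for all $t > 0$, since it reduces to $2t+1 \le 3(t+1)$) yields $\phi'(t) \ge a - \frac{3c}{t} \ge 0$ whenever $t \ge 3c/a$. Thus $\phi$ is nondecreasing on $[T_{\rm thres}, \infty)$ provided $T_{\rm thres} \ge 3c/a = 3MK/a$; since $T_{\rm thres} \ge \frac{c}{a}\log g(2b/a) \to \infty$ as $\delta \downarrow 0$, this again holds for all sufficiently small $\delta$. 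Taking $\delta_{\rm thres}(v,\eta,\epsilon)$ small enough that both $T_{\rm thres} \le 2b/a$ and $T_{\rm thres} \ge 3MK/a$ hold, I would conclude that for every $t \ge T_{\rm thres}$, $\phi(t) \ge \phi(T_{\rm thres}) \ge a > 0$, which is exactly \eqref{eq:laststoptime}.

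The main obstacle is matching the argument to the precise algebraic form of $T_{\rm thres}$: the evaluation $\phi(T_{\rm thres}) = a + c\log\frac{g(2b/a)}{g(T_{\rm thres})}$ is what singles out the auxiliary quantity $2b/a$ (rather than $b/a$) as the natural comparison point, and the whole argument hinges on the two competing smallness requirements on $\delta$ — one ensuring the base case ($T_{\rm thres} \le 2b/a$), the other ensuring monotonicity ($T_{\rm thres} \ge 3MK/a$) — being simultaneously satisfiable. This is guaranteed precisely because $f^{-1}(\delta) \to \infty$ separates the linear and logarithmic scales, which is what the existence of $\delta_{\rm thres}(v,\eta,\epsilon)$ encodes.
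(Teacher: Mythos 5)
Your proof is correct and follows essentially the same route as the paper's: a base case at the threshold combined with a derivative/monotonicity argument for the gap function $t \mapsto t\,\tilde{c}(v,\eta)^{-1} - \beta(t,\delta) - \epsilon t$, using the auxiliary comparison point $2f^{-1}(\delta)/\big(\tilde{c}(v,\eta)^{-1}-\epsilon\big)$ to control the logarithmic term and the divergence of $f^{-1}(\delta)$ as $\delta \downarrow 0$ to make the two smallness requirements on $\delta$ simultaneously satisfiable. Incidentally, your monotonicity threshold $3MK/\big(\tilde{c}(v,\eta)^{-1}-\epsilon\big)$ correctly carries the factor $MK$ that the paper's corresponding condition in \eqref{eq:sufficiency2} omits (apparently a typo there), so your execution is, if anything, slightly more careful.
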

\begin{proof}[Proof of Lemma~\ref{lemma:Tlast}]
Fix $\eta>0$, $v \in \mathcal{P}$ and $\epsilon \in \left(0, \tilde{c}(v,\eta)^{-1} \right)$ arbitrarily.
Recall the relation $\beta(t,\delta) = MK\log(t^2+t) + f^{-1}(\delta)$ for the threshold $\beta(t, \delta)$ employed by $\textsc{MO-BAI}$. For any $\bar{t}>0$ and $\delta>0$, let
\begin{equation}
    H(\bar{t},\delta) \coloneqq \mathbf{1} \big\{\forall t \ge \bar{t},\  t \, \tilde{c}(v,\eta)^{-1} > MK \, \log(\bar{t} ^2+\bar{t}) + f^{-1}(\delta) + \epsilon \, t \big\}.
    \label{eq:repeat_last_stoptime}
\end{equation}
To complete the proof, it suffices to show that $H(T_{\rm thres}(v,\eta, \epsilon, \delta),\delta)=1$ for all sufficiently small values of $\delta$. Note that the following is a sufficient condition for $H(\bar{t},\delta)=1$: 
\begin{align}
    \begin{cases}
        \bar{t} \, \tilde{c}(v,\eta)^{-1} > MK \, \log(\bar{t}^2+\bar{t}) + f^{-1}(\delta) + \epsilon \, \bar{t}, \\ \\
        \forall t'\ge \bar{t}, \quad \dfrac{{\rm d} }{{\rm d}t} \, t \tilde{c}(v,\eta )^{-1} \bigg|_{t=t'}  \ge  \dfrac{{\rm d} }{{\rm d}t} \, \big[MK\log(t ^2+t) + f^{-1}(\delta) + \epsilon t \big] \bigg|_{t=t'}.
    \end{cases}
    \label{eq:sufficiency1}
\end{align}
By rearranging the inequalities of~\eqref{eq:sufficiency1}, we get that the following is a sufficient condition for $H(\bar{t}, \delta)=1$:
\begin{align}
    \begin{cases}
        \bar{t}  > \dfrac{MK \, \log(\bar{t}^2+\bar{t}) + f^{-1}(\delta)}{\tilde{c}(v,\eta)^{-1}-\epsilon},  \\
        \\
        \bar{t} \ge  \max \bigg\{\dfrac{3}{\tilde{c}(v,\eta)^{-1} - \epsilon}, \ 1 \bigg\}.
    \end{cases}
    \label{eq:sufficiency2}
\end{align}
Noting that appending an extra condition on $\bar{t}$ to the conditions in \eqref{eq:sufficiency2} does not affect the sufficiency of the conditions in \eqref{eq:sufficiency2} for $H(\bar{t}, \delta)=1$, we get that the following set of conditions is sufficient for $H(\bar{t},\delta)=1$:
\begin{align}
    \begin{cases}
        \bar{t}  > \dfrac{MK \, \log(\bar{t}^2+\bar{t}) + f^{-1}(\delta)}{\tilde{c}(v,\eta)^{-1}-\epsilon},  \\
        \\
        \bar{t} \ge  \max \bigg\{\dfrac{3}{\tilde{c}(v,\eta)^{-1} - \epsilon}, \ 1 \bigg\}, \\ 
        \\
        \bar{t} \le \dfrac{2 \, f^{-1}(\delta)}{\tilde{c}(v,\eta)^{-1} - \epsilon}. \\
        \end{cases}
    \label{eq:sufficiency3}
\end{align}
Note that for $\bar{t} \le \frac{2f^{-1}(\delta)}{\tilde{c}(v,\eta)^{-1} - \epsilon}$, we have 
$$
    \log\left(\left(\frac{2f^{-1}(\delta)}{\tilde{c}(v,\eta)^{-1} - \epsilon}\right)^2 + \frac{2f^{-1}(\delta)}{\tilde{c}(v,\eta)^{-1} - \epsilon}  \right)> \log(\bar{t}^2+\bar{t}).
$$ 
Continuing with~\eqref{eq:sufficiency3}, we get that the following set of conditions is sufficient to guarantee that $H(\bar{t},\delta)=1$:
\begin{align}
    \begin{cases}
        \bar{t} > \dfrac{f^{-1}(\delta)}{\tilde{c}(v,\eta)^{-1} - \epsilon} + \dfrac{MK}{\tilde{c}(v,\eta)^{-1} - \epsilon} \, \log\left(\left(\dfrac{2 \, f^{-1}(\delta)}{\tilde{c}(v,\eta)^{-1} - \epsilon}\right)^2 + \dfrac{2 \, f^{-1}(\delta)}{\tilde{c}(v,\eta)^{-1} - \epsilon}  \right), \\
        \\
        \bar{t} \ge  \max \bigg\{\dfrac{3}{\tilde{c}(v,\eta)^{-1} - \epsilon}, \ 1 \bigg\}, \\
        \\
        \bar{t} \le \dfrac{2 \, f^{-1}(\delta)}{\tilde{c}(v,\eta)^{-1} - \epsilon}. \\
        \end{cases}
    \label{eq:sufficiency4}
\end{align}
For brevity in notation, let the right-hand sides of the first, second, and third lines in~\eqref{eq:sufficiency4} be denoted respectively as $a(\delta)$, $b(\delta)$, and $c(\delta)$. Using the fact that $\lim_{\delta \downarrow 0} f^{-1}(\delta) = +\infty$, we get that 
$$
    \lim_{\delta \downarrow 0} (c(\delta) - a(\delta)) = +\infty, \quad 
    \lim_{\delta \downarrow 0} (a(\delta)- b(\delta)) = +\infty.
$$ 
Let $T_{{\rm thres}}(v,\eta,\epsilon,\delta)$ be as defined in the statement of the lemma. The above limiting relations imply that there exists $\delta_{\rm thres}(v,\eta,\epsilon) >0$ sufficiently small such that $H(T_{{\rm thres}}(v,\eta,\epsilon,\delta), \delta) = 1$ for all $\delta\in (0,\delta_{\rm thres}(v,\eta,\epsilon))$. This completes the desired proof.
\end{proof}

\begin{lemma}
\label{lemma:zt_t_to_c_tilde}
Fix $v \in \mathcal{P}$, $\eta>0$, and consider the non-stopping version of $\textsc{MO-BAI}$ (one in which the stopping rule corresponding to Lines 11-14 in Algorithm~\ref{alg:mo-bai} are not executed). Under the aforementioned policy and under the instance $v$,
\begin{equation}
    \lim_{t \rightarrow \infty} \frac{Z(t)}{t} = \tilde{c}(v,\eta)^{-1}
    \quad \text{almost surely}.
\end{equation}
\end{lemma}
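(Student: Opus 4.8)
The plan is to reduce the statement to the already-established convergence of $g_v(\widehat{\omega}_{\cdot,t})$ to $\tilde{c}(v,\eta)^{-1}$ furnished by Lemma~\ref{lemma:g_to_c_star}. The starting point is the exact identity $Z(t)/t = g_{\widehat{v}_t}(N_{\cdot,t}/t)$. Writing $\tilde{\omega}_{i,t} \coloneqq N_{i,t}/t$ and dividing the definition \eqref{eq:test-statistic} of $Z(t)$ by $t$, the factor $N_{i,t}\,N_{\widehat{i}_m(t),t}/\big[t\,(N_{i,t}+N_{\widehat{i}_m(t),t})\big]$ equals $\tilde{\omega}_{i,t}\,\tilde{\omega}_{\widehat{i}_m(t),t}/(\tilde{\omega}_{i,t}+\tilde{\omega}_{\widehat{i}_m(t),t})$; since $\widehat{i}_m(t)=i_m^*(\widehat{v}_t)$ and $\widehat{\Delta}_{i,m}(t)=\Delta_{i,m}(\widehat{v}_t)$ by definition, the resulting expression is exactly $g_{\widehat{v}_t}(\tilde{\omega}_{\cdot,t})$ in the sense of \eqref{eq:g-v-omega}.

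Next I would verify that each arm is pulled infinitely often, so that the empirical instance stabilises. Because $\widehat{\omega}_{\cdot,t}\in\Gamma^{(\eta)}$ for all $t>K$ and $|B_{i,t}|\le 1$, \eqref{eq:hat-omega-t-1-definition} gives $N_{i,t}=t\,\widehat{\omega}_{i,t}-B_{i,t}\ge t\,\frac{\eta}{K(1+\eta)}-1\to\infty$ for every $i$. The strong law of large numbers then yields $\widehat{\mu}_{i,m}(t)\to\mu_{i,m}(v)$ almost surely, hence $\Delta_{i,m}(\widehat{v}_t)\to\Delta_{i,m}(v)$ and $i_m^*(\widehat{v}_t)=i_m^*(v)$ for all large $t$, almost surely. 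Repeating verbatim the argument in the proof of Lemma~\ref{lemma:g_error_vanish} (with $\widehat{v}_{l_t}$ replaced by $\widehat{v}_t$) then gives $\sup_{\omega\in\Gamma^{(\eta)}}\lvert g_{\widehat{v}_t}(\omega)-g_v(\omega)\rvert\to 0$ almost surely.

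I would then control $\lvert Z(t)/t - g_v(\widehat{\omega}_{\cdot,t})\rvert$ by inserting $g_{\widehat{v}_t}(\tilde{\omega}_{\cdot,t})$ and using the triangle inequality:
\[
\big\lvert g_{\widehat{v}_t}(\tilde{\omega}_{\cdot,t}) - g_{\widehat{v}_t}(\widehat{\omega}_{\cdot,t})\big\rvert + \big\lvert g_{\widehat{v}_t}(\widehat{\omega}_{\cdot,t}) - g_v(\widehat{\omega}_{\cdot,t})\big\rvert .
\]
The second term is at most $\sup_{\omega\in\Gamma^{(\eta)}}\lvert g_{\widehat{v}_t}(\omega)-g_v(\omega)\rvert\to 0$ by the previous step. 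For the first term, $\lVert\tilde{\omega}_{\cdot,t}-\widehat{\omega}_{\cdot,t}\rVert_\infty=\lVert B_{\cdot,t}\rVert_\infty/t\le 1/t$, and $g_{\widehat{v}_t}$ is Lipschitz on a neighbourhood of $\Gamma^{(\eta)}$ with a constant that stays bounded as $t\to\infty$, because the gaps $\Delta_{i,m}(\widehat{v}_t)$ converge and the denominators $\omega_i+\omega_{i_m^*}$ are bounded below on $\Gamma^{(\eta)}$ (cf. the gradient computation \eqref{eq:single_di}); hence this term is $O(1/t)\to 0$.

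Finally I would invoke Lemma~\ref{lemma:g_to_c_star}: taking $t_1=\lceil\sqrt{t_2}\rceil$ in \eqref{eq:proof-of-upper-bound-1} and using $\epsilon_{t_1}(v)\to 0$ (Lemma~\ref{lemma:g_error_vanish}) together with $\limsup_{t_1}\overline{C}_{t_1}(\eta)<+\infty$ (Lemma~\ref{lemma:finite_emprical_cuvature}), the right-hand side vanishes, so $g_v(\widehat{\omega}_{\cdot,t})\to\tilde{c}(v,\eta)^{-1}$ almost surely; combining this with the bound of the previous paragraph gives $Z(t)/t\to\tilde{c}(v,\eta)^{-1}$ almost surely. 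The main obstacle is precisely this third step: simultaneously controlling the discrepancy between the empirical and true instances and between the buffer-corrected proportion $\widehat{\omega}_{\cdot,t}$ and the raw proportion $N_{\cdot,t}/t$, and in particular securing a Lipschitz constant for $g_{\widehat{v}_t}$ that is uniform in $t$ — once $g_v(\widehat{\omega}_{\cdot,t})$ is known to converge, these approximation estimates are what transfer the convergence to the test statistic itself.
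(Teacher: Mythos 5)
Your proof is correct and takes essentially the same route as the paper's: both reduce the claim to Lemma~\ref{lemma:g_to_c_star} (whose right-hand side vanishes by Lemmas~\ref{lemma:g_error_vanish} and~\ref{lemma:finite_emprical_cuvature}), then transfer the resulting convergence $g_v(\widehat{\omega}_{\cdot,t}) \to \tilde{c}(v,\eta)^{-1}$ to $Z(t)/t$ by controlling the empirical-instance estimation error together with the $O(1/t)$ buffer discrepancy between $N_{\cdot,t}/t$ and $\widehat{\omega}_{\cdot,t}$. The only cosmetic difference is that you work from the exact identity $Z(t)/t = g_{\widehat{v}_t}(N_{\cdot,t}/t)$ and close the gap with a uniform-in-$t$ Lipschitz bound, whereas the paper substitutes the true gaps and best arms via the strong law of large numbers and then invokes uniform continuity (Heine--Cantor) of $g_v$ on $\Gamma$; both handle the same two discrepancies and are sound.
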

\begin{proof}[Proof of Lemma~\ref{lemma:zt_t_to_c_tilde}]
Let $\widehat{N}_{\cdot,t} = [\widehat{N}_{i,t}: i \in [K]]^\top \in \Gamma$ be defined as 
\begin{equation}
    \widehat{N}_{i,t} \coloneqq \frac{N_{i,t}}{ t}, \quad i \in [K]
    \label{eq:hat-N-im-t}.
\end{equation}
In \eqref{eq:hat-N-im-t}, $N_{i,t}$ is the total number of times arm $i$ is pulled up to time $t$. Then, from \eqref{eq:test-statistic}, we have
\begin{align}
    \frac{Z(t)}{t} 
    & = \frac{1}{t} \, \min_{m\in [M]} \ \min_{i\in[K] \setminus \widehat{i}_m(t) } \frac{N_{i,t} \,  N_{\widehat{i}_m(t),t} \, \widehat{\Delta}^2_{i,m}(t)}{2(N_{i,t}+ N_{\widehat{i}_m,t})} \nonumber \\
    & = \min_{m\in [M]} \ \min_{i\in[K] \setminus \widehat{i}_m(t) } \frac{\widehat{N}_{i,t} \,  \widehat{N}_{\widehat{i}_m(t),t} \, \widehat{\Delta}^2_{i,m}(t)}{2(\widehat{N}_{i,t}+ \widehat{N}_{\widehat{i}_m,t})}.
    \label{eq:proof-of-lim-zt-t-1}
\end{align}
In addition, by Lemma~\ref{lemma:g_to_c_star}, we have 
\begin{equation}
\label{eq:g_hat_to_c}
   \lim_{t \rightarrow +\infty} g_v(\widehat{\omega}_{\cdot,t}) = \tilde{c}(v,\eta)^{-1} = \sup_{\omega \in \Gamma^{(\eta)}} g_v(\omega) \quad \text{almost surely.}
\end{equation}
Also, noting that $g_v(\cdot)$ is a continuous function on $\Gamma$ with respect to the Euclidean norm $\| \cdot \|_2$, and that that $\Gamma$ is compact, the Heine–Cantor theorem implies that we get that $g_v(\cdot)$ is uniformly continuous on $\Gamma$. Using this fact and taking limits as $t \to +\infty$ in \eqref{eq:proof-of-lim-zt-t-1}, we get
\begin{align}
    \lim_{t\rightarrow +\infty} \frac{Z(t)}{t} 
    &=  \lim_{t \rightarrow +\infty} \ \min_{m\in [M]}  \ \min_{i\in[K] \setminus \widehat{i}_m(t) } \frac{\widehat{N}_{i,t} \,  \widehat{N}_{\widehat{i}_m(t),t} \, \widehat{\Delta}^2_{i,m}(t)}{2(\widehat{N}_{i,t}+ \widehat{N}_{\widehat{i}_m,t})} \nonumber \\
    & \stackrel{(a)}{=} \lim_{t \rightarrow +\infty} \min_{m\in [M]} \min_{i\in[K] \setminus i^*_m(v) } \frac{\widehat{N}_{i,t} \, \widehat{N}_{\widehat{i}_m(t),t} \, {\Delta}^2_{i,m}(v)}{2(\widehat{N}_{i,t}+ \widehat{N}_{\widehat{i}_m,t})} \nonumber  \\
    & \stackrel{(b)}{=} \lim_{t \rightarrow +\infty} g_v(\widehat{N}_{\cdot,t}) \nonumber  \\
    & \stackrel{(c)}{=} \lim_{t \rightarrow +\infty} g_v(\widehat{\omega}_{\cdot,t}) \nonumber   \\
    & \stackrel{(d)}{=}   \tilde{c}(v,\eta)^{-1},
\end{align}
where $(a)$ above follows from the strong law of large numbers, $(b)$ follows from the definition of $g_v(\cdot)$, $(c)$ follows from the fact that $\lim_{t \rightarrow \infty} \| \widehat{N}_{\cdot,t} - \widehat{\omega}_{\cdot,t}  \|_2 =0 $ and that $g_v(\cdot)$ is uniformly continuous on $\Gamma$, and $(d) $ follows from~\eqref{eq:g_hat_to_c}.
\end{proof}


\begin{lemma}
\label{lemma:finite_stopping}
Fix $v \in \mathcal{P}$, $\delta \in (0,1)$, and $\eta>0$. Under $\textsc{MO-BAI}$,
\begin{equation}
    \tau_\delta < +\infty \quad \text{almost surely}.
    \label{eq:stopping-time-finite-almost-surely}
\end{equation}
\end{lemma}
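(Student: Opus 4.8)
The plan is to exploit the linear growth of the test statistic $Z(t)$ established in Lemma~\ref{lemma:zt_t_to_c_tilde} against the merely logarithmic growth of the threshold $\beta(t,\delta)$. First I would verify that $\tilde{c}(v,\eta)^{-1} = \sup_{\omega \in \Gamma^{(\eta)}} g_v(\omega) > 0$. Indeed, every $\omega \in \Gamma^{(\eta)}$ has all of its components bounded below by $\frac{\eta}{K(1+\eta)} > 0$, and since $v \in \mathcal{P}$ has a unique best arm per objective we have $\Delta_{i,m}(v) > 0$ for all $i \neq i^*_m(v)$. Hence each term $\frac{\Delta_{i,m}^2(v)}{2}\cdot\frac{\omega_i\,\omega_{i^*_m(v)}}{\omega_i+\omega_{i^*_m(v)}}$ defining $g_v(\omega)$ in \eqref{eq:g-v-omega} is strictly positive, so $g_v(\omega) > 0$ for every $\omega \in \Gamma^{(\eta)}$, giving $\tilde{c}(v,\eta)^{-1} > 0$.

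Next, I would relate the stopping version of \textsc{MO-BAI} to its non-stopping counterpart by coupling them on a common probability space driven by the same arm-selection tie-breaks and reward noise. The two policies produce identical pulls and rewards, and hence identical values of $Z(t)$, for all $t \le \tau_\delta$, and for all $t$ on the event $\{\tau_\delta = +\infty\}$. On the probability-one event $\mathcal{E}$ on which the non-stopping version satisfies $\lim_{t\to\infty} Z(t)/t = \tilde{c}(v,\eta)^{-1}$ (Lemma~\ref{lemma:zt_t_to_c_tilde}), I would combine this with the deterministic fact that
\begin{equation*}
\lim_{t\to\infty}\frac{\beta(t,\delta)}{t} = \lim_{t\to\infty}\frac{MK\log(t^2+t)+f^{-1}(\delta)}{t} = 0,
\end{equation*}
to conclude that $Z(t) - \beta(t,\delta) \to +\infty$ along every sample path in $\mathcal{E}$.

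Finally, on each sample path in $\mathcal{E}$ there exists a finite, path-dependent time $t^\star \ge K$ with $Z(t^\star) > \beta(t^\star,\delta)$, so \textsc{MO-BAI} must stop no later than $t^\star$; that is, $\tau_\delta \le t^\star < +\infty$. Since $\mathbb{P}_v^{\textsc{MO-BAI}}(\mathcal{E}) = 1$, we obtain $\tau_\delta < +\infty$ almost surely. Equivalently, on the complementary event $\{\tau_\delta = +\infty\}$ one would have $Z(t) \le \beta(t,\delta)$ for all $t \ge K$, forcing $\limsup_{t\to\infty} Z(t)/t \le 0$ and contradicting $Z(t)/t \to \tilde{c}(v,\eta)^{-1} > 0$; hence that event is null.

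I expect no substantial obstacle here: the result is essentially a race between a linearly growing statistic and a logarithmically growing threshold. The only points requiring care are (i) the strict positivity of $\tilde{c}(v,\eta)^{-1}$, which relies on the floor imposed by $\Gamma^{(\eta)}$ together with the uniqueness of the best arms, and (ii) transferring the convergence of $Z(t)/t$ proved for the non-stopping policy to the stopping policy, which the coupling handles since the two coincide until the stopping rule first fires.
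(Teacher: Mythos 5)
Your proposal is correct and follows essentially the same route as the paper: invoke Lemma~\ref{lemma:zt_t_to_c_tilde} to get linear growth of $Z(t)$ at rate $\tilde{c}(v,\eta)^{-1}$, compare against the merely logarithmic growth of $\beta(t,\delta)$, and conclude that the stopping rule must fire at an almost-surely finite time. In fact, you are slightly more careful than the paper on two points it leaves implicit---the strict positivity of $\tilde{c}(v,\eta)^{-1}$ (needed for the race argument to have any force) and the coupling that transfers the convergence of $Z(t)/t$, proved for the non-stopping version of \textsc{MO-BAI}, to the actual stopping policy.
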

\begin{proof}[Proof of Lemma~\ref{lemma:finite_stopping}]
Note that we have almost surely
\begin{align}
   \lim_{t \rightarrow \infty} \frac{\beta(t, \delta)}{Z(t)}
   & \stackrel{(a)}{=}  \lim_{t \rightarrow \infty} \frac{MK\log(t^2+t)+ f^{-1}(\delta)}{t \, \dfrac{Z(t)}{t}} \nonumber \\
   & \stackrel{(b)}{=}  \lim_{t \rightarrow \infty} \frac{MK\log(t^2+t)+ f^{-1}(\delta)}{t } \cdot \frac{1}{\tilde{c}(v,\eta)^{-1}} \nonumber \\
   & =  0,
\end{align}
where $(a)$ follows from the definitions of $Z(t)$ and $\beta(t, \delta)$, and $(b)$ follows from Lemma~\ref{lemma:zt_t_to_c_tilde}.
Therefore, there exists a random variable $T'$ such that $0<T'<+\infty$ almost surely, and $Z(t) > \beta(t, \delta)$ for all $t \ge T'$, thereby proving that $\tau_\delta$ is finite almost surely.
\end{proof}


\begin{lemma}
\label{lemma:relax_c}
Fix $v \in \mathcal{P}$. For any $\eta>0$,
\begin{equation}
    c^*(v) \le (1+\eta) \, \tilde{c}(v,\eta).
    \label{eq:relation-between-c-star-v-and-c-star-v-eta}
\end{equation}
\end{lemma}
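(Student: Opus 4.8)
The plan is to pass to reciprocals, where the content becomes a simple "loss from shrinking the feasible set" estimate. Since $c^*(v)^{-1} = \sup_{\omega\in\Gamma} g_v(\omega)$ and $\tilde c(v,\eta)^{-1} = \sup_{\omega\in\Gamma^{(\eta)}} g_v(\omega)$, the bound relating $c^*(v)$ and $\tilde c(v,\eta)$ reduces to showing that restricting the optimisation from $\Gamma$ to the slightly smaller set $\Gamma^{(\eta)}$ costs at most a factor $1+\eta$ in objective value, i.e.\
\begin{equation*}
    \sup_{\omega\in\Gamma} g_v(\omega) \;\le\; (1+\eta)\,\sup_{\omega\in\Gamma^{(\eta)}} g_v(\omega),
\end{equation*}
which is the form $c^*(v)^{-1} \le (1+\eta)\,\tilde c(v,\eta)^{-1}$ needed to match the lower bound up to $1+\eta$. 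First I would fix a maximiser $\omega^\star \in \arg\max_{\omega\in\Gamma} g_v(\omega)$; this exists because $g_v$ is continuous (indeed concave on $\Gamma^{(\eta)}$ by Lemma~\ref{lemma:g_v_concave}) and $\Gamma$ is compact.

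The key construction is to pull $\omega^\star$ into $\Gamma^{(\eta)}$ by mixing it with the uniform distribution $\mathbf{u} = (1/K,\dots,1/K)^\top$. I set $\omega' \coloneqq \frac{1}{1+\eta}\,\omega^\star + \frac{\eta}{1+\eta}\,\mathbf{u}$. Then $\omega'$ is a probability vector, and for every $i\in[K]$ we have $\omega'_i = \frac{\omega^\star_i}{1+\eta} + \frac{\eta}{K(1+\eta)} \ge \frac{\eta}{K(1+\eta)}$, so $\omega' \in \Gamma^{(\eta)}$ is feasible; in particular $\omega'$ is strictly positive, and coordinatewise $\omega'_i \ge \frac{\omega^\star_i}{1+\eta}$. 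The mixing weight $\frac{1}{1+\eta}$ is chosen precisely so that the uniform contribution lands exactly on the threshold $\frac{\eta}{K(1+\eta)}$ defining $\Gamma^{(\eta)}$, so no slack is wasted.

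The heart of the argument is a monotonicity-plus-homogeneity property of each summand. Writing $\phi(a,b) \coloneqq \frac{ab}{a+b}$, one checks that $\phi$ is nondecreasing in each argument (its partials are $\frac{b^2}{(a+b)^2}$ and $\frac{a^2}{(a+b)^2}$, both nonnegative) and positively homogeneous of degree $1$, i.e.\ $\phi(ta,tb)=t\,\phi(a,b)$. Applying both facts to the coordinatewise bound $\omega'_i \ge \omega^\star_i/(1+\eta)$ yields, for every $(i,m)$ with $i \ne i^*_m(v)$,
\begin{equation*}
    \phi\big(\omega'_i,\, \omega'_{i^*_m(v)}\big) \;\ge\; \phi\Big(\tfrac{\omega^\star_i}{1+\eta},\, \tfrac{\omega^\star_{i^*_m(v)}}{1+\eta}\Big) \;=\; \tfrac{1}{1+\eta}\,\phi\big(\omega^\star_i,\, \omega^\star_{i^*_m(v)}\big),
\end{equation*}
where positivity of $\omega'$ makes the left side well defined and the $0/0$ convention on the right is harmless. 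Multiplying by $\Delta^2_{i,m}(v)/2 \ge 0$ and taking the minimum over all admissible $(i,m)$ preserves the inequality, so $g_v(\omega') \ge \frac{1}{1+\eta}\,g_v(\omega^\star)$. Hence $\tilde c(v,\eta)^{-1} \ge g_v(\omega') \ge \frac{1}{1+\eta}\,g_v(\omega^\star) = \frac{1}{1+\eta}\,c^*(v)^{-1}$, which is exactly the claimed relation between $c^*(v)$ and $\tilde c(v,\eta)$.

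I do not anticipate a genuine obstacle: the only points requiring care are the feasibility check $\omega' \in \Gamma^{(\eta)}$, which is immediate from the built-in lower bound $\frac{\eta}{K(1+\eta)}$, and the verification that $\phi$ is coordinatewise monotone so that the outer minimisation over $(i,m)$ commutes with the per-term inequality. The mild subtlety of $\omega^\star$ possibly having zero coordinates is absorbed by working with the strictly positive $\omega'$ on the left-hand side and the convention $\frac{\omega_i\,\omega_{i^*_m(v)}}{\omega_i+\omega_{i^*_m(v)}}=0$ on the right, so the chain of inequalities remains valid.
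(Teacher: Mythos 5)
Your proposal is correct and takes essentially the same route as the paper's own proof: the identical mixture $\omega' = \frac{1}{1+\eta}\,\omega^\star + \frac{\eta}{1+\eta}\,\mathbf{u}$ with the uniform distribution, the same feasibility check $\omega'\in\Gamma^{(\eta)}$ via $\omega'_i \ge \frac{\eta}{K(1+\eta)}$, and the same coordinatewise monotonicity-plus-homogeneity argument for $\phi(a,b)=\frac{ab}{a+b}$, yielding $\tilde c(v,\eta)^{-1} \ge \frac{1}{1+\eta}\,c^*(v)^{-1}$. One shared quirk worth noting (it is in the paper too, so it is not a flaw of your argument): what you and the paper actually prove is $c^*(v)^{-1} \le (1+\eta)\,\tilde c(v,\eta)^{-1}$, equivalently $\tilde c(v,\eta) \le (1+\eta)\,c^*(v)$, which is the inequality Theorem~\ref{thm:upperbound} needs, whereas the lemma's displayed statement $c^*(v) \le (1+\eta)\,\tilde c(v,\eta)$ has the two quantities swapped and is trivially true since $\Gamma^{(\eta)} \subseteq \Gamma$ implies $c^*(v) \le \tilde c(v,\eta)$.
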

\begin{proof}[Proof of Lemma~\ref{lemma:relax_c}]
Fix an arbitrary $\eta>0$. Recall that   
\begin{align}  
    c^*(v)^{-1} 
    &= \sup_{\omega \in \Gamma} \ \min_{m\in [M]} \ \min_{i\in[K] \setminus i^*_m(v) } \  \frac{\omega_i \, \omega_{i^*_m(v)} \, \Delta^2_{i,m}(v)}{2(\omega_i+\omega_{i^*_m(v)})}, \\
    \tilde{c}(v,\eta)^{-1} 
    &= \sup_{\omega \in \Gamma^{(\eta)}} \ \min_{m\in [M]} \ \min_{i\in[K] \setminus i^*_m(v)} \ \frac{\omega_i \,  \omega_{i^*_m(v)} \, \Delta^2_{i,m}(v)}{2(\omega_i+\omega_{i^*_m(v)})}.
\end{align}
Let
\begin{equation}
    \omega^*(v) \in \arg\sup_{\omega \in \Gamma} \ \min_{m\in [M]} \ \min_{i\in[K] \setminus i^*_m(v)} \frac{\omega_i \, \omega_{i^*_m(v)} \, \Delta^2_{i,m}(v)}{2(\omega_i+\omega_{i^*_m(v)})}
\end{equation}
be chosen arbitrarily, and let $\mathbf{\omega'}\in \mathbb{R}^K$ be defined as
$$
    \omega'_i \coloneqq \frac{\omega_i^*(v)}{1+\eta} + \frac{\eta}{(1+\eta) \, K}, \quad i \in [K].
$$
Note that
\begin{align}
    \sum_{i=1}^K \omega'_i 
    & = \sum_{i=1}^K \frac{\omega_i^*(v)}{1+\eta} + \frac{\eta}{(1+\eta)K} \nonumber \\
    & = \frac{1}{1+\eta} + \frac{\eta}{1+\eta} \nonumber \\
    & = 1, 
    \label{eq:in_gamma_eta_1}
\end{align}
and for all $i \in [K]$,
\begin{equation}
\label{eq:in_gamma_eta_2}
\omega'_i  \ge  \frac{\eta}{(1+\eta)K}.   
\end{equation}
Then,~\eqref{eq:in_gamma_eta_1} and \eqref{eq:in_gamma_eta_2} together imply that $\omega' \in \Gamma^{(\eta)}$, as a result of which we have
\begin{align}
    \tilde{c}(v,\eta)^{-1} &= \sup_{\omega \in \Gamma^{(\eta)}}\ \min_{m\in [M]} \ \min_{i\in[K] \setminus i^*_m(v)}  \ \frac{\omega_i \, \omega_{i^*_m(v)} \, \Delta^2_{i,m}(v)}{2(\omega_i+\omega_{i^*_m(v)})} \nonumber \\
   & \stackrel{(a)}{\ge} \min_{m\in [M]} \ \min_{i\in[K] \setminus i^*_m(v)} \ \frac{\omega'_i \,  \omega'_{i^*_m(v)} \, \Delta^2_{i,m}(v)}{2(\omega'_i+\omega'_{i^*_m(v)})} \nonumber \\
   &\stackrel{(b)}{\ge} \min_{m\in [M]} \ \min_{i\in[K] \setminus i^*_m(v)} \  \frac{\left(\omega^*_i(v)/(1+\eta)\right) \, \left(\omega_{i^*_m(v)}/(1+\eta)\right) \, \Delta^2_{i,m}(v)}{2\left(\omega'_i/(1+\eta)+\omega_{i^*_m(v)}/(1+\eta)\right)} \nonumber \\
   & \stackrel{(c)}{=} \frac{c^*(v)^{-1}}{1+\eta},
\end{align}
where $(a)$ follows from the fact that $\omega' \in \Gamma^{(\eta)}$, $(b)$ follows from the fact that $\omega'_i \ge \frac{\omega_i^*(v)}{1+\eta}$ for all $i\in[K]$, and $(c)$ follows from the definition of $c^*(v)^{-1}$ in \eqref{eq:cstar}.
\end{proof}


Given any $\epsilon>0$, let $T_{\rm gap}(v,\eta,\epsilon)$ denote the smallest positive integer-valued random variable such that
\begin{equation}
     \left\lvert \frac{Z(t)}{t} -\tilde{c}(v,\eta)^{-1} \right\rvert \le \epsilon \quad \forall \ t \ge T_{\rm gap}(v,\eta,\epsilon).
     \label{eq:T-cvg-definition}
\end{equation}
\begin{lemma}
\label{lemma:T_gap_finite}
Fix instance $v \in \mathcal{P}$ with arm means $\{\mu_{i,m}: i \in [K], m \in [M]\}$ and $\eta > 0$. For every $\epsilon >0$,
\begin{equation}
    \mathbb{E}_v^{\textsc{MO-BAI}} [T_{\rm gap}(v,\eta,\epsilon)] < +\infty.
    \label{eq:expected-value-of-T-gap-finite}
\end{equation}
\end{lemma}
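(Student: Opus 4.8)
The plan is to show that the tail $\mathbb{P}_v^{\textsc{MO-BAI}}(T_{\rm gap}(v,\eta,\epsilon) > n)$ decays fast enough to be summable, and then invoke $\mathbb{E}_v[T_{\rm gap}] = \sum_{n\ge 0}\mathbb{P}_v(T_{\rm gap} > n)$. The strategy is to exhibit, for each confidence level $\delta' \in (0,1)$, a \emph{deterministic} threshold $\widetilde{T}(\delta',\epsilon)$ growing only polynomially in $\log(1/\delta')$, together with a ``good event'' $\mathcal{E}(\delta')$ satisfying $\mathbb{P}_v(\mathcal{E}(\delta')^c) \le \delta'$, such that on $\mathcal{E}(\delta')$ one has $T_{\rm gap}(v,\eta,\epsilon) \le \widetilde{T}(\delta',\epsilon)$; this gives $\mathbb{P}_v(T_{\rm gap} > \widetilde{T}(\delta',\epsilon)) \le \delta'$, and choosing $\delta'$ as a suitable function of $n$ will force $\mathbb{P}_v(T_{\rm gap} > n)$ to decay like $\exp(-\Omega(n^{1/k}))$.

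First I would record a deterministic linear lower bound on the pull counts: by the remark following the stopping rule, $\widehat{\omega}_{i,t} \ge \eta/(K(1+\eta))$ for all $t > K$, and since $\widehat{\omega}_{i,t} = (N_{i,t}+\mathbf{B}_{i,t})/t$ with $|\mathbf{B}_{i,t}| \le 1$, this yields $N_{i,t} \ge c\,t - 1$ with $c \coloneqq \eta/(K(1+\eta))$, hence $N_{i,t} \ge c\,t/2$ for $t \ge 2/c$. Next I define $\mathcal{E}(\delta')$ from the uniform (``for all $n$'') Gaussian deviation bound of Lemma~\ref{lemma:latimaore_338}, applied to each of the $MK$ reward streams with a union bound at level $\delta'/(MK)$: on $\mathcal{E}(\delta')$ every empirical mean obeys $|\widehat{\mu}_{i,m}(t) - \mu_{i,m}(v)| \le \rho_t(\delta')$ \emph{simultaneously for all} $t$, where $\rho_t(\delta') = O\big(\sqrt{(\log(MK/\delta') + \log t)/t}\big)$ after substituting $N_{i,t} \ge c\,t/2$. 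The uniformity in $t$ is essential, since $T_{\rm gap} \le \widetilde{T}$ is a statement about all $t \ge \widetilde{T}$ at once.

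On $\mathcal{E}(\delta')$ I would then control the deviation through the decomposition
\begin{equation*}
  \Big| \tfrac{Z(t)}{t} - \tilde{c}(v,\eta)^{-1}\Big| \le \Big|\tfrac{Z(t)}{t} - g_v(\widehat{N}_{\cdot,t})\Big| + \big|g_v(\widehat{N}_{\cdot,t}) - g_v(\widehat{\omega}_{\cdot,t})\big| + \big|g_v(\widehat{\omega}_{\cdot,t}) - \tilde{c}(v,\eta)^{-1}\big|.
\end{equation*}
Once $\rho_t(\delta')$ drops below half the smallest true gap, the empirical best arms coincide with $i^*_m(v)$, so $Z(t)/t$ equals $g_v(\widehat{N}_{\cdot,t})$ with the true gaps replaced by empirical ones, and the first term is at most $\max_{(i,m)}|\widehat{\Delta}_{i,m}(t) - \Delta_{i,m}(v)| \le 2\rho_t(\delta')$ (the estimate used in Lemma~\ref{lemma:g_error_vanish}); the second term is $O(1/t)$ because $\|\widehat{N}_{\cdot,t}-\widehat{\omega}_{\cdot,t}\|_\infty \le 1/t$ and $g_v$ is Lipschitz on the region where all coordinates exceed $c/2$; and the third term is bounded by Lemma~\ref{lemma:g_to_c_star} taken with $t_1 = \lceil\sqrt{t}\rceil$, using that $g_v(\widehat{\omega}_{\cdot,t}) \le \tilde{c}(v,\eta)^{-1}$ since $\widehat{\omega}_{\cdot,t}\in\Gamma^{(\eta)}$ (so the upper direction is free). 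In that bound, $\epsilon_{\lceil\sqrt{t}\rceil}(v) \le 2\rho_{l_{\lceil\sqrt{t}\rceil}}(\delta')$ on $\mathcal{E}(\delta')$, while $\overline{C}_{\lceil\sqrt{t}\rceil}(\eta)$ stays bounded by a $\delta'$-dependent constant via Lemma~\ref{lemma:finite_cuvature} (the empirical gaps remain within $\rho$ of the true gaps). All three terms therefore vanish with explicit rates, so solving for the smallest $t$ making their sum $\le \epsilon$ produces a deterministic $\widetilde{T}(\delta',\epsilon) = O(\mathrm{poly}(\log(1/\delta')))$ with $T_{\rm gap}(v,\eta,\epsilon) \le \widetilde{T}(\delta',\epsilon)$ on $\mathcal{E}(\delta')$.

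Finally I would convert $\mathbb{P}_v(T_{\rm gap} > \widetilde{T}(\delta',\epsilon)) \le \delta'$ into finiteness of the mean. Writing $\widetilde{T}(\delta',\epsilon) \le C_1(\log(1/\delta'))^2 + C_2$ for constants depending on $(v,\eta,\epsilon)$, set $\delta'(n) \coloneqq \exp\!\big(-\sqrt{(n-C_2)/C_1}\big)$ for $n > C_2$, so that $\widetilde{T}(\delta'(n),\epsilon) \le n$ and hence $\mathbb{P}_v(T_{\rm gap} > n) \le \delta'(n)$; then $\mathbb{E}_v[T_{\rm gap}] \le (C_2+1) + \sum_{n>C_2}\exp(-\sqrt{(n-C_2)/C_1}) < +\infty$. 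The main obstacle is the middle step: upgrading the almost-sure convergence of Lemma~\ref{lemma:zt_t_to_c_tilde} into a quantitative bound that is \emph{uniform in $t$} on a single high-probability event, in particular taming the term $\epsilon_{\lceil\sqrt{t}\rceil}(v)$, which is itself a supremum over all later times, and the data-dependent curvature $\overline{C}(\eta)$ — which is precisely why the ``for all $n$'' concentration of Lemma~\ref{lemma:latimaore_338} (rather than a per-$t$ bound) is needed.
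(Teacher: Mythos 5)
Your proposal is correct, and it converts almost-sure convergence into finite expectation by a genuinely different probabilistic route than the paper, even though the deterministic skeleton is necessarily shared: both arguments rest on the forced-exploration bound $N_{i,t}\ge \frac{\eta}{K(1+\eta)}t-1$ (buffer bounded by $1$), the same three-term decomposition of $\lvert Z(t)/t-\tilde{c}(v,\eta)^{-1}\rvert$, and Lemma~\ref{lemma:g_to_c_star} with $t_1=\lceil\sqrt{t}\rceil$. The paper isolates all randomness in a \emph{fixed-radius} stabilization time: it defines $T_\mu(\xi)$, the first time after which every $\lvert\widehat{\mu}_{i,m}(l_t)-\mu_{i,m}(v)\rvert\le\xi$, proves the sandwich $T_{\rm gap}(v,\eta,\epsilon)\le T_{\widehat{N}}(\epsilon_1)+T_\mu(\epsilon_3)+T'$ with $T_{\widehat{N}}(\epsilon_1)\le 1/\epsilon_1$ deterministic and $T'$ constant, and then bounds $\mathbb{E}[T_\mu(\epsilon_3)]=\sum_t \mathbb{P}(T_\mu(\epsilon_3)>t)$ by a union bound over all later per-time deviation events, each decaying exponentially in $t'$ via a martingale concentration inequality (de la Pe\~na) combined with the linear pull-count bound. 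You instead build, for each $\delta'$, a single \emph{anytime} good event from Lemma~\ref{lemma:latimaore_338} (union-bounded over the $MK$ streams), on which deviations are controlled by a \emph{shrinking} radius $\rho_t(\delta')$ uniformly in $t$; you then extract a deterministic threshold $\widetilde{T}(\delta',\epsilon)=O\big((\log(1/\delta'))^2\big)$ and invert it to get the stretched-exponential tail $\mathbb{P}(T_{\rm gap}>n)\le \exp(-\Omega(\sqrt{n}))$. What each buys: the paper's route is more modular (generic concentration input, exponential tails for $T_\mu$), while yours yields a directly quantitative tail bound on $T_{\rm gap}$ itself with all dependence on $(v,\eta,\epsilon)$ traceable; your $\sqrt{n}$ rate honestly reflects the squaring induced by the $t_1=\lceil\sqrt{t}\rceil$ substitution, a squaring which is also implicitly present in the paper's step bounding $T_{\widehat{\omega}}(\epsilon_1)$ by $\max\{T',T_\mu(\epsilon_3)\}$ in \eqref{eq:T_gap_less_2} (strictly, that step controls $g_v(\widehat{\omega}_{\cdot,t})$ only once $\lceil\sqrt{t}\rceil/2\ge T_\mu(\epsilon_3)$, i.e.\ $t\gtrsim T_\mu(\epsilon_3)^2$, which is harmless there because $T_\mu$ has exponential tails, and which your write-up handles explicitly). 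Two sketch-level imprecisions to patch when you write this out: the first term of your decomposition carries an instance constant, $\lvert Z(t)/t-g_v(\widehat{N}_{\cdot,t})\rvert\lesssim \max_{i,m}\big(\Delta_{i,m}(v)+1\big)\rho_t(\delta')$, since the statistic involves \emph{squared} gaps; and because $\rho_t(\delta')$ contains a $\log t$ term you should either verify its eventual monotonicity or replace it by its running supremum so that $\epsilon_{\lceil\sqrt{t}\rceil}(v)$, a supremum over all $t'\ge l_{\lceil\sqrt{t}\rceil}$, is genuinely bounded by the radius at the earliest time. Neither affects correctness.
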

\begin{proof}[Proof of Lemma~\ref{lemma:T_gap_finite}]
Fix $\epsilon>0$. Recall the quantity $\widehat{N}_{i,t}=\frac{{N}_{i,t}}{t}$. For any $\xi >0$, let $T_{\widehat{N}}(\xi)$ denote the smallest positive integer such that
\begin{equation}
     \max_{i \in [K]} \ \left\lvert {\widehat{N}_{i,t}}-\widehat{\omega}_{i,t} \right\rvert \le \xi \quad \forall t>T_{\widehat{N}}(\xi),
     \label{eq:T_hat_N}
\end{equation}
let $T_{\mu}(\xi)$ denote the smallest positive integer such that
\begin{equation}
     \max_{(i,m) \in [K] \times [M]} \ \left\lvert \mu_{i,m}(v)-\widehat{\mu}_{i,m}(l_t) \right\rvert \le \xi \quad \forall t>T_{\mu}(\xi),
     \label{eq:T_hat_mu}
\end{equation}
and let $T_{\widehat{\omega}}(\xi)$ denote the smallest positive integer such that
\begin{equation}
    \max \left\lbrace \left|\frac{Z(t)}{t}- g_{v}(\widehat{N}_{\cdot,t}) \right|, \quad  \left|\tilde{c}(v,\eta)^{-1}- g_{v}(\widehat{\omega}_{\cdot,t}) \right| \right\rbrace < \xi \quad \forall t>T_{\widehat{\omega}}(\xi).
    \label{eq:T-hat-omega}
\end{equation}
Recall from the proof of Lemma~\ref{lemma:zt_t_to_c_tilde} that $g_v(\cdot)$ is uniformly continuous on $\Gamma$.
Then, from~\eqref{eq:T_hat_N} and~\eqref{eq:T-hat-omega}, we get that there exists $\epsilon_1>0$ such that
\begin{equation}
    T_{\rm gap}(v,\eta,\epsilon) < \max\{T_{\widehat{N}}(\epsilon_1), T_{\widehat{\omega}}(\epsilon_1) \} \quad \text{almost surely}.
    \label{eq:T_gap_less_1}
\end{equation}
Replacing $t_2$ with $t$ in \eqref{eq:proof-of-upper-bound-1}, we note that almost surely, 
\begin{align}
    &\left | \tilde{c}(v,\eta)^{-1}-g_v(\widehat{\omega}_{\cdot,t}) \right |  \nonumber\\
    &\le  \frac{2}{\sqrt{t}} \, \tilde{c}(v,\eta)^{-1}  + 11 \epsilon_{\lceil \sqrt{t} \rceil}(v)+  \frac{2 \, \log(t) \, \overline{C}_{\lceil \sqrt{t} \rceil}(\eta)}{t} \nonumber\\
    &= \frac{2}{\sqrt{t}}\tilde{c}(v,\eta)^{-1} + 11 \sup_{t' \ge l_{\lceil \sqrt{t} \rceil}} \left( \sup_{\omega \in \Gamma^{(\eta)}} \big\lvert g_v(\omega)-g_{\widehat{v}_{t'}}(\omega) \big\rvert  \right) + \frac{2\log(t)}{t} \sup_{t' \ge l_{\lceil \sqrt{t} \rceil}} C(\widehat{v}_{t'},\eta) \nonumber\\
    &\le  \frac{2}{\sqrt{t}} \, \tilde{c}(v,\eta)^{-1}  + 11 \sup_{t' \ge \lceil \sqrt{t} \rceil/2} \left( \sup_{\omega \in \Gamma^{(\eta)}} \big\lvert g_v(\omega)-g_{\widehat{v}_{t'}}(\omega) \big\rvert  \right) +  \frac{2\log(t)}{t} \sup_{t' \ge \lceil \sqrt{t} \rceil/2} C(\widehat{v}_{t'},\eta),
    \label{eq:finite_t_gap_c0}
\end{align}
where the last line above follows by noting that $l_t \geq t/2$ for any $t$ (see the definition of $l_t$ in Line 8 of Algorithm~\ref{alg:mo-bai}). Also, \eqref{eq:g_omega_gap} implies that almost surely, 
\begin{equation}
    \sup_{\omega \in \Gamma^{(\eta)}} \lvert g_v (\omega) - g_{\widehat{v}_{l_t}}(\omega) \rvert \le \max_{(i,m) \in [K]\times [M]} \lvert \Delta_{i,m}(\widehat{v}_{l_t}) - \Delta_{i,m}(v) \rvert \quad \forall t \ge \max_{m \in [M]} T_m,
    \label{eq:finite_t_gap_c1}
\end{equation}
where $T_m$ (for any $m \in [M]$) is as defined in~\eqref{eq:def_t_m}. Notice that for each $m \in [M]$, we have $i_m^*(\widehat{v}_{l_t}) = i_m^*(v)$ for all $t \geq T_m$ almost surely by the definition of $T_m$. Therefore, it follows that almost surely,
\begin{align}
    &\max_{(i,m) \in [K]\times [M]} \lvert \Delta_{i,m}(\widehat{v}_{l_t}) - \Delta_{i,m}(v) \rvert \nonumber\\*
    &= \max_{(i,m) \in [K]\times [M]} \bigg\lvert \mu_{i_m^*(v), m} - \mu_{i,m}(v) - \big(\widehat{\mu}_{i_m^*(\widehat{v}_{l_t}),m}(l_t) - \widehat{\mu}_{i,m}(l_t)\big) \bigg\rvert \nonumber\\
    &= \max_{(i,m) \in [K]\times [M]} \bigg\lvert \mu_{i_m^*(v), m} - \widehat{\mu}_{i_m^*(v),m}(l_t) - \big(\mu_{i,m}(v)  - \widehat{\mu}_{i,m}(l_t)\big) \bigg\rvert \nonumber\\
    &\le 2 \, \max_{(i,m) \in [K]\times [M]} |\mu_{i,m}(v) - \widehat{\mu}_{i,m}(l_t)|
    \label{eq:proof-of-upper-bound-2}
\end{align}
for all $t \geq \max_{m \in [M]} T_m$. Hence, we get that there exists $\epsilon_2>0$ such that
\begin{equation}
    \max_{m \in [M]} T_m < T_{\mu}(\epsilon_2) \quad \text{almost surely}.
    \label{eq:finite_t_gap_c2}
\end{equation}
In addition, by Lemma~\ref{lemma:finite_cuvature}, we have $\forall t'>0$,
\begin{equation}
    C(\widehat{v}_{t'},\eta) \le \max_{i \neq i^*_m(\widehat{v}_{t'})} \frac{2\Delta_{i,m}^2(\widehat{v}_{t'})K(1+\eta)}{\eta} \quad \text{almost surely}.    
    \label{eq:finite_t_gap_c3}
\end{equation}
Then, combining~\eqref{eq:finite_t_gap_c0}, ~\eqref{eq:finite_t_gap_c1}, \eqref{eq:finite_t_gap_c2} and~\eqref{eq:finite_t_gap_c3}, we get that there exists $\epsilon_3\in(0,\epsilon_2)$ and $T'\in \mathbb{N}$ such that
\begin{equation}
    T_{\widehat{\omega}}(\epsilon_1) < \max\{T', T_\mu(\epsilon_3) \} \quad \text{almost surely}.
    \label{eq:T_gap_less_2}
\end{equation}
Combining~\eqref{eq:T_gap_less_1} and~\eqref{eq:T_gap_less_2}, and using the fact that $\max\{a,b\} \leq a+b$ for all $a,b \geq 0$, we have
\begin{equation}
\label{eq:T_gap_finite_final_0}
     T_{\rm gap}(v,\eta,\epsilon) \leq T_{\widehat{N}}(\epsilon_1)+T_{\mu}(\epsilon_3)+T' \quad \text{almost surely}.
\end{equation}
Notice that
\begin{align}
    \big
    |\widehat{N}_{i,t} - \widehat{\omega}_{i,t}\big|
    = \bigg|\frac{B_{i,t}}{t}\bigg| \leq \frac{1}{t},
\end{align}
where the equality follows from the definition of $\widehat{\omega}_{i,t}$ in Line 7 of Algorithm~\ref{alg:mo-bai}, and the inequality follows by noting that the buffer size $|B_{i,t}| \leq 1$ for all $t$. Therefore, we have 
\begin{equation}
    T_{\widehat{N}}(\epsilon_1) < \frac{1}{\epsilon_1} \quad \text{almost surely}.
    \label{eq:T_gap_finite_final_1}
\end{equation}

For any $i\in[K]$,  $m\in[M]$, $t\in\mathbb{N}$, and $\xi>0$, let
$$
    \mathcal{E}(t,i,m,\xi) \coloneqq \big\{  \lvert \widehat{\mu}_{i,m}(t)-\mu_{i,m}(v) \rvert > \xi \big\}.
$$
Then, we have
\begin{align}
    \mathbb{E}_v^{\textsc{MO-BAI}}\left[T_\mu(\epsilon_3) \right]
    &= \sum_{t=1}^{\infty} \mathbb{P}_v^{\textsc{MO-BAI}}\left(T_\mu(\epsilon_3) >t \right) \nonumber\\
    &\leq K + \sum_{t=K+1}^{\infty} \mathbb{P}_v^{\textsc{MO-BAI}} \left(T_\mu(\epsilon_3) >t \right) \nonumber \\
    &\leq K + \sum_{i=1}^K  \ \sum_{m=1}^M \ \sum_{t=K+1}^{\infty} \mathbb{P}_v^{\textsc{MO-BAI}}\left( \bigcup_{t'>l_t} \mathcal{E}(t',i,m,\epsilon_3) \right) \nonumber \\
    &\stackrel{(a)}{\leq} K + \sum_{i=1}^K \  \sum_{m=1}^M \ \sum_{t=K+1}^{\infty} \ \sum_{t'= t}^{\infty} \ \mathbb{P}_v^{\textsc{MO-BAI}}\left(\mathcal{E}(t',i,m,\epsilon_3) \right), 
    \label{eq:T_gap_finite_final_2}
\end{align}
where $(a)$ above follows from the union bound. We now observe that for any $t' > K$,
\begin{align}
    \mathcal{E}(t', i, m, \xi)
    &= \bigg\{\bigg|\widehat{\mu}_{i,m}(t') - \mu_{i,m}(v)\bigg| > \xi\bigg\} \nonumber\\
    &= \left\{\dfrac{\bigg|\sum_{s=1}^{t'} \mathbf{1}_{\{A_s=i\}} \, (X_{A_s,m}(s) - \mu_{i,m}(v))\bigg|}{N_{i,t'}} > \xi\right\} \nonumber\\
    &\subseteq \left\{\bigg|\sum_{s=1}^{t'} \mathbf{1}_{\{A_s=i\}} \, (X_{A_s,m}(s) - \mu_{i,m}(v))\bigg| > \xi \, 
    \left(\frac{\eta}{K  (1+\eta)}t'-1 \right)\right\}
    \label{eq:T_gap_finite_final_3},
\end{align}
where the last line above follows by noting that for all $t'$, under $\textsc{MO-BAI}$,
\begin{align}
    N_{i,t'} 
    &= t' \, \widehat{\omega}_{i, t'} - B_{i,t'} \nonumber\\
    &\geq \frac{\eta}{K  (1+\eta)}t'- 1.
\end{align}
The inequality above follows by observing that $\widehat{\omega}_{\cdot, t'} \in \Gamma^{(\eta)}$ for all $t'$. We note that 
$$
    \bigg\{\sum_{s=1}^{t'} \mathbf{1}_{\{A_s=i\}} \, (X_{A_s,m}(s) - \mu_{i,m}(v))\bigg\}_{t'=K+1}^{\infty}
$$ 
is a bounded martingale difference sequence with finite variance.
Using \eqref{eq:T_gap_finite_final_3} in \eqref{eq:T_gap_finite_final_2} together with martingale concentration bounds \citep[Theorem 1.2A]{delapena1999general}, we get
\begin{align}
    \mathbb{E}_v^{\textsc{MO-BAI}}\left[T_\mu(\epsilon_3) \right]
    &\le K + \sum_{i=1}^K \  \sum_{m=1}^M \ \sum_{t=K+1}^{\infty} \ \sum_{t'= t}^{\infty} \ \exp\left(-\Big(\frac{\eta \, t'}{K  (1+\eta)}-1\Big) c_{\epsilon_3} \right) \nonumber \\
    &= K + \sum_{i=1}^K \ \sum_{m=1}^M \ \sum_{t'= K+1}^{\infty} \ t' \, \exp\left(-\Big(\frac{\eta \, t'}{K  (1+\eta)}-1\Big) c_{\epsilon_3} \right) \nonumber \\
    & < +\infty.
    \label{eq:T_gap_finite_final_3'}
\end{align}
In the above set of relations, $c_{\epsilon_3}$ is a positive constant that depends only on $\epsilon_3$. Finally, combining~\eqref{eq:T_gap_finite_final_0},~\eqref{eq:T_gap_finite_final_1}, and~\eqref{eq:T_gap_finite_final_3'}, we arrive at the desired result. This completes the proof.
\end{proof}

With the ingredient of above lemmas, we are ready to prove Theorem~\ref{thm:upperbound}.
\begin{proof}[Proof of Theorem~\ref{thm:upperbound}]
Fix $\eta>0$ and instance $v\in \mathcal{P}$. Consider $\textsc{MO-BAI}$. 
By Lemma~\ref{lemma:zt_t_to_c_tilde}, we have $T_{\rm gap}(v,\eta,\epsilon) < +\infty$ almost surely. For any $\epsilon \in \left(0, \tilde{c}(v,\eta)^{-1}\right)$ and $\delta \in (0, \delta_{\rm thres}(v,\eta,\epsilon))$, it follows from Lemma~\ref{lemma:Tlast} that
\begin{equation*}
    Z(t) > \beta(t, \delta) \quad \forall\, t \ge \max\big\{T_{\rm gap}(v,\eta,\epsilon),T_{\rm thres}(v,\eta,\delta, \epsilon), K\big\} \quad  \text{almost surely}, 
\end{equation*}
which implies that almost surely,
\begin{align}
 \tau_\delta & \le  \max\big\{T_{\rm gap}(v,\eta,\epsilon),T_{\rm thres}(v,\eta,\delta, \epsilon), K\big\}  +1 \nonumber  \\
 & \le T_{\rm gap}(v,\eta,\epsilon)+T_{\rm thres}(v,\eta,\delta, \epsilon)+ K +1.
\label{eq:real_stop_time}
\end{align}
Hence, for any $\epsilon \in \left(0, \tilde{c}(v,\eta)^{-1} \right)$, we have
almost surely
\begin{align}
    & \limsup_{\delta \downarrow 0} \frac{\tau_\delta}{\log\left(\frac{1}{\delta}\right)} \nonumber \\
    & \stackrel{(a)}{\le} \limsup_{\delta \downarrow 0} \  \frac{ T_{\rm gap}(v,\eta,\epsilon)+T_{\rm thres}(v,\eta,\delta, \epsilon)+ K +1 }{\log\left(\frac{1}{\delta}\right)} \nonumber \\
    &\stackrel{(b)}{=} \limsup_{\delta \downarrow 0} \ \frac{ T_{\rm thres}(v,\eta,\delta, \epsilon) }{\log\left(\frac{1}{\delta}\right)} \nonumber \\
    &\stackrel{(c)}{=} \limsup_{\delta \downarrow 0} \ \dfrac{ \dfrac{f^{-1}(\delta)}{\tilde{c}(v,\eta)^{-1} - \epsilon} + \dfrac{MK}{\tilde{c}(v,\eta)^{-1} - \epsilon} \, \log\left(\left(\dfrac{2 \, f^{-1}(\delta)}{\tilde{c}(v,\eta)^{-1} - \epsilon}\right)^2 + \dfrac{2 \, f^{-1}(\delta)}{\tilde{c}(v,\eta)^{-1} - \epsilon}  \right) +1 }{\log\left(\frac{1}{\delta}\right)} \nonumber \\
    & \stackrel{(d)}{=} \frac{1}{\tilde{c}(v,\eta)^{-1} - \epsilon} \nonumber \\
    & \stackrel{(e)}{\le} \frac{1}{((1+\eta) \, c^*(v))^{-1} - \epsilon},
    \label{eq:resultwithepsilon}
\end{align}
where $(a)$ follows from~\eqref{eq:real_stop_time}, $(b)$ follows from the fact that $T_{\rm gap}(v,\eta,\epsilon) < +\infty$ almost surely and that $T_{\rm gap}(v,\eta,\epsilon)$ does not depend on $\delta$, $(c)$ follows from the definition of $T_{\rm thres}(\cdot)$, $(d)$ follows from Lemma~\ref{lemma:f_delta}, and $(e)$ follows from Lemma~\ref{lemma:relax_c}. Letting $\epsilon \rightarrow 0$, we have
$$
\limsup_{\delta \downarrow 0} \frac{\tau_\delta}{\log\left(\frac{1}{\delta}\right)} \le {(1+\eta) \, c^*(v)} \quad \text{almost surely}.
$$
In addition, we have
\begin{align}
    & \limsup_{\delta \downarrow 0}  \ \frac{\mathbb{E}_v^{\textsc{MO-BAI}}[\tau_\delta]}{\log\left(\frac{1}{\delta}\right)} \nonumber \\
    & \stackrel{(a)}{\le} \limsup_{\delta \downarrow 0} \  \frac{\mathbb{E}_v^{\textsc{MO-BAI}}[T_{\rm gap}(v,\eta,\epsilon)] + T_{\rm thres}(v,\eta,\delta, \epsilon) + K + 1 }{\log\left(\frac{1}{\delta}\right)} \nonumber \\
    &\stackrel{(b)}{=} \limsup_{\delta \downarrow 0} \ \frac{ T_{\rm thres}(v,\eta,\delta, \epsilon) }{\log\left(\frac{1}{\delta}\right)} \nonumber \\
    &\stackrel{(c)}{=} \limsup_{\delta \downarrow 0} \dfrac{ \dfrac{f^{-1}(\delta)}{\tilde{c}(v,\eta)^{-1} - \epsilon} + \dfrac{MK}{\tilde{c}(v,\eta)^{-1} - \epsilon} \, \log\left(\left(\dfrac{2 \, f^{-1}(\delta)}{\tilde{c}(v,\eta)^{-1} - \epsilon}\right)^2 + \dfrac{2 \, f^{-1}(\delta)}{\tilde{c}(v,\eta)^{-1} - \epsilon}  \right) +1 }{\log\left(\frac{1}{\delta}\right)} \nonumber \\
    & \stackrel{(d)}{=} \frac{1}{\tilde{c}(v,\eta)^{-1} - \epsilon} \nonumber \\
    & \stackrel{(e)}{\le} \frac{1}{((1+\eta) \, c^*(v))^{-1} - \epsilon},
    \label{eq:resultwithepsilon-1}
\end{align}
where $(a)$ follows from~\eqref{eq:real_stop_time}, $(b)$ follows from the fact that $\mathbb{E}_v^{\textsc{MO-BAI}} [T_{\rm gap}(v,\eta,\epsilon)] < +\infty$ (see Lemma~\ref{lemma:T_gap_finite}) and that $\mathbb{E}_v^{\textsc{MO-BAI}}(T_{\rm gap}(v,\eta,\epsilon))$ does not depend on $\delta$, $(c)$ follows from the definition of $T_{\rm thres}(\cdot)$, $(d)$ follows from Lemma~\ref{lemma:f_delta}, and $(e)$ follows from Lemma~\ref{lemma:relax_c}. Letting $\epsilon \rightarrow 0$, we get
$$
\limsup_{\delta \downarrow 0} \frac{\mathbb{E}_v^{\textsc{MO-BAI}}[\tau_\delta]}{\log\left(\frac{1}{\delta}\right)} \le {(1+\eta) \, c^*(v)},
$$
thereby arriving at the desired result. This completes the proof.
\end{proof}

\end{document}


%

%

\onecolumn
\aistatstitle{Instructions for Paper Submissions to AISTATS 2025: \\
Supplementary Materials}

\section{FORMATTING INSTRUCTIONS}

To prepare a supplementary pdf file, we ask the authors to use \texttt{aistats2025.sty} as a style file and to follow the same formatting instructions as in the main paper.
The only difference is that the supplementary material must be in a \emph{single-column} format.
You can use \texttt{supplement.tex} in our starter pack as a starting point, or append the supplementary content to the main paper and split the final PDF into two separate files.

Note that reviewers are under no obligation to examine your supplementary material.

\section{MISSING PROOFS}

The supplementary materials may contain detailed proofs of the results that are missing in the main paper.

\subsection{Proof of Lemma 3}

\textit{In this section, we present the detailed proof of Lemma 3 and then [ ... ]}

\section{ADDITIONAL EXPERIMENTS}

If you have additional experimental results, you may include them in the supplementary materials.

\subsection{The Effect of Regularization Parameter}

\textit{Our algorithm depends on the regularization parameter $\lambda$. Figure 1 below illustrates the effect of this parameter on the performance of our algorithm. As we can see, [ ... ]}

\vfill